\documentclass{amsart}

\usepackage{amsmath,amssymb}
\usepackage{amsthm}
\usepackage{indentfirst}
\usepackage{mathrsfs}
\usepackage{graphicx}
\usepackage{hyperref}
\usepackage{xcolor}
\usepackage{fourier}
\usepackage[margin=1.25in]{geometry}

\numberwithin{equation}{section}

\theoremstyle{plain}
\newtheorem{theorem}{Theorem}[section]
\newtheorem{lemma}[theorem]{Lemma}
\newtheorem{prop}[theorem]{Proposition}

\theoremstyle{definition}
\newtheorem{defn}[theorem]{Definition}

\theoremstyle{remark}
\newtheorem*{remark}{Remark}
\newtheorem*{example}{Example}

\DeclareMathOperator{\KL}{D_{\mathsf{KL}}}
\DeclareMathOperator{\TV}{D_{\mathsf{TV}}}
\DeclareMathOperator{\BL}{D_{\mathsf{BL}}}
\DeclareMathOperator{\FI}{FI}

\DeclareMathOperator{\Law}{Law}
\DeclareMathOperator{\Cov}{Cov}
\DeclareMathOperator{\polylog}{polylog}

\DeclareMathOperator{\argmax}{arg\,max}
\DeclareMathOperator{\Tr}{Tr}

\newcommand{\R}{\mathbb{R}}

\newcommand{\E}{\mathbb{E}}
\newcommand{\Pp}{\mathbb{P}}
\newcommand{\dd}{\,\mathrm{d}}
\newcommand{\Id}{I}
\newcommand{\cF}{\mathcal{F}}

\newcommand{\cX}{\mathcal{X}}
\newcommand{\cA}{\mathcal{A}}

\newcommand{\mc}[1]{\mathcal{#1}}

\newcommand{\wh}[1]{\widehat{#1}}

\newcommand{\eps}{\varepsilon}

\newcommand{\deltaerr}{\varepsilon}
\newcommand{\norm}[1]{\left\lVert #1 \right\rVert}
\newcommand{\abs}[1]{\left\lvert #1 \right\rvert}
\newcommand{\ip}[2]{\left\langle #1,#2 \right\rangle}
\newcommand{\pdata}{p_{\mathsf{data}}}
\newcommand{\sstar}{\mathsf{s}^{\star}}
\newcommand{\score}{\mathsf{s}}
\newcommand{\dstar}{\mathsf{D}^{\star}}
\newcommand{\denoiser}{\mathsf{D}}
\newcommand{\epsinit}{\varepsilon_{\mathsf{init}}}
\newcommand{\epsdisc}{\varepsilon_{\mathsf{disc}}}

\newcommand{\epsscorek}{\varepsilon_{k,\mathsf{score}}}
\newcommand{\epsSE}{\varepsilon_{\mathsf{SE}}}
\newcommand{\normal}[2]{\mathcal{N}\!\left(#1,#2\right)}

\newcommand{\Poi}{\mathsf{Poisson}}

\newcommand{\tO}{\widetilde O}

\newcommand{\AlgFORS}{\textsf{FORS}}

\title{A Mathematical Introduction to Diffusion Models}
\author{Jianfeng Lu}
\address{Mathematics Department, Duke University, Box 90320, Durham, NC 27705 USA.}
\email{jianfeng@math.duke.edu}
\thanks{Lecture notes for the John Tukey Summer Graduate School on Mathematics of Generative Models at SLMath (June 22nd, 2026 -- July 2nd, 2026), co-organized and co-taught with Eric Vanden-Eijnden. We thank SLMath for the hospitality and students in the summer school for helpful feedback. The work is supported in part by the National Science Foundation via awards DMS-2309378 and IIS-2403276.}
\date{July 2nd, 2026}

\begin{document}

\begin{abstract}
These notes give a proof-oriented introduction to diffusion models from the
viewpoint of sampling, tracing a single arc from classical sampling dynamics to
modern diffusion samplers, their error analysis, and inference-time control.
They proceed in five movements.  The first develops the sampling language and
the Langevin toolkit---target distributions and discrepancies, Markov kernels,
Fokker--Planck evolution, and entropy dissipation---and turns it into
convergence guarantees for Langevin diffusion, the unadjusted Langevin
algorithm, and its Metropolis-adjusted correction.  The second builds
continuous-time score-based diffusion models through Gaussian noising, Tweedie's
identity, the reverse-time SDE, and the probability flow ODE, and revisits the
same Gaussian channel through stochastic localization and Polchinski flow.  The
third discretizes these continuous dynamics into implementable DDPM samplers---
exact and Gaussian reverse kernels, the denoising-score equivalence---and
carries out a sampling-error analysis that separates early stopping, KL
telescoping, and score error, treated across Euler--Maruyama, Hessian control,
and high-accuracy first-order rejection sampling.  The fourth develops discrete
diffusion on finite state spaces, where continuous-time Markov chains replace
the noising SDE and the reverse kernel and its error analysis are recast in
finite-state form.  The fifth turns to inference-time steering of a trained
model: guidance, reward tilting, path-space control, and inference-time
reinforcement learning.
Throughout, the material is layered into core definitions and identities proved
in full, representative estimates proved under simplifying assumptions, and
research-level theorems stated with a proof roadmap.  The intended audience is
beginning graduate students with a background in probability but no prior
exposure to stochastic differential equations, stochastic numerics, or diffusion models.
\end{abstract}

\maketitle

\setcounter{tocdepth}{1}
\tableofcontents

\subsection*{How to read these notes}

The intended audience is beginning graduate students who have seen probability,
linear algebra, and multivariable calculus, but may not have prior exposure to
stochastic differential equations or diffusion models.  The notes therefore
separate three levels of material.
\begin{itemize}
    \item \emph{Core definitions and identities} are proved in detail.  These
    run through every section and include the Fokker--Planck equation and
    entropy dissipation, Tweedie's identity and the reverse SDE, the
    probability flow ODE, the exact and Gaussian DDPM reverse kernels, the
    denoising-score equivalence, the finite-state reverse kernel of discrete
    diffusion, and the reward-tilting identity.
    \item \emph{Representative estimates} are proved under simplifying
    assumptions, to show a mechanism in its cleanest form.  For example, we
    prove KL contraction under a log-Sobolev inequality, compute the ULA bias
    exactly for a Gaussian target, and bound the one-step discretization and
    score-error contributions to the diffusion-model KL.
    \item \emph{Research-level theorems} are stated in simplified form with a
    proof roadmap.  These include the ULA and MALA convergence guarantees,
    high-accuracy diffusion sampling via first-order rejection sampling and the
    discrete-diffusion error analysis.  The goal is not to reproduce every
    technical lemma of the original papers, but to make clear what each theorem
    is saying, why the assumptions appear, and how the proof is organized.
\end{itemize}

Two appendices collect the analytic tools used repeatedly: It\^o calculus and
the Girsanov change-of-measure formula in Appendix~\ref{app:ito-girsanov}, and
the Gaussian identities in Appendix~\ref{app:gaussian-toolbox}.  A reader
comfortable with these may skip them and refer back as needed.

These notes are self-contained but deliberately selective, and they overlap
with several excellent treatments that a reader may wish to consult in
parallel.  Yuansi Chen's lecture notes for the ETH course \emph{Computational
and Statistical Aspects of Diffusion Models}~\cite{chen2026lecturenotes} cover
much of the same ground with a stronger emphasis on convergence proofs,
guidance, and discrete diffusion, and are a useful companion to the later
sections here.  For the Langevin and log-concave sampling side,
Chewi's book~\cite{chewi2024logconcave} is the standard reference. For stochastic calculus and numerical analysis of SDEs and CTMCs, we refer to the book~\cite{e2019appliedstochastic} by E, Li, and Vanden-Eijnden.  Throughout,
when a result first appeared in a research paper we cite the original; the
lecture notes above are flagged here once, as a unified entry point to several
of the topics that follow.

These notes do not try to survey the full modern generative-modeling
landscape.  We focus on diffusion and Langevin mechanisms for which reverse-time
SDEs, probability-flow ODEs, score-estimation identities, and inference-time
control are the central actors.  The notes treat learned-score error as an input
to sampler guarantees, but do not discuss statistical learning theory for estimating scores from finite data.  We also do
not give detailed treatments of likelihood-based normalizing flows, adversarial
and variational models, autoregressive architectures, large-scale
latent-diffusion engineering, or several newer continuous-time alternatives.
For entry points to the latter, see flow matching
\cite{lipman2022flowmatching}, rectified flows \cite{liu2022rectifiedflow},
stochastic interpolants \cite{albergo2023stochasticinterpolants},
consistency models \cite{song2023consistency}, and recent one-step
formulations such as mean flow \cite{geng2025meanflows}.  These
methods share much of the same language of probability paths, transport
equations, denoisers, and ODE/SDE samplers.

\subsection*{Minimal probability background}

We use the following conventions repeatedly.  If $X$ is a random variable, then
$\Law(X)$ denotes its distribution.  If $X$ has density $p$, expectations are
written either as $\E[f(X)]$ or $\int f(x)p(x)\dd x$.  A conditional expectation
$\E[Y\mid X=x]$ is best thought of as the best prediction of $Y$ as a function of the observed value $x$.  For
a Markov chain, $P(x,\dd y)$ denotes the distribution of the next state given
the current state $x$.  For an SDE, all formal differentiations can be justified
under smoothness and decay assumptions; the notes use this formal calculus as a
way to keep the main ideas visible.

We often use the same symbol for a probability law and its density when the
meaning is clear from context.  For example, $\pdata$ may denote the data law,
the density of that law, or the measure $\pdata(\dd x)$.  Similarly, $p_t$ may
denote either $\Law(X_t)$ or its density.  This common abuse of notation keeps
the formulas readable, but when a density value is needed we write expressions
such as $\nabla\log p_t(x)$.

\subsection*{Recurring notation}

The following symbols are used across several sections.  More local notation is
introduced where it is needed.
\begin{center}
\begin{tabular}{c|p{0.70\textwidth}}
Symbol & Meaning \\ \hline
$\pdata$ & data distribution, the target of generative modeling.\\
$X_0$ & clean data sample, $X_0\sim\pdata$.\\
$X_t,p_t$ & continuous-time forward-noised variable and its law/density.\\
$B_t$ & Brownian motion driving the forward noising SDE.\\
$a_t,\sigma_t$ & continuous Gaussian marginal parameters:
$X_t\mid X_0\sim\normal{a_tX_0}{\sigma_t^2\Id}$.\\
$\sstar_t,\score_t$ & true and learned continuous-time scores.\\
$Y^{\leftarrow}_s,B^{\leftarrow}_s$ & reverse-time SDE sample path and its
reverse Brownian motion, in reverse time $s=T-t$ with $Y^{\leftarrow}_0\sim p_T$.\\
$Z^{\leftarrow}_s$ & reverse-time probability-flow ODE sample path,
$Z^{\leftarrow}_0\sim p_T$.\\
$X_k$ & forward-noised variable at discrete time $k$.\\
$p_k$ & density/law of $X_k$.\\
$\alpha_k,\eta_k$ & one-step noising parameters in
$X_{k+1}\mid X_k\sim\normal{\alpha_kX_k}{\alpha_k^2\eta_k\Id}$; $\eta_k$ is the
one-step variance parameter.\\
$h_k$ & reverse-step time increment $t_{k+1}-t_k$ in the discretization
analysis; equals $\eta_k$ in the variance-exploding case.\\
$a_k,\sigma_k$ & marginal parameters:
$X_k\mid X_0\sim\normal{a_kX_0}{\sigma_k^2\Id}$.\\
$P_k^{\to}(x \to x')$ & forward transition density
$\Law(X_{k+1}\mid X_k=x)$.\\
$P_k^{\leftarrow}(x' \to \cdot)$ & exact backward transition density
$\Law(X_k\mid X_{k+1}=x')$.\\
$\sstar_k$ & true score $\nabla\log p_k$.\\
$\score_k$ & learned score estimate.\\
$\epsscorek^2$ & mean-square score error
$\E_{p_k}\norm{\score_k-\sstar_k}^2$.\\
\end{tabular}
\end{center}

\subsection*{Algorithm acronyms}

The following acronyms name algorithms or algorithmic discretizations used in
several sections.
\begin{center}
\begin{tabular}{c|p{0.70\textwidth}}
Acronym & Meaning \\ \hline
MCMC & Markov chain Monte Carlo, the strategy of sampling by running a Markov
chain whose long-run law is the target.\\
ULA & unadjusted Langevin algorithm, the Euler--Maruyama discretization of
overdamped Langevin diffusion.\\
MALA & Metropolis-adjusted Langevin algorithm, ULA with an accept-reject
correction using density ratios.\\
DDPM & denoising diffusion probabilistic model, the discrete Gaussian noising
and learned reverse-sampling framework.\\
DDIM & denoising diffusion implicit model, a deterministic or partially
stochastic DDPM-compatible sampler closely related to probability-flow
discretization.\\
EM & Euler--Maruyama, the basic time-discretization rule for SDEs.\\
\AlgFORS & first-order rejection sampling, a score-only correction mechanism
for Gaussian tilts.\\
\end{tabular}
\end{center}

\subsection*{AI usage disclosure}

Large-language-model tools were used in the preparation of these notes to help
with drafting, editing, reorganization, and consistency checks.  The
mathematical content, exposition choices, references, and any remaining errors
are the responsibility of the author.

\section{Introduction to Sampling and Langevin Dynamics}

Sampling is the problem of producing random points that look as if they were
drawn from a given distribution.  This is easy to state but often hard to do:
the distribution may be known only up to a normalizing constant, or only
through data, and drawing from it directly may be infeasible.  The strategy
running through these notes is indirect.  Rather than sample the hard target
in one shot, we build a random process that is easy to simulate and whose
distribution gradually drifts toward the target, and we read off a sample once
the process has run long enough.  Making this idea precise calls for one habit
above all: instead of following a single trajectory, track the whole
distribution as it moves through the algorithm.  This first section
deliberately goes slowly, fixing that habit and the language we use to measure
how far one distribution is from another.

\subsection{The sampling problem}\label{subsec:sampling-problem}

Let $\pi$ be a probability distribution on $\R^d$.  In most applications,
$\pi$ is not available through exact samples.  Instead we may have access to:
\begin{itemize}
    \item the unnormalized density $\pi(x)\propto e^{-U(x)}$;
    \item the gradient $\nabla\log\pi(x)=-\nabla U(x)$;
    \item samples from a noisy distribution related to $\pi$;
\end{itemize}
The goal is to construct a random variable $\wh X$ with law $\wh\pi$ such that
$D(\wh\pi,\pi)\leq \deltaerr$, where $D$ is an appropriate discrepancy.  There
is no single best choice of $D$. Let $\mu$ and $\nu$ be two probability laws on $\R^d$.
We recall three basic discrepancies.
\begin{enumerate}
\item Total variation asks whether every event has nearly the right
probability:
\[
    \TV(\mu,\nu)=\sup_A\abs{\mu(A)-\nu(A)}
    =\frac12\int\abs{\frac{\dd\mu}{\dd\lambda}
    -\frac{\dd\nu}{\dd\lambda}}\dd\lambda.
\]
Here $\lambda$ is any measure such that both $\mu$ and $\nu$ are absolutely
continuous with respect to $\lambda$, written $\mu,\nu\ll\lambda$; for example,
we may take $\lambda=\mu+\nu$.  The notation $\dd\mu/\dd\lambda$ denotes the
Radon--Nikodym derivative, the density of $\mu$ with respect to $\lambda$,
characterized by
\[
    \mu(A)=\int_A \frac{\dd\mu}{\dd\lambda}\,\dd\lambda
\]
for measurable sets $A$.  The same applies to $\dd\nu/\dd\lambda$, and the
value of the total-variation distance does not depend on which such reference
measure $\lambda$ is chosen.
\item Wasserstein-$2$ distance asks whether probability mass can be
transported a short geometric distance:
\[
    W_2^2(\mu,\nu)=\inf_{\gamma\in\Pi(\mu,\nu)}
    \int \norm{x-y}^2\,\gamma(\dd x,\dd y).
\]
Here $\Pi(\mu,\nu)$ is the set of couplings of $\mu$ and $\nu$, meaning
probability measures $\gamma$ on $\R^d\times\R^d$ whose first marginal is
$\mu$ and whose second marginal is $\nu$.
\item KL divergence asks whether $\mu$ can be encoded efficiently using $\nu$
as a reference:
\[
    \KL(\mu\|\nu)
    =
    \int \log\!\left(\frac{\dd\mu}{\dd\nu}\right)\dd\mu,
\]
when $\mu$ is absolutely continuous with respect to $\nu$, and
$+\infty$ otherwise.  Note that the KL divergence is asymmetric:
$\KL(\mu\|\nu) \neq \KL(\nu\|\mu)$.  In particular, KL is not a metric.
\end{enumerate}

KL is often the main bookkeeping divergence we will use in this course.  The other metrics translate this information into statements
about events, geometric displacement, or weak test functions.
The basic comparison begins with the Csisz\'ar--Kullback--Pinsker inequality;
see, for example, Bakry, Gentil, and Ledoux~\cite{bakry2014analysis}:
\[
    \TV(\mu,\nu)
    \leq
    \sqrt{\frac12\KL(\mu\|\nu)}.
\]
Thus a KL guarantee immediately gives a TV guarantee, and hence also a weak
test-function guarantee.  Another recurring rule is \emph{data processing}:
applying the same observation map to two random objects cannot increase KL\@.  If
$T$ is measurable and $T_\#\mu$ denotes the law of $T(X)$ for $X\sim\mu$, then
\[
    \KL(T_\#\mu\|T_\#\nu)\leq \KL(\mu\|\nu).
\]
The same monotonicity holds for total variation.  We use this principle
repeatedly to pass from path-space comparisons to endpoint laws; a proof is
given in Lemma~\ref{lem:data-processing}.

Wasserstein distance is more geometric.  If $\mu$
and $\nu$ are supported on a set of diameter $R$, then a maximal coupling gives
\[
    W_2^2(\mu,\nu)
    \leq
    R^2\TV(\mu,\nu)
    \leq
    R^2\sqrt{\frac12\KL(\mu\|\nu)}.
\]
Without a bounded-diameter or moment/functional-inequality assumption, there
is no universal comparison in the other direction: small $W_2$ need not imply
small TV or KL, and small TV or KL need not control $W_2$ if a tiny amount of
mass can move very far away.

A more powerful comparison is available relative to a fixed reference law.  A
probability law $\nu$ satisfies a Talagrand $T_2$ transport-entropy inequality
with constant $C_T$ if
\[
    W_2^2(\mu,\nu)\leq 2C_T\KL(\mu\|\nu)
    \qquad\text{for all }\mu\ll\nu.
\]
One standard route to this inequality is through log-Sobolev.  In the convention
of these notes, if $\nu$ satisfies
\[
    \KL(\mu\|\nu)
    \leq
    \frac{C_{\mathsf{LSI}}}{2}\FI(\mu\|\nu),
\]
where
\[
    \FI(\mu\|\nu)
    =
    \int \norm{\nabla\log\frac{\dd\mu}{\dd\nu}}^2\dd\mu
\]
when the derivative is well defined, then the Otto--Villani theorem gives
the transport bound
\[
    W_2^2(\mu,\nu)\leq 2C_{\mathsf{LSI}}\KL(\mu\|\nu).
\]
This is one reason log-Sobolev and transport inequalities appear naturally in
sampling theory; see Villani~\cite{villani2009optimal} and Bakry, Gentil, and
Ledoux~\cite{bakry2014analysis}.

\begin{example}[Three distances see different things]
Let $\mu=\delta_0$ and $\nu=\delta_\epsilon$ on $\R$.  Then
$\TV(\mu,\nu)=1$ for every $\epsilon\neq0$, because the two point masses live
on disjoint sets.  On the other hand, $W_2(\mu,\nu)=\epsilon$ and
$\KL(\mu\|\nu)=\KL(\nu\|\mu)=+\infty$.  Thus TV and KL are too strict for
comparing a point mass to a tiny displacement of itself, while Wasserstein
sees that the two laws are geometrically close.
\end{example}

\subsection{Markov chain Monte Carlo}

Markov chain Monte Carlo (MCMC) turns sampling into the problem of running a
Markov chain.  Instead of drawing directly from $\pi$, we choose a transition
rule whose repeated application should have $\pi$ as its equilibrium law.  A
Markov kernel $P$ maps a current state $x$ to a distribution $P(x,\cdot)$.  If
$\pi P=\pi$, then $\pi$ is invariant, and the ideal MCMC chain runs
\[
    X_{n+1}\sim P(X_n,\cdot),\qquad n=0,1,\ldots,
\]
with the hope that $\Law(X_n)$ approaches $\pi$ as $n\to\infty$.

This viewpoint immediately separates two questions.  First, is the transition
kernel designed so that the right law is invariant?  Second, how long must the
chain run before it is close to equilibrium?  For a genuine metric $D$, such as
TV or $W_2$, a typical algorithmic decomposition is
\[
    D(\Law(X_n),\pi)
    \leq
    \underbrace{D(\Law(X_n),\pi_h)}_{\text{mixing to the algorithm's invariant law}}
    +
    \underbrace{D(\pi_h,\pi)}_{\text{bias from approximation}},
\]
where $\pi_h$ is the invariant distribution of the implementable algorithm.
This is a triangle-inequality argument, so it applies to genuine metrics but
not to KL\@.  In general there is no inequality of the form
$\KL(\mu\|\rho)\leq\KL(\mu\|\nu)+\KL(\nu\|\rho)$.

This decomposition separates the error caused by not running the algorithm long
enough from the error caused by using a transition rule whose invariant law is
not exactly $\pi$.  For Langevin sampling, that second error appears when a
continuous diffusion is replaced by a time-stepping rule.  The discretization
may lead to an implementable chain with $\pi_h\neq\pi$.  In exact MCMC methods,
such as Metropolis-adjusted Langevin, the transition is constructed so that
$\pi_h=\pi$.

\begin{defn}[Invariant and reversible kernels]
A probability distribution $\pi$ is \emph{invariant} for $P$ if
\[
    \int \pi(\dd x)P(x,A)=\pi(A)
    \qquad\text{for every measurable set }A.
\]
It is \emph{reversible} for $P$ if the detailed balance identity
\[
    \pi(\dd x)P(x,\dd y)=\pi(\dd y)P(y,\dd x)
\]
holds as measures on pairs $(x,y)$.  Reversibility implies invariance by
integrating both sides over $x$.
\end{defn}

\paragraph{Caution: invariance is not convergence.}
Invariance is only a fixed-point statement: if $X_0\sim\pi$, then one step of
the chain still has law $\pi$.  It does not by itself say that a chain started
from another law will approach $\pi$.  For example, the identity kernel
$P(x,\cdot)=\delta_x$ leaves every distribution invariant, but it never mixes.
Even uniqueness of the invariant law is not quite enough without excluding
periodic behavior: on the two-point space $\{0,1\}$, the deterministic flip
$0\mapsto1$, $1\mapsto0$ has the uniform distribution as an invariant reversible
law, but a chain started from $0$ oscillates forever.

Thus detailed balance is a convenient way to verify invariance, not a
convergence theorem.  To justify MCMC, one also needs an ergodicity mechanism,
such as irreducibility and aperiodicity in finite state spaces, or the
appropriate Harris recurrence and minorization conditions in general state
spaces.  Quantitative sampling bounds then add still more structure: spectral
gaps, conductance, contraction, or functional inequalities.  See Meyn and
Tweedie~\cite{meyn2009markov} for a classical treatment of these Markov-chain
conditions.

A common design principle for MCMC is therefore to start one level above the
implementable chain.  First design a continuous-time Markov process whose
invariant law is $\pi$ and whose convergence mechanism can be analyzed.  Then
discretize that process to obtain a computable Markov chain, possibly adding a
Metropolis correction if one wants to remove discretization bias.  Langevin
dynamics is the canonical example of this principle: the continuous diffusion
has $\pi$ as its invariant law, while ULA and MALA are two different ways to
turn that diffusion into an algorithm.

\subsection{Overdamped Langevin diffusion}

Assume $\pi(x)\propto e^{-U(x)}$ with $U:\R^d\to\R$ smooth.  The overdamped
Langevin diffusion is the continuous-time Markov process
\begin{equation}\label{eq:langevin}
    \dd X_t = -\nabla U(X_t)\dd t+\sqrt{2}\,\dd B_t
    = \nabla\log\pi(X_t)\dd t+\sqrt{2}\,\dd B_t.
\end{equation}
The drift
$-\nabla U=\nabla\log\pi$ points toward regions where $\pi$ is larger, much as
gradient descent moves toward lower potential.  The diffusion coefficient in
the Brownian term is tuned so that the ensemble keeps precisely the spread
prescribed by $\pi$.

This is the basic difference between optimization and sampling.  Optimization
asks for a point $x_\star$ with small objective value, whereas sampling asks for
a law.  If $\pi$ is bimodal, with half of its mass near each of two separated
modes, an optimizer may correctly return one mode, but a sampler must visit both
modes with the correct frequencies.

There are two complementary ways to read \eqref{eq:langevin}.  Pathwise, each
particle is pulled down the potential landscape by $-\nabla U$ while Brownian
motion keeps the ensemble spread out.  Distributionally, the density $q_t$ of
$X_t$ evolves by a deterministic PDE\@.  The corresponding Fokker--Planck
equation is
\begin{equation}\label{eq:fokker-planck}
    \partial_t q_t
    = \nabla\cdot(q_t\nabla U)+\Delta q_t
    = \nabla\cdot\!\left(q_t\nabla\log\frac{q_t}{\pi}\right).
\end{equation}
At this stage the point of \eqref{eq:fokker-planck} is simply that the
stochastic particle system has a deterministic law-level description.  The SDE
picture is good for intuition and simulation.  The PDE picture is good for
proving invariance and entropy decay.

To derive the Fokker--Planck equation, we recall It\^o calculus in the form needed for our
calculation, more details can be found in Appendix~\ref{app:ito-girsanov}.  If
\[
    \dd X_t=b(X_t)\dd t+\sigma\,\dd B_t
\]
with constant diffusion matrix $\sigma$, then for a smooth test function
$\varphi$,
\[
    \dd \varphi(X_t)
    =
    \ip{\nabla\varphi(X_t)}{b(X_t)}\dd t
    +\frac12\Tr\!\left(\sigma\sigma^\top\nabla^2\varphi(X_t)\right)\dd t
    +\ip{\nabla\varphi(X_t)}{\sigma\,\dd B_t}.
\]
The last term is a martingale increment, so it disappears after taking
expectations.  For \eqref{eq:langevin}, $b=-\nabla U$ and
$\sigma=\sqrt2\,\Id$.
Applying the preceding formula to
\eqref{eq:langevin} gives
\[
    \frac{\dd}{\dd t}\E[\varphi(X_t)]
    =
    \E[-\ip{\nabla U(X_t)}{\nabla\varphi(X_t)}+\Delta\varphi(X_t)].
\]
If $X_t$ has density $q_t$, then
\[
    \frac{\dd}{\dd t}\int \varphi q_t\dd x
    =
    \int\left[-\ip{\nabla U}{\nabla\varphi}+\Delta\varphi\right]q_t\dd x.
\]
Integrating by parts, assuming boundary terms vanish,
\[
    \int -q_t\ip{\nabla U}{\nabla\varphi}\dd x
    =
    \int \varphi\,\nabla\cdot(q_t\nabla U)\dd x,
    \qquad
    \int q_t\Delta\varphi\dd x
    =
    \int \varphi\,\Delta q_t\dd x.
\]
Since this holds for all test functions $\varphi$, we obtain
\[
    \partial_tq_t=\nabla\cdot(q_t\nabla U)+\Delta q_t.
\]
Finally, $\pi\propto e^{-U}$ implies $\nabla U=-\nabla\log\pi$, and
\[
    \nabla\cdot(q_t\nabla U)+\Delta q_t
    =
    \nabla\cdot\left(q_t\nabla\log\frac{q_t}{\pi}\right).
\]

The differential operator that appeared in the test-function calculation,
\[
    \mathcal L\varphi
    =
    -\ip{\nabla U}{\nabla\varphi}+\Delta\varphi,
\]
is called the infinitesimal generator of the Langevin diffusion.  The
Fokker--Planck equation is the adjoint equation on densities.  In weak form,
the calculation above says
\[
    \frac{\dd}{\dd t}\int \varphi q_t\dd x
    =
    \int (\mathcal L\varphi) q_t\dd x
    =
    \int \varphi(\mathcal L^\ast q_t)\dd x,
\]
so $\partial_t q_t=\mathcal L^\ast q_t$, where
\[
    \mathcal L^\ast q
    =
    \nabla\cdot(q\nabla U)+\Delta q.
\]

\begin{prop}[Stationarity]
The target density $\pi\propto e^{-U}$ satisfies
\[
    \mathcal L^\ast \pi=0.
\]
Consequently, if $q_0=\pi$, then the solution of the Fokker--Planck equation
remains $q_t=\pi$ for all $t$.  Thus $\pi$ is an invariant law of
\eqref{eq:langevin}.
\end{prop}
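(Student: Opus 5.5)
The computation is short, and I would organize it around the flux form of the Fokker--Planck operator derived above, namely
\[
    \mathcal L^\ast q=\nabla\cdot\!\left(q\nabla\log\frac{q}{\pi}\right).
\]
Substituting $q=\pi$ gives $\log(q/\pi)=0$. The flux $\pi\nabla\log(\pi/\pi)$ therefore vanishes identically, and so does its divergence. As an independent check in the original form, take $\pi=Z^{-1}e^{-U}$. Then $\nabla\pi=-\pi\nabla U$, so $\Delta\pi=\nabla\cdot(\nabla\pi)=-\nabla\cdot(\pi\nabla U)$. Hence $\nabla\cdot(\pi\nabla U)+\Delta\pi=0$. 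This direct check is reassuring because it does not rely on the $\log$-rewriting, which requires $q>0$; that condition does hold for $\pi$, since $e^{-U}>0$ everywhere.

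For the consequence, I would observe that $q_t\equiv\pi$ is a time-independent function satisfying $\partial_t q_t=0=\mathcal L^\ast\pi$. It is therefore a solution of \eqref{eq:fokker-planck} with initial datum $\pi$. To conclude that it is \emph{the} solution, I need uniqueness for the Fokker--Planck equation within the class of probability densities with enough decay to justify the integrations by parts used above. I would invoke this as a standard fact under the smoothness and decay assumptions on $U$ already adopted in these notes, rather than prove it.

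The main obstacle is exactly this uniqueness step, together with the justification that the law of $X_t$ actually solves the PDE. A cleaner route at the level of the SDE is to use the weak form. If $X_0\sim\pi$, then for every test function $\varphi$,
\[
    \frac{\dd}{\dd t}\E[\varphi(X_t)]\Big|_{t=0}=\int(\mathcal L\varphi)\pi\dd x=\int\varphi\,\mathcal L^\ast\pi\dd x=0.
\]
Turning this infinitesimal statement into invariance for all $t$ still requires well-posedness. Either the SDE has a unique strong solution, for instance when $\nabla U$ is locally Lipschitz and non-explosive, or the forward Kolmogorov equation has unique solutions. I would state this assumption explicitly and defer to standard references, consistent with the formal-calculus convention in these notes.

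Finally, invariance of $\pi$ for \eqref{eq:langevin} is just the statement that $\Law(X_t)=\pi$ for all $t$ whenever $X_0\sim\pi$, which is what the argument above establishes.
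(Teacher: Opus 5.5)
Your proposal is correct and follows the paper's route: the paper's proof is exactly your direct check, using $\nabla\pi=-\pi\nabla U$ to write $\mathcal L^\ast\pi=\nabla\cdot(\pi\nabla U+\nabla\pi)=0$, and then concluding stationarity because the right-hand side of $\partial_t q_t=\mathcal L^\ast q_t$ vanishes at $q_t=\pi$. The paper leaves the uniqueness/well-posedness step implicit under its formal-calculus convention, so flagging it explicitly is a sound refinement of the same argument rather than a different approach.
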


\begin{proof}
Since $\nabla\log\pi=-\nabla U$, we have $\nabla\pi=-\pi\nabla U$.  Therefore
\[
    \mathcal L^\ast\pi
    =
    \nabla\cdot(\pi\nabla U)+\Delta\pi
    =
    \nabla\cdot(\pi\nabla U+\nabla\pi)
    =
    0.
\]
Thus the right-hand side of the adjoint equation
$\partial_t q_t=\mathcal L^\ast q_t$ vanishes at $q_t=\pi$, so the density
$\pi$ is stationary.
\end{proof}

\begin{example}[Ornstein--Uhlenbeck process]
Take $\pi=\normal{0}{\Id}$, so $U(x)=\norm{x}^2/2$.  Langevin dynamics becomes
\[
    \dd X_t=-X_t\dd t+\sqrt2\,\dd B_t.
\]
The explicit solution is
\[
    X_t=e^{-t}X_0+\sqrt2\int_0^t e^{-(t-s)}\dd B_s.
\]
If $X_0$ is deterministic, then
\[
    X_t\sim\normal{e^{-t}X_0}{(1-e^{-2t})\Id}.
\]
This example is worth remembering: the mean contracts exponentially and the
variance fills in to the target variance.
\end{example}

\subsection{Entropy dissipation}

Differentiating KL along the flow gives
\begin{equation}\label{eq:entropy-dissipation}
    \frac{\dd}{\dd t}\KL(q_t\|\pi)
    =
    -\int \norm{\nabla\log\frac{q_t}{\pi}}^2 q_t\,\dd x
    =-\FI(q_t\|\pi),
\end{equation}
where $\FI(q\|\pi)$ is the relative Fisher information.

\begin{proof}[Proof of \eqref{eq:entropy-dissipation}]
We have
\[
    \KL(q_t\|\pi)=\int q_t\log\frac{q_t}{\pi}\dd x.
\]
Because $\int \partial_t q_t\dd x=0$, differentiating gives
\[
    \frac{\dd}{\dd t}\KL(q_t\|\pi)
    =
    \int \partial_tq_t\,\log\frac{q_t}{\pi}\dd x.
\]
Using the equivalent form of the Fokker--Planck equation,
\[
    \partial_tq_t
    =
    \nabla\cdot\!\left(q_t\nabla\log\frac{q_t}{\pi}\right),
\]
and integrating by parts,
\[
    \int
    \nabla\cdot\!\left(q_t\nabla\log\frac{q_t}{\pi}\right)
    \log\frac{q_t}{\pi}\dd x
    =
    -\int q_t\norm{\nabla\log\frac{q_t}{\pi}}^2\dd x.
\]
This is exactly $-\FI(q_t\|\pi)$.
\end{proof}

The identity just proved becomes a quantitative convergence estimate once it is
combined with a functional inequality.  If $\pi$ satisfies a log-Sobolev
inequality
\[
    \KL(q\|\pi)\leq \frac{C_{\mathsf{LSI}}}{2}\FI(q\|\pi),
\]
then \eqref{eq:entropy-dissipation} implies
\[
    \frac{\dd}{\dd t}\KL(q_t\|\pi)
    \leq
    -\frac{2}{C_{\mathsf{LSI}}}\KL(q_t\|\pi).
\]
By Gr\"onwall's inequality,
\[
    \KL(q_t\|\pi)
    \leq e^{-2t/C_{\mathsf{LSI}}}\KL(q_0\|\pi).
\]
Thus the continuous-time process reduces KL error exponentially.

In particular, if the potential $U$ is $m$-strongly convex, that is
$\nabla^2U\succeq m\Id$ with $m>0$, then by the Bakry--\'Emery
criterion~\cite{bakry2014analysis} the target satisfies the log-Sobolev
inequality with $C_{\mathsf{LSI}}\leq 1/m$, and the estimate above becomes the
clean exponential rate
\[
    \KL(q_t\|\pi)\leq e^{-2mt}\KL(q_0\|\pi).
\]

The main lesson is the mechanism: KL
divergence is a Lyapunov function, the relative Fisher information is its
dissipation rate, and a structural inequality for the target converts
dissipation into convergence.  Sampling arguments often follow this pattern:
choose a discrepancy, compute its derivative along the dynamics, and use an
inequality for the target law to turn the derivative into a rate.

The same identity also has a geometric reading.  In optimal-transport language,
overdamped Langevin is the $W_2$-gradient flow of $\KL(\cdot\|\pi)$, and
\eqref{eq:entropy-dissipation} is its energy-dissipation identity; see
Ambrosio, Gigli, and Savar\'e~\cite{ambrosio2008gradientflows}.  For background
on diffusion semigroups, entropy dissipation, and log-Sobolev inequalities, see
Bakry, Gentil, and Ledoux~\cite{bakry2014analysis}; for the transport viewpoint
behind Wasserstein geometry, see Villani~\cite{villani2009optimal}.

\section{Convergence of Langevin Diffusion and ULA}

The previous section established continuous-time convergence through the entropy identity and log-Sobolev inequality.  We now ask what remains of that story after replacing the diffusion by an implementable Markov chain.  The main new issue is numerical error: a time-stepping rule may converge as a Markov chain, but its invariant law need not be the target law.  For a modern treatment of log-concave sampling algorithms, including Langevin algorithms and their Metropolis-adjusted variants, see the excellent book by Chewi~\cite{chewi2024logconcave}.

\subsection{Unadjusted Langevin algorithm}

Euler--Maruyama with step size $h>0$ gives the unadjusted Langevin algorithm (ULA)
\begin{equation}\label{eq:ula}
    X_{n+1}=X_n-h\nabla U(X_n)+\sqrt{2h}\,\xi_n,
    \qquad \xi_n\sim\normal{0}{\Id}.
\end{equation}
ULA is often the first sampler one considers because it is simple, gradient-based, and cheap.  Its limitation is equally important: \eqref{eq:ula} is not an exact transition of \eqref{eq:langevin}, and in general does not preserve $\pi$.

It is tempting to view ULA as ``almost Langevin'' and therefore assume that it must have almost the right stationary law.  Already for a Gaussian target one can see exactly what ``almost'' means.  For small step size the bias is small, but it does not vanish at fixed $h$ no matter how long the chain is run.  This is the difference between mixing error, which decreases with more iterations, and discretization bias, which is built into the transition rule itself.

\begin{example}[ULA bias for a standard Gaussian]
Let $U(x)=x^2/2$ in one dimension, so the target is $\pi=\normal{0}{1}$.
ULA becomes the Markov chain
\[
    X_{n+1}=(1-h)X_n+\sqrt{2h}\xi_n.
\]
Recall from the Gaussian toolbox, Lemma~\ref{lem:gaussian-affine-sum}, that
affine transformations of Gaussians are Gaussian, and that independent Gaussian
variances add.  In stationarity, $X_n$ and $X_{n+1}$ have the same law.  Thus,
if the invariant law is centered Gaussian with variance $v_h$, then
\[
    X_n\sim\normal{0}{v_h}
    \quad\Longrightarrow\quad
    X_{n+1}\sim
    \normal{0}{(1-h)^2v_h+2h},
\]
because $\xi_n\sim\normal{0}{1}$ is independent of $X_n$.  Equality of the
stationary input and output variances gives the identity
\[
    v_h=(1-h)^2v_h+2h.
\]
Solving,
\[
    v_h=\frac{2h}{1-(1-h)^2}=\frac{2}{2-h}.
\]
Thus the invariant law of ULA is $\pi_h=\normal{0}{2/(2-h)}$, not $\pi=\normal{0}{1}$ unless $h=0$.  For small $h$, the variance bias is $2/(2-h)-1=h/(2-h)=O(h)$.  The same bias can also be expressed as divergences between the two invariant laws:
\[
    \KL(\pi_h\|\pi)
    =
    \frac12\left(v_h-1-\log v_h\right),
    \qquad
    \KL(\pi\|\pi_h)
    =
    \frac12\left(\frac1{v_h}-1+\log v_h\right),
\]
and
\[
    W_2^2(\pi_h,\pi)=(\sqrt{v_h}-1)^2.
\]
All three quantities are $h^2/16+O(h^3)$ as $h\downarrow0$.  This example is
the cleanest way to see why unadjusted discretization creates bias.
\end{example}

\subsection{One-step KL discretization error}

Let $P_h(x,\cdot)$ be the exact Langevin transition for time $h$, and let
$\wh P_h(x,\cdot)$ be the ULA transition.  To compare them in KL, interpolate
one ULA step over the interval $[0,h]$ by
\[
    \dd \wh X_s=-\nabla U(x)\dd s+\sqrt2\,\dd B_s,\qquad \wh X_0=x.
\]
The exact Langevin law started from the same point has drift field $y\mapsto-\nabla U(y)$.  Since the KL below is $\Law(\wh X_{[0,h]})$ relative to $\Law(X_{[0,h]})$, the Girsanov formula evaluates both drift fields along the first path, namely at $\wh X_s$. Thus the formula reviewed in Appendix~\ref{app:ito-girsanov} gives the path-space comparison
\[
    \KL\!\left(\Law(\wh X_{[0,h]})\middle\|\Law(X_{[0,h]})\right)
    =
    \frac14
    \int_0^h
    \E\norm{\nabla U(\wh X_s)-\nabla U(x)}^2\dd s.
\]
By the data-processing inequality reviewed in Lemma~\ref{lem:data-processing}, the KL between the endpoint laws is no larger.  If $\nabla U$ is $L$-Lipschitz, then
\[
    \KL\!\left(\wh P_h(x,\cdot)\middle\|P_h(x,\cdot)\right)
    \leq
    \frac{L^2}{4}
    \int_0^h \E\norm{\wh X_s-x}^2\dd s.
\]
Since $\wh X_s=x-s\nabla U(x)+\sqrt2\,B_s$,
\[
    \E\norm{\wh X_s-x}^2=s^2\norm{\nabla U(x)}^2+2ds,
\]
and therefore
\begin{equation}\label{eq:ula-local-kl}
    \KL\!\left(\wh P_h(x,\cdot)\middle\|P_h(x,\cdot)\right)
    \leq
    \frac{L^2}{4}
    \left(dh^2+\frac{h^3}{3}\norm{\nabla U(x)}^2\right).
\end{equation}
This is the KL analogue of a local truncation error: freezing the drift for one short interval costs order $dh^2$, plus a term depending on the local drift size.

\subsection{Direct KL recursion for ULA}

The one-step estimate explains the scale of the discretization error, but it is not by itself a convergence theorem to $\pi$.  Indeed, a comparison such as $\KL(\wh q_k\|q_{kh})$ cannot simply be added to $\KL(q_{kh}\|\pi)$, because KL has no triangle inequality.  The modern KL-based analysis instead tracks the target-relative quantity $\KL(\wh q_k\|\pi)$ directly.

The following result is a translation of the KL theorem of Vempala and Wibisono~\cite{vempala2019rapidula} into our notation.

\begin{theorem}[ULA convergence in KL]\label{thm:ula-kl-vw}
Assume that $\pi(\dd x)\propto e^{-U(x)}\dd x$ satisfies the log-Sobolev
inequality with constant $C_{\mathsf{LSI}}$ in the convention
\[
    \KL(q\|\pi)\leq \frac{C_{\mathsf{LSI}}}{2}\FI(q\|\pi),
\]
and that $\nabla U$ is $L$-Lipschitz.  Let $\wh q_k$ be the law of ULA
\eqref{eq:ula} with step size
\[
    0<h\leq \frac{1}{4C_{\mathsf{LSI}}L^2}.
\]
If $\KL(\wh q_0\|\pi)<\infty$, then
\begin{equation}\label{eq:ula-vw-kl}
    \KL(\wh q_k\|\pi)
    \leq
    e^{-kh/C_{\mathsf{LSI}}}\KL(\wh q_0\|\pi)
    +8C_{\mathsf{LSI}}L^2dh.
\end{equation}
\end{theorem}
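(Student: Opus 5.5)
The plan follows the Vempala--Wibisono interpolation argument: I would prove a one-step contraction of $\KL(\cdot\|\pi)$ and then iterate it. Fix $k$ and interpolate the ULA step on $[0,h]$ by
\[
    \dd \wh X_t=-\nabla U(\wh X_0)\dd t+\sqrt2\,\dd B_t,\qquad \wh X_0\sim\wh q_k ,
\]
so that $\Law(\wh X_h)=\wh q_{k+1}$. Write $q_t=\Law(\wh X_t)$. Conditioning on $\wh X_0$ and then averaging gives a Fokker--Planck equation with averaged drift,
\[
    \partial_t q_t=\nabla\cdot\bigl(q_t\,\E[\nabla U(\wh X_0)\mid \wh X_t=x]\bigr)+\Delta q_t .
\]
Repeating the computation behind \eqref{eq:entropy-dissipation} yields
\[
    \frac{\dd}{\dd t}\KL(q_t\|\pi)=-\FI(q_t\|\pi)+\E\Bigl\langle \nabla U(\wh X_t)-\nabla U(\wh X_0),\,\nabla\log\tfrac{q_t}{\pi}(\wh X_t)\Bigr\rangle .
\]
The first term is the exact-Langevin dissipation; the second is the discretization defect.

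Next I would control the defect by Young's inequality, bounding it by $\tfrac14\FI(q_t\|\pi)+\E\norm{\nabla U(\wh X_t)-\nabla U(\wh X_0)}^2$. The $L$-Lipschitz assumption then turns the second piece into
\[
    L^2\,\E\norm{\wh X_t-\wh X_0}^2=L^2\bigl(t^2\E\norm{\nabla U(\wh X_0)}^2+2dt\bigr),
\]
exactly as in the one-step estimate \eqref{eq:ula-local-kl}.

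The main obstacle is the term $\E_{\wh q_k}\norm{\nabla U}^2$, which involves the \emph{current} law rather than $\pi$. I would handle it by transporting back to time $t$: first $\E\norm{\nabla U(\wh X_0)}^2\le 2\E\norm{\nabla U(\wh X_t)}^2+2L^2\E\norm{\wh X_t-\wh X_0}^2$, and for $h\le 1/(4C_{\mathsf{LSI}}L^2)$, hence $Lh$ small, the second summand can be absorbed. Then for any $\rho$,
\[
    \E_{\rho}\norm{\nabla U}^2\le \FI(\rho\|\pi)+2dL ,
\]
which follows by expanding $\nabla\log\rho=\nabla\log(\rho/\pi)-\nabla U$ and using the integration-by-parts identity $\E_\rho[\Delta U]\le dL$. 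Combining, the total error is of the form $c\,L^2t^2\,\FI+O(L^2dt)$. For $t\le h$ small this is absorbed into another quarter of the Fisher information, leaving
\[
    \frac{\dd}{\dd t}\KL(q_t\|\pi)\le-\tfrac12\FI(q_t\|\pi)+4L^2dt .
\]
Applying the log-Sobolev inequality converts $-\tfrac12\FI$ into $-\KL/C_{\mathsf{LSI}}$. I expect that fixing the constants in this absorption is where the precise step-size condition in the statement gets used.

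Finally, integrating the differential inequality $\frac{\dd}{\dd t}\KL\le -\KL/C_{\mathsf{LSI}}+4L^2dh$ over $[0,h]$ gives
\[
    \KL(\wh q_{k+1}\|\pi)\le e^{-h/C_{\mathsf{LSI}}}\KL(\wh q_k\|\pi)+4L^2dh^2 .
\]
Unrolling this recursion and summing the geometric series with $1-e^{-h/C_{\mathsf{LSI}}}\ge h/(2C_{\mathsf{LSI}})$ (valid since $h/C_{\mathsf{LSI}}\le 1$, because $C_{\mathsf{LSI}}L\ge1$ for an LSI target with $L$-smooth potential) produces the bias term $8C_{\mathsf{LSI}}L^2dh$ and the exponential term $e^{-kh/C_{\mathsf{LSI}}}\KL(\wh q_0\|\pi)$. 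Together these give \eqref{eq:ula-vw-kl}.
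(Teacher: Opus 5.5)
Your proof is correct and closes with exactly the stated constants. The key term $\E_{\wh q_k}\norm{\nabla U}^2$ is handled differently from the paper.

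\emph{The paper's route.} Following Vempala--Wibisono, it bounds this term at the start of the step. It uses an optimal $W_2$ coupling with $\pi$, the identity $\E_\pi\norm{\nabla U}^2=\E_\pi\Delta U\le Ld$, and Talagrand's inequality. This produces a term of order $C_{\mathsf{LSI}}L^4h^3\KL(\wh q_k\|\pi)$, evaluated at time $0$, which must be absorbed into the contraction after integrating over the step. That absorption is where the paper spends its step-size condition.

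\emph{Your route.} You move the gradient to the current time $t$ using smoothness, then apply the integration-by-parts bound $\E_{q_t}\norm{\nabla U}^2\le\FI(q_t\|\pi)+2dL$. The extra error is then a multiple of $L^2t^2\FI(q_t\|\pi)$, absorbed into the dissipation at the same instant.

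The constants work out. With $Lt\le\frac14$, the inequality $(1-2L^2t^2)\E\norm{\nabla U(\wh X_0)}^2\le 2\E_{q_t}\norm{\nabla U}^2+4L^2dt$ gives an error of at most $\frac17\FI(q_t\|\pi)+\frac{24}{7}L^2dt$. Hence $\frac{\dd}{\dd t}\KL(q_t\|\pi)\le-\frac{17}{28}\FI(q_t\|\pi)+\frac{24}{7}L^2dt$, which implies your bound $-\frac12\FI+4L^2dt$.

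\emph{What each buys.} Your route avoids transport inequalities entirely. The differential inequality needs only smoothness and $Lh\le\frac14$, and the log-Sobolev inequality enters only in the final Gr\"onwall step. This is why the same computation also yields averaged Fisher-information guarantees without any isoperimetry. The paper's route stays entirely in KL and matches the published one-step lemma.

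\emph{A shared ingredient.} Both arguments rely on $C_{\mathsf{LSI}}L\ge1$ to turn $h\le 1/(4C_{\mathsf{LSI}}L^2)$ into $Lh\le\frac14$ and $h/C_{\mathsf{LSI}}\le\frac14$. It follows because LSI implies a Poincar\'e inequality. Testing Poincar\'e on linear functions and comparing with $\operatorname{Var}_\pi\ip{v}{X}\ge\norm{v}^2/L$ gives the claim; that lower bound comes from $\E_\pi[(X-\E_\pi X)\nabla U(X)^\top]=\Id$ and Cauchy--Schwarz. You assert this fact without proof, while the paper uses it implicitly.
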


No convexity is assumed in this theorem.  The log-Sobolev inequality supplies
the global mixing mechanism, while the bounded Hessian assumption supplies the
local control needed to discretize the diffusion.

Thus, to reach KL target
$\KL(\wh q_k\|\pi)\leq \deltaerr^2$, it suffices to choose
\[
    h\lesssim
    \min\left\{
    \frac{1}{C_{\mathsf{LSI}}L^2},
    \frac{\deltaerr^2}{C_{\mathsf{LSI}}L^2d}
    \right\}
\]
and run for
\[
    k
    \gtrsim
    \frac{C_{\mathsf{LSI}}}{h}
    \log\frac{\KL(\wh q_0\|\pi)}{\deltaerr^2}
\]
iterations.  Equivalently, the iteration complexity is
\[
    k=\tO\!\left(
    \frac{C_{\mathsf{LSI}}^2L^2d}{\deltaerr^2}
    \right).
\]
The dependence on the ambient dimension is linear in this KL analysis.  By the
Bakry--\'Emery criterion for log-Sobolev inequalities (see Bakry, Gentil, and
Ledoux~\cite{bakry2014analysis}), under $m$-strong convexity one has
$C_{\mathsf{LSI}}\leq 1/m$, so the bound becomes
$\tO(\kappa^2d/\deltaerr^2)$ complexity, where $\kappa=L/m$ is the condition number.

\begin{proof}[Proof sketch]
Fix one step and start from law $\wh q_k$.  Interpolate the ULA update by
running, for elapsed time $t\in[0,h]$, the frozen-drift diffusion
\[
    \dd \bar X_t=-\nabla U(\bar X_0)\dd t+\sqrt2\,\dd B_t,
    \qquad \bar X_0\sim \wh q_k.
\]
Write $\nu_t=\Law(\bar X_t)$, so $\nu_h=\wh q_{k+1}$.  Differentiating
$\KL(\nu_t\|\pi)$ along this interpolated process gives the same negative
Fisher-information term as in the continuous Langevin calculation, plus the
error caused by freezing the drift:
\[
    \frac{\dd}{\dd t}\KL(\nu_t\|\pi)
    =
    -\FI(\nu_t\|\pi)
    +
    \E\ip{\nabla U(\bar X_t)-\nabla U(\bar X_0)}
    {\nabla\log \frac{\nu_t}{\pi}(\bar X_t)}.
\]
Young's inequality gives, directly for the inner product above,
\begin{align*}
    \E\ip{\nabla U(\bar X_t)-\nabla U(\bar X_0)}
    {\nabla\log \frac{\nu_t}{\pi}(\bar X_t)}
    &\leq
    \frac12\E\norm{\nabla\log \frac{\nu_t}{\pi}(\bar X_t)}^2
    +
    \frac12\E\norm{\nabla U(\bar X_t)-\nabla U(\bar X_0)}^2 \\
    &=
    \frac12\FI(\nu_t\|\pi)
    +
    \frac12\E\norm{\nabla U(\bar X_t)-\nabla U(\bar X_0)}^2.
\end{align*}
By $L$-smoothness,
$\norm{\nabla U(\bar X_t)-\nabla U(\bar X_0)}
\leq L\norm{\bar X_t-\bar X_0}$, so
\[
    \frac{\dd}{\dd t}\KL(\nu_t\|\pi)
    \leq
    -\frac12\FI(\nu_t\|\pi)
    +
    \frac{L^2}{2}\E\norm{\bar X_t-\bar X_0}^2.
\]
The log-Sobolev inequality turns the first term into contraction:
$\FI(\nu_t\|\pi)\geq 2C_{\mathsf{LSI}}^{-1}\KL(\nu_t\|\pi)$.  Thus
\[
    \frac{\dd}{\dd t}\KL(\nu_t\|\pi)
    \leq
    -\frac{1}{C_{\mathsf{LSI}}}\KL(\nu_t\|\pi)
    +
    \frac{L^2}{2}\E\norm{\bar X_t-\bar X_0}^2.
\]
It remains to bound the displacement of the frozen-drift interpolation.  Since
\[
    \E\norm{\bar X_t-\bar X_0}^2
    =
    t^2\E_{\wh q_k}\norm{\nabla U}^2+2dt,
\]
the Brownian part contributes the local $d h^2$ term after integration over
$t\in[0,h]$.  For the drift part, use an optimal $W_2$ coupling between
$\wh q_k$ and $\pi$.  Along such a coupling, smoothness gives
$\norm{\nabla U(x)}^2\leq
2\norm{\nabla U(y)}^2+2L^2\norm{x-y}^2$, and averaging gives
\[
    \E_{\wh q_k}\norm{\nabla U}^2
    \leq
    2\E_\pi\norm{\nabla U}^2+2L^2W_2^2(\wh q_k,\pi).
\]
The first term is at most $2Ld$, because integration by parts under
$\pi\propto e^{-U}$ gives
$\E_\pi\norm{\nabla U}^2=\E_\pi\Delta U\leq Ld$.  The second term is bounded by
Talagrand's comparison
$W_2^2(\wh q_k,\pi)\leq 2C_{\mathsf{LSI}}\KL(\wh q_k\|\pi)$ from
Subsection~\ref{subsec:sampling-problem}.  Hence the drift contribution is a
lower-order $d h^2$ term plus a multiple of
$C_{\mathsf{LSI}}L^4h^3\KL(\wh q_k\|\pi)$, and the stated step-size condition
makes this last piece small enough to be absorbed into the contraction.

The resulting one-step inequality of Vempala and
Wibisono~\cite{vempala2019rapidula}, in our notation, is
\[
    \KL(\wh q_{k+1}\|\pi)
    \leq
    e^{-h/C_{\mathsf{LSI}}}\KL(\wh q_k\|\pi)
    +6L^2dh^2.
\]
Iterating this recursion gives
\[
    \KL(\wh q_k\|\pi)
    \leq
    e^{-kh/C_{\mathsf{LSI}}}\KL(\wh q_0\|\pi)
    +
    6L^2dh^2
    \sum_{j=0}^{k-1} e^{-jh/C_{\mathsf{LSI}}}.
\]
Finally,
$\sum_{j=0}^{k-1}e^{-jh/C_{\mathsf{LSI}}}
\leq (1-e^{-h/C_{\mathsf{LSI}}})^{-1}
 \leq 4C_{\mathsf{LSI}}/(3h)$ under the same small-step assumptions.  The
last term is therefore at most $8C_{\mathsf{LSI}}L^2dh$, which gives
\eqref{eq:ula-vw-kl}.  This is the
discrete analogue of entropy dissipation: log-Sobolev supplies the contraction,
while smoothness controls the Euler--Maruyama error.
\end{proof}

Let us take a look at the result of the Theorem. The dependence on
dimension is linear, but the dependence on accuracy is polynomial:
to make the KL error of order $\deltaerr^2$, ULA uses
$\tO(C_{\mathsf{LSI}}^2L^2d/\deltaerr^2)$ steps.  This happens because the
Euler discretization has a stationary bias of order $h$, so high accuracy
forces a small step size.  It is therefore natural to ask whether one can keep
the Langevin proposal but remove the discretization bias by an exact correction.
Classical Metropolis adjustment does precisely this when density ratios are
available.

\subsection{Metropolis-adjusted Langevin algorithm}

The Metropolis-adjusted Langevin algorithm (MALA) corrects the ULA proposal with
an accept-reject step.  The distinction from the gradient-only setting is central:
density evaluations enable the classical high-accuracy rejection ideas, whereas
gradient information alone does not provide the density ratios those ideas
require.

Let $\pi(\dd x)\propto e^{-U(x)}\dd x$.  From a current state $x$, MALA first
draws the ULA proposal
\[
    y\sim q_h(x,\cdot)
    =
    \normal{x-h\nabla U(x)}{2h\Id}.
\]
It then accepts $y$ with probability
\begin{equation}\label{eq:mala-acceptance}
    a_h(x,y)
    =
    1\wedge
    \frac{\pi(y)q_h(y,x)}{\pi(x)q_h(x,y)}.
\end{equation}
If the proposal is rejected, the chain stays at $x$.  The Metropolis ratio
\eqref{eq:mala-acceptance} enforces detailed balance:
\[
    \pi(\dd x)P_h^{\mathsf{MALA}}(x,\dd y)
    =
    \pi(\dd y)P_h^{\mathsf{MALA}}(y,\dd x).
\]
Consequently, $\pi$ is exactly stationary for MALA\@.  This is the basic
advantage over ULA: the discretization bias is removed, at the price of needing
the density ratio $\pi(y)/\pi(x)$.

MALA is the cleanest example of a correction mechanism.  The
proposal uses only gradient information, just like ULA, but the accept-reject
step asks whether the proposed move has the right probability under the target
density.  This single density-ratio check changes the invariant distribution
from an approximation $\pi_h$ back to the exact target $\pi$.

Exact stationarity, however, is only the invariance part of the story.  To turn
MALA into a quantitative sampling algorithm, one still has to bound how many
accepted-or-rejected proposals are needed before the chain is close to $\pi$.
The modern high-accuracy theory separates this analysis into two tasks:
sampling efficiently once a good initialization is available, and producing
that initialization in the first place.  Both are phrased relative to a warm
start.  A standard warm-start condition is the following: an initial law
$\mu_0$ is $M$-warm with respect to
$\pi$ if
\[
    \mu_0(A)\leq M\pi(A)
    \qquad\text{for every measurable set }A,
\]
or equivalently
\begin{equation}\label{eq:warmstartinfty}
    \left\|\frac{\dd\mu_0}{\dd\pi}\right\|_{L^\infty(\pi)}
    \leq M.
\end{equation}
Some warm-start preparation results use the
finite-order R\'enyi version:
\[
    D_q(\mu_0\|\pi)
    =
    \frac{1}{q-1}
    \log
    \int
    \left(\frac{\dd\mu_0}{\dd\pi}\right)^q
    \dd\pi
    =O(1)
\]
for some $q>1$. Note that in the R\'enyi notation, \eqref{eq:warmstartinfty} can be written as
$D_\infty(\mu_0\|\pi)\leq \log M$.

For the first task, fix a target total-variation accuracy
$\deltaerr\in(0,1)$.  Assuming an $M$-warm start, Wu, Schmidler, and
Chen~\cite{wu2022minimaxmala} prove that, for $m$-strongly log-concave
and $L$-smooth targets in $\R^d$, MALA mixes in
\[
    O\!\left(
        \kappa\sqrt d\,
        \log^3\!\max\left\{\kappa,d,\frac{M}{\deltaerr}\right\}
    \right),
    \qquad \kappa=L/m,
\]
iterations, up to universal constants.  Chen and
Gatmiry~\cite{chen2023simplemala} extend this warm-start MALA picture under
smoothness and isoperimetry, recovering the same leading $\kappa\sqrt d$
behavior in the strongly log-concave case, again with logarithmic dependence
on $M/\deltaerr$.

For the second task, Altschuler and
Chewi~\cite{altschuler2023warmstarts} show that kinetic Langevin, also called
underdamped Langevin, can produce the required finite-order R\'enyi warm start
with the same $\tO(\sqrt d)$ dimension dependence.  At the level of
dimension dependence, the high-accuracy picture is: use kinetic Langevin to
prepare a warm start, then use MALA as the exact Metropolis-corrected sampler.

The discussion above also shows the limitation of classical adjusted MCMC\@.  MALA removes discretization bias when density evaluations are
available, but its sharp convergence guarantees require structural assumptions
such as strong log-concavity, log-Sobolev or isoperimetric inequalities,
smoothness, and warm starts.  A general data distribution may be multimodal,
singular, or available only through samples, so these assumptions are not a
natural starting point.  This motivates a different question: if we are given
data rather than an explicit target density, how should we design an algorithm
to generate new samples?

\section{Score-Based Diffusion Models}

The previous section treated sampling from an explicit density: the algorithm
could use $\nabla U$, and MALA could even use density ratios to remove
discretization bias.  Diffusion models begin from a different premise.  The
target law is represented by data, not by a tractable formula, so we do not try
to run Langevin dynamics directly on the data distribution.  Instead we add
noise to the data, forming a path of smoother laws.  At each positive noise
level, the score of the noised law is the local vector field that guides a
reverse dynamics back toward the data.

The denoising-score connection underlying this approach goes back in part to
the DDPM formulation of Ho, Jain, and Abbeel~\cite{ho2020ddpm}.  The
continuous-time SDE and probability-flow formulation used here follows Yang Song,
Sohl-Dickstein, Kingma, Kumar, Ermon, and Poole~\cite{song2021score}.  The
basic picture is simple: at time $t$, the noised distribution $p_t$ is a
blurred version of the data, and $\nabla\log p_t(x)$ points toward nearby
regions where this blurred density is larger.  Diffusion models learn this
time-indexed field of local denoising directions.

\subsection{Continuous-time forward noising}

The cleanest conceptual starting point is continuous time.  Let
$X_0\sim\pdata$ and let $(X_t)_{t\in[0,T]}$ solve a forward noising SDE
\begin{equation}\label{eq:forward-sde}
    \dd X_t=f_t(X_t)\dd t+g_t\dd B_t,
    \qquad X_0\sim\pdata.
\end{equation}
Here $B_t$ is Brownian motion, $f_t$ is a drift field, and $g_t$ is a scalar
diffusion coefficient.  We write $p_t$ for the density of $X_t$.
Most diffusion models use linear Gaussian forward processes for which, for
deterministic functions $a_t$ and $\sigma_t$,
\begin{equation}\label{eq:continuous-marginal}
    X_t\mid X_0\sim\normal{a_tX_0}{\sigma_t^2\Id}.
\end{equation}
\begin{example}[Two common continuous schedules]
For the variance-exploding heat flow,
\[
    X_t=X_0+\sqrt t\,Z,\qquad p_t=\pdata*\normal{0}{t\Id},
\]
with Gaussian transition kernel
\[
    X_t\mid X_0\sim\normal{X_0}{t\Id}.
\]
For the Ornstein--Uhlenbeck or variance-preserving flow with constant rate,
\[
    \dd X_t=-\frac12 X_t\dd t+\dd B_t,
\]
one has
\[
    X_t\mid X_0\sim\normal{e^{-t/2}X_0}{(1-e^{-t})\Id}.
\]
Both examples fit \eqref{eq:continuous-marginal}; only $a_t$ and
$\sigma_t$ differ.  The variance-preserving convention has
$a_t^2+\sigma_t^2=1$, so the total marginal variance is preserved when the data
have unit scale.
\end{example}

The two scalar functions $a_t$ and $\sigma_t$ summarize the signal-to-noise
ratio.  The coefficient $a_t$ says how much of the original sample remains
visible in the conditional mean, while $\sigma_t$ says how much independent
Gaussian uncertainty has been added.  Early in the forward process,
$\sigma_t$ is small and the score may be complicated because it reflects the
fine structure of the data.  At large noise, $p_t$ is smoother and closer to a
simple reference law, so the reverse sampler has an easier starting point.

\subsection{Continuous-time Tweedie identity}\label{subsec:continuous-tweedie}

The most natural way to undo the forward noising is to denoise: given a noisy
observation $X_t=x$, estimate the clean sample that produced it, namely the
posterior mean $\E[X_0\mid X_t=x]$.  The object that encodes this denoising is
the \emph{score} of the noised law,
\begin{equation}\label{eq:continuous-score}
    \sstar_t(x)=\nabla\log p_t(x),
\end{equation}
the true score at time $t$, approximated in practice by a learned model
$\score_t(x)\approx\sstar_t(x)$.  The Gaussian marginal formula
\eqref{eq:continuous-marginal} makes the link between the score and denoising
precise through a continuous-time form of Tweedie's identity:
\begin{equation}\label{eq:continuous-tweedie}
    \sstar_t(x)
    =
    \frac{1}{\sigma_t^2}
    \E[a_tX_0-X_t\mid X_t=x].
\end{equation}
Equivalently, whenever $a_t\neq0$, define the optimal denoiser
\[
    \dstar_t(x)
    :=
    \E[X_0\mid X_t=x]
    =
    a_t^{-1}\bigl(x+\sigma_t^2\sstar_t(x)\bigr).
\]
A learned score $\score_t$ similarly determines a learned denoiser
\[
    \denoiser_t(x)
    :=
    a_t^{-1}\bigl(x+\sigma_t^2\score_t(x)\bigr).
\]
Thus denoising and score estimation are two ways of describing the same
posterior information.

\begin{proof}
The density of $X_t$ is the Gaussian mixture
\[
    p_t(x)=
    \int \frac{1}{(2\pi\sigma_t^2)^{d/2}}
    \exp\left(-\frac{\norm{x-a_tx_0}^2}{2\sigma_t^2}\right)
    \pdata(\dd x_0).
\]
Write the Gaussian kernel in this integral as
\[
    \varphi_t(x\mid x_0)
    =
    \frac{1}{(2\pi\sigma_t^2)^{d/2}}
    \exp\left(-\frac{\norm{x-a_tx_0}^2}{2\sigma_t^2}\right).
\]
For fixed $x_0$,
\[
    \nabla_x\varphi_t(x\mid x_0)
    =
    \frac{a_tx_0-x}{\sigma_t^2}\,\varphi_t(x\mid x_0).
\]
Thus, differentiating under the integral,
\[
    \nabla p_t(x)
    =
    \frac{1}{\sigma_t^2}
    \int (a_tx_0-x)\,\varphi_t(x\mid x_0)\,\pdata(\dd x_0).
\]
On the other hand, Bayes' rule gives the conditional law of the clean sample
given the noisy observation:
\[
    \Pp(X_0\in\dd x_0\mid X_t=x)
    =
    \frac{\varphi_t(x\mid x_0)\,\pdata(\dd x_0)}{p_t(x)}.
\]
Dividing the previous display by $p_t(x)$ therefore gives
\[
    \nabla\log p_t(x)
    =
    \frac{1}{\sigma_t^2}
    \E[a_tX_0-x\mid X_t=x],
\]
which is the same as \eqref{eq:continuous-tweedie}, since the conditioning
sets $X_t=x$.
\end{proof}

This Tweedie identity explains why the score can be learned from empirical data.
Fix $t>0$.
Under the Gaussian corruption $X_t=a_tX_0+\sigma_t Z$, with
$Z\sim\normal{0}{\Id}$ independent of $X_0$, the random vector inside this
conditional expectation becomes
\[
    \frac{a_tX_0-X_t}{\sigma_t^2}
    =
    -\frac{Z}{\sigma_t}.
\]
Thus the conditional mean of $-Z/\sigma_t$ given $a_tX_0+\sigma_t Z=x$ is
exactly $\sstar_t(x)$, which motivates the regression population loss:
\[
    \E\norm{
        \score_t(a_tX_0+\sigma_t Z)+\frac{Z}{\sigma_t}
    }^2.
\]
Replacing the expectation over
$X_0\sim\pdata$ by an empirical average over data points, while resampling the
Gaussian noise, gives the basic denoising score-matching objective.

This is why the learned field has a denoising interpretation rather than being
an arbitrary vector field.  The regression target is noisy for each individual
corruption, but its conditional average points from the observation toward the
posterior mean $\dstar_t(x)$ of the clean sample.  Equivalently, the score
records the Bayes correction encoded by the noised data distribution.

\subsection{Continuous-time reverse SDE}

The forward SDE \eqref{eq:forward-sde} is designed to move data toward a simple
law.  For sampling, the object we need is not a pathwise inverse of the
Brownian motion.  It is enough to construct a Markov process whose one-time
marginals run through the same densities in the opposite order: if
$(p_t)_{0\leq t\leq T}$ are the forward noising marginals, then the reverse
sampler $(Y^{\leftarrow}_s)_{0\leq s\leq T}$ should satisfy
\[
    \Law(Y^{\leftarrow}_s)=p_{T-s}
    \qquad\text{for every }s\in[0,T].
\]
This requirement is deliberately distributional.  It says that the reverse
sampler has the right snapshots, not that it retraces individual forward
Brownian paths.  We therefore first match the evolution of densities.
Set $q_s=p_{T-s}$ and write $t=T-s$.  The forward density satisfies
\[
    \partial_t p_t
    =
    -\nabla\cdot(f_tp_t)
    +\frac12g_t^2\Delta p_t.
\]
Therefore, with $t=T-s$,
\[
    \partial_s q_s
    =
    \nabla\cdot(f_t p_t)
    -\frac12g_t^2\Delta p_t.
\]
The key step uses the identity $\Delta p_t=\nabla\cdot(p_t\nabla\log p_t)$,
which brings in precisely the score $\sstar_t=\nabla\log p_t$ from
\eqref{eq:continuous-score}.  This rewrites the density evolution as
\[
    \partial_s q_s
    =
    -\nabla\cdot\left[
    \left(-f_t+g_t^2\sstar_t\right)p_t
    \right]
    +\frac12g_t^2\Delta p_t.
\]
This is the Fokker--Planck equation for the reverse-time diffusion
\begin{equation}\label{eq:reverse-sde}
    \dd Y^{\leftarrow}_s
    =
    \left[-f_{T-s}(Y^{\leftarrow}_s)
    +g_{T-s}^2\,\sstar_{T-s}(Y^{\leftarrow}_s)\right]\dd s
    +g_{T-s}\dd B^{\leftarrow}_s,
    \qquad Y^{\leftarrow}_0\sim p_T.
\end{equation}
Here $B^{\leftarrow}_s$ is Brownian motion in the reverse sampling time.  By
construction, if $Y^{\leftarrow}_0\sim p_T$, then $Y^{\leftarrow}_s\sim p_{T-s}$ for every $s\in[0,T]$.

The drift in the reverse SDE has two pieces.  The term $-f_{T-s}$ reverses the
deterministic transport part of the forward dynamics.  The extra term
$g_{T-s}^2\,\sstar_{T-s}$ is exactly the score introduced in
Subsection~\ref{subsec:continuous-tweedie} as the denoising direction: it corrects the density
evolution caused by diffusion by nudging each sample back toward regions of
higher noised density, i.e.\ toward the posterior mean of the clean sample.
This is the precise sense in which the denoiser of the previous subsection
\emph{is} the engine of the reverse dynamics.  Replacing the true score
$\sstar_t$ by a learned score $\score_t$ yields the score-based reverse SDE
sampler, the continuous object that DDPM-type algorithms discretize.

It is instructive to specialize \eqref{eq:reverse-sde} to the two schedules of
the running example; we will revisit the same two processes from the
probability-flow viewpoint in the next subsection.

\begin{example}[Variance-exploding reverse SDE]
For $X_t=X_0+\sqrt t Z$, we have $f_t=0$ and $g_t=1$, so the reverse SDE is
\[
    \dd Y^{\leftarrow}_s=\sstar_{T-s}(Y^{\leftarrow}_s)\dd s+\dd B^{\leftarrow}_s,
    \qquad Y^{\leftarrow}_0\sim p_T.
\]
The whole drift is the score: starting from a high-noise draw, the sampler
repeatedly steps along the denoising direction $\sstar_{T-s}$ while fresh
Brownian noise is injected.
\end{example}

\begin{example}[Variance-preserving Ornstein--Uhlenbeck reverse SDE]
For
\[
    \dd X_t=-\frac12X_t\dd t+\dd B_t,
\]
we have $f_t=-\frac12x$ and $g_t=1$, so the reverse SDE is
\[
    \dd Y^{\leftarrow}_s=\left[\frac12 Y^{\leftarrow}_s+\sstar_{T-s}(Y^{\leftarrow}_s)\right]\dd s+\dd B^{\leftarrow}_s,
    \qquad Y^{\leftarrow}_0\sim p_T.
\]
The term $\frac12 Y^{\leftarrow}_s$ undoes the OU contraction toward the origin, while the
score term $\sstar_{T-s}$ supplies the denoising correction.
\end{example}

In practice the reverse SDE is run in discrete time, and this discretization is
exactly DDPM\@.  One fixes a grid in the sampling time $s$, replaces the true
score $\sstar_{T-s}$ by the learned model $\score_{T-s}$, and takes
Euler--Maruyama steps of \eqref{eq:reverse-sde}: each step nudges the current
sample along the denoising drift and then adds fresh Gaussian noise of the
appropriate variance.  This is the DDPM sampler of Ho, Jain, and
Abbeel~\cite{ho2020ddpm}, the stochastic counterpart of the deterministic DDIM
update we will read off from the probability-flow ODE in the next subsection.
The detailed Gaussian transition kernels and the resulting discretization error
are taken up in Section~\ref{sec:discretizing}.

\subsection{Probability flow ODE}

There is a deterministic ODE whose one-time marginals are the same as those of
the forward SDE \eqref{eq:forward-sde}.  This ODE is called the
\emph{probability flow ODE}. The probability flow ODE replaces
Brownian randomness by a transport field that has the same effect on
marginal densities. Define the velocity field
\begin{equation}\label{eq:pf-velocity}
    v_t(x)
    =
    f_t(x)-\frac12 g_t^2\sstar_t(x)
    =
    f_t(x)-\frac12 g_t^2\nabla\log p_t(x).
\end{equation}
The probability flow ODE is
\begin{equation}\label{eq:pf-ode}
    \frac{\dd X_t}{\dd t}=v_t(X_t).
\end{equation}
If $X_0\sim p_0$, then the solution of \eqref{eq:pf-ode} has marginal density
$p_t$ at every time $t$ for which the ODE is well posed.

\begin{proof}[Derivation from the Fokker--Planck equation]
The forward SDE \eqref{eq:forward-sde} has Fokker--Planck equation
\begin{equation}\label{eq:forward-fp-score-section}
    \partial_t p_t
    =
    -\nabla\cdot(f_tp_t)
    +\frac12 g_t^2\Delta p_t.
\end{equation}
Use the identity
\[
    \Delta p_t
    =
    \nabla\cdot(\nabla p_t)
    =
    \nabla\cdot(p_t\nabla\log p_t)
    =
    \nabla\cdot(p_t\sstar_t).
\]
Then \eqref{eq:forward-fp-score-section} becomes
\[
    \partial_t p_t
    =
    -\nabla\cdot(f_tp_t)
    +\frac12 g_t^2\nabla\cdot(p_t\sstar_t)
    =
    -\nabla\cdot\left[
    \left(f_t-\frac12g_t^2\sstar_t\right)p_t
    \right].
\]
This is exactly the continuity equation
\begin{equation}\label{eq:continuity-pf}
    \partial_t p_t+\nabla\cdot(p_tv_t)=0
\end{equation}
for the density transported by the deterministic flow $\dot X_t=v_t(X_t)$.
Since $p_t$ satisfies this equation with initial condition $p_0$, the ODE flow
pushes $p_0$ forward to $p_t$.
\end{proof}

\begin{example}[Variance-exploding probability flow]
For $X_t=X_0+\sqrt t Z$, we have $f_t=0$ and $g_t=1$.  The probability flow
velocity is
\[
    v_t(x)=-\frac12\nabla\log p_t(x).
\]
The score points toward higher density, so $-\frac12\nabla\log p_t$ pushes mass
outward and reproduces the smoothing effect of the variance-exploding noising
process.  To generate samples, one integrates the same ODE backward in time,
which reverses this velocity.
\end{example}

\begin{example}[Variance-preserving Ornstein--Uhlenbeck flow]
For
\[
    \dd X_t=-\frac12X_t\dd t+\dd B_t,
\]
the probability flow velocity is
\[
    v_t(x)=-\frac12x-\frac12\nabla\log p_t(x).
\]
The first term is the deterministic OU contraction toward the origin.  The
second term is the transport representation of the Brownian smoothing.
\end{example}

The reverse SDE and
the probability flow ODE are two different machines that produce the same
snapshots when the score is exact.  The SDE machine keeps injecting randomness;
the ODE machine deterministically transports particles (starting from random initial data).  Since many numerical
and control questions depend on paths, not only snapshots, one should not
freely replace one machine by the other without checking what quantity is being
analyzed.

For sampling, we use the same reverse-time convention as in
\eqref{eq:reverse-sde}: set $s=T-t$, start from a high-noise draw with law
$p_T$, and integrate from $s=0$ to $s=T$.  The reverse-time probability-flow
sampler is the ODE
\[
    \frac{\dd Z^{\leftarrow}_s}{\dd s}
    =
    -v_{T-s}(Z^{\leftarrow}_s)
    =
    -f_{T-s}(Z^{\leftarrow}_s)
    +\frac12 g_{T-s}^2\sstar_{T-s}(Z^{\leftarrow}_s),
    \qquad Z^{\leftarrow}_0\sim p_T.
\]
By the same continuity-equation calculation, $Z^{\leftarrow}_s$ has law $p_{T-s}$
when the score is exact.
Compare this with the reverse SDE drift in \eqref{eq:reverse-sde}: $ -f_{T-s}+g_{T-s}^2\sstar_{T-s}$.
The DDIM sampler of Song, Meng, and Ermon~\cite{song2021ddim} is the
standard sampler on this probability-flow side: it uses the same trained
denoising model as DDPM, but follows the deterministic update suggested by the
ODE viewpoint.

The deterministic nature of the ODE has an important consequence for
likelihoods.  Along a solution of $\dot X_t=v_t(X_t)$, the continuity equation
\eqref{eq:continuity-pf} implies
\begin{equation}\label{eq:instantaneous-change}
    \frac{\dd}{\dd t}\log p_t(X_t)
    =
    -\nabla\cdot v_t(X_t).
\end{equation}
Indeed,
\[
    \frac{\dd}{\dd t}\log p_t(X_t)
    =
    \partial_t\log p_t(X_t)+
    \ip{v_t(X_t)}{\nabla\log p_t(X_t)},
\]
and dividing \eqref{eq:continuity-pf} by $p_t$ gives
\[
    \partial_t\log p_t+\ip{v_t}{\nabla\log p_t}
    =
    -\nabla\cdot v_t.
\]
Therefore, if the map $X_0\mapsto X_T$ is obtained by integrating the
probability flow ODE forward,
\[
    \log p_0(X_0)
    =
    \log p_T(X_T)
    +
    \int_0^T \nabla\cdot v_t(X_t)\dd t.
\]
This is the continuous normalizing-flow identity used to compute likelihoods
from probability flow ODEs.

\begin{remark}
In practice the true score $\sstar_t$ is replaced by a learned score
$\score_t$.  Then neither the implemented reverse SDE nor the implemented
probability flow ODE has exactly the target marginals.  Their final sampling
error comes from score error together with numerical discretization error:
time-stepping error for the SDE, and ODE integration error for the probability
flow.  For likelihood computation with the probability flow ODE, there is one
more numerical issue: one must also approximate the divergence
$\nabla\cdot v_t$, often by automatic differentiation or stochastic trace
estimators.
\end{remark}

\section{Stochastic Localization and Polchinski Flow}
\label{sec:stochastic-localization-polchinski}

The previous section introduced diffusion models through Gaussian noising.  We
started from data, followed the noised marginals $p_t$, learned the score
$\nabla\log p_t$, and used that score to run either a reverse SDE or a
probability-flow ODE\@.  This section looks at the same Gaussian channel from two
complementary angles: stochastic localization and Polchinski flow.  All three
viewpoints start from a clean random variable and the noisy observation (here and below we take the variance-exploding one)
\[
    X_t=X_\star+\sqrt{t}Z,
\]
but they organize the information in different ways.  Diffusion models
emphasize reverse-time sampling.  Stochastic localization asks a Bayesian
question: as the noisy observation becomes more precise, how does the posterior
law of the hidden signal $X_\star$ evolve?  Polchinski flow asks a
density-level question: as the noise level changes, how do the smoothed density
$p_t$ and the effective potential $U_t=-\log p_t$ evolve?

These two changes of perspective are useful because they expose structure that
is less visible from the reverse SDE alone.  In stochastic localization, the
score appears as a posterior denoising correction.  In Polchinski flow, the
score is the negative gradient of a coarse-grained energy landscape.  Both
viewpoints lead naturally to quantitative tools, especially covariance
identities, that will be useful later.

We begin by treating stochastic localization as an observation model for the
same Gaussian channel.  This lets us compare diffusion time with localization
precision, derive the posterior law, and connect its mean to the score.  We
then record the martingale structure that makes localization useful.  Finally,
we return to the density $p_t$ itself and write the corresponding
Polchinski flow for the effective potential $U_t$.

\subsection{Stochastic localization}
\label{subsec:stochastic-localization}

Let $X_\star\sim\mu$ be the signal, or clean data point.  Here we write the law
as $\mu$ rather than $\pdata$, to emphasize that the construction applies to a
general distribution and not only to the data law.  Start with the
Gaussian smoothing channel
\[
    X_t=X_\star+\sqrt{t}Z,\qquad Z\sim\normal{0}{\Id}.
\]
Here $t$ is the noise variance, and $p_t=\Law(X_t)$ is the Gaussian
convolution of the data law with covariance $t\Id$.  As $t$ increases,
the density becomes smoother; as $t$ decreases, the channel approaches the
original data distribution.

Stochastic localization uses the same Gaussian channel, but parametrizes it by
precision rather than variance.  Observe the same unknown $X_\star$ through
the continuous-time Gaussian observation process
\begin{equation}\label{eq:sl-observation}
    \dd Y_u=X_\star\dd u+\dd W_u,\qquad Y_0=0.
\end{equation}
Here $W_u$ is Brownian motion independent of $X_\star$.  Conditional on
$X_\star=x$, we have $Y_u\sim\normal{ux}{u\Id}$, and thus after a rescaling:
\[
    \bar Y_u:=u^{-1}Y_u\sim\normal{x}{u^{-1}\Id}.
\]
Hence the localization time $u$ corresponds to the diffusion noise variance
$t=u^{-1}$:
\[
    \bar Y_u=X_\star+\sqrt{t}Z,
    \qquad t=\frac1u.
\]
The same noised observation can therefore be indexed either by variance
$t$ or by precision $u$.  Diffusion notation emphasizes smoothing as
$t$ increases.  Localization notation emphasizes Bayesian inference as
$u$ increases and the observation becomes more informative.  This precision
parametrization is useful because the posterior mean and covariance then obey
simple martingale identities, as we will discuss below.

We now write this posterior law explicitly.  At precision $u$, the
observation is $Y_u$, and the basic object is the conditional law of the hidden
signal $X_\star$.  Let
$\mu_u(\cdot \mid y)=\Law(X_\star\mid Y_u = y)$ be the posterior law after
observing $Y_u = y$.  Bayes' rule gives
\begin{equation}\label{eq:sl-posterior}
    \mu_{u}(\dd x \mid Y_u = y)
    =
    \frac{1}{Z_u(y)}
    \exp\left\{\ip{y}{x}-\frac{u}{2}\norm{x}^2\right\}\mu(\dd x).
\end{equation}
The normalizing constant is
\[
    Z_u(y)=
    \int
    \exp\left\{\ip{y}{x}-\frac{u}{2}\norm{x}^2\right\}\mu(\dd x).
\]
This is why the method is called localization: the posterior is the original
measure tilted by a random linear field and penalized by a growing quadratic
function.
As $u$ grows, the posterior becomes increasingly concentrated near the hidden
signal.

\begin{proof}[Derivation of \eqref{eq:sl-posterior}]
Given $X_\star=x$, the observation $Y_u$ has density proportional to
\[
    \exp\left(-\frac{\norm{y-ux}^2}{2u}\right).
\]
Expanding the square,
\[
    -\frac{\norm{y-ux}^2}{2u}
    =
    -\frac{\norm{y}^2}{2u}
    +\ip{y}{x}
    -\frac{u}{2}\norm{x}^2.
\]
The first term is independent of $x$ and is absorbed into the normalizing
constant.  Multiplying by the prior measure $\mu(\dd x)$ gives
\eqref{eq:sl-posterior}.
\end{proof}

Now put the random observation back in and write
\[
    \mu_u=\mu_u(\cdot\mid Y_u),
    \qquad
    M_u=\int x\,\mu_u(\dd x).
\]
The original stochastic-localization viewpoint is that the random measure
$(\mu_u)_{u\geq0}$ itself evolves by a measure-valued SDE, which we now derive.  Let
$\cF_u^Y=\sigma(Y_r:0\leq r\leq u)$ and define the innovation process
\[
    I_u=Y_u-\int_0^u M_s\dd s,
    \qquad
    \dd I_u=\dd Y_u-M_u\dd u.
\]
This subtracts the part of the next observation that is already predictable
from the current posterior.  Intuitively, $I_u$ records only the ``surprise'' in
the observation stream: after conditioning on $\cF_u^Y$, the drift of
$\dd Y_u$ is $M_u\dd u$, so subtracting it leaves an increment with no
predictable component and the same accumulated covariance as the original
Brownian noise.  More formally, one can check that $I_u$ is a continuous
$\cF_u^Y$-martingale whose accumulated covariance over $[0,u]$ is $u\Id$;
Levy's characterization then identifies it as a Brownian motion with respect
to $\cF_u^Y$.  The posterior law satisfies
\begin{equation}\label{eq:sl-measure-sde}
    \dd \mu_u(\dd x)
    =
    \ip{x-M_u}{\dd I_u}\,\mu_u(\dd x).
\end{equation}
Equivalently, for every bounded test function $f$,
\[
    \dd\int f(x)\,\mu_u(\dd x)
    =
    \ip{\int f(x)(x-M_u)\,\mu_u(\dd x)}{\dd I_u}.
\]
The same identity applies componentwise to vector-valued test functions when
the relevant moments are finite.
In particular, for every Borel set $A$, the process
$u\mapsto\mu_u(A)=\Pr(X_\star\in A\mid\cF_u^Y)$ is a martingale:
\[
    \E[\mu_v(A)\mid\cF_u^Y]=\mu_u(A),
    \qquad v\geq u.
\]
Thus, in this formulation, stochastic localization is a measure-valued
martingale whose random tilt concentrates the prior while preserving
conditional expectations.

\begin{proof}[Proof of \eqref{eq:sl-measure-sde}]
It is enough to compute the differential of the normalized posterior weight.
From \eqref{eq:sl-posterior}, along the random observation path,
\[
    \frac{\mu_u(\dd x)}{\mu(\dd x)}
    =
    \frac{1}{Z_u}
    \exp\left\{\ip{Y_u}{x}-\frac{u}{2}\norm{x}^2\right\}.
\]
First, It\^o's formula gives
\[
    \begin{aligned}
    \dd
    \exp\left\{\ip{Y_u}{x}-\frac{u}{2}\norm{x}^2\right\}
    &=
    \exp\left\{\ip{Y_u}{x}-\frac{u}{2}\norm{x}^2\right\}
    \left(\ip{x}{\dd Y_u}-\frac12\norm{x}^2\dd u\right)
    \\
    &\qquad
    +
    \frac12
    \exp\left\{\ip{Y_u}{x}-\frac{u}{2}\norm{x}^2\right\}
    \norm{x}^2\dd u
    \\
    &=
    \exp\left\{\ip{Y_u}{x}-\frac{u}{2}\norm{x}^2\right\}
    \ip{x}{\dd Y_u}.
    \end{aligned}
\]
The middle line contains the It\^o correction; it cancels the drift from
the term $-\frac u2\norm{x}^2$ in the exponent.  Integrating this identity
in $x$ gives
\[
    \frac{\dd Z_u}{Z_u}=\ip{M_u}{\dd Y_u},
    \qquad
    \dd\log Z_u=\ip{M_u}{\dd Y_u}-\frac12\norm{M_u}^2\dd u.
\]
Therefore the log-density of $\mu_u$ with respect to $\mu$ satisfies
\[
    \dd\log\frac{\mu_u(\dd x)}{\mu(\dd x)}
    =
    \ip{x-M_u}{\dd Y_u}
    -
    \frac12\bigl(\norm{x}^2-\norm{M_u}^2\bigr)\dd u.
\]
Applying It\^o's formula once more, now to the exponential of this
log-density, gives
\[
    \frac{\dd\mu_u(\dd x)}{\mu_u(\dd x)}
    =
    \ip{x-M_u}{\dd Y_u}
    -
    \ip{x-M_u}{M_u}\dd u
    =
    \ip{x-M_u}{\dd I_u}.
\]
This is exactly \eqref{eq:sl-measure-sde}; integrating against a bounded test
function $f$ gives the weak form.
\end{proof}

\subsection{Posterior means and martingale structure}

The previous subsection constructed the posterior law and its measure-valued
martingale equation.  We now pass to its first two moments.  First we freeze
the observation and connect the posterior mean to the diffusion-model score;
then we put the random observation back in and record the martingale dynamics.

For a deterministic observation $y$, define the posterior mean and covariance
\[
    m_u(y)=\E[X_\star\mid Y_u=y],
    \qquad
    \Sigma_u(y)=\Cov(X_\star\mid Y_u=y).
\]
The posterior mean is exactly a denoiser.  Indeed, since
$\bar Y_u=Y_u/u=X_\star+u^{-1/2}Z$, a deterministic observation $y$
corresponds to the noised sample $y/u$, and Tweedie's identity
\eqref{eq:continuous-tweedie} gives
\begin{equation}\label{eq:sl-tweedie}
    m_u(y)
    =
    \frac{y}{u}
    +
    \frac{1}{u}
    \nabla\log p_{1/u}\!\left(\frac{y}{u}\right),
\end{equation}
where $p_t$ is the density of $X_\star+\sqrt{t}Z$.  Equivalently,
\[
    \nabla\log p_{1/u}(y/u)
    =
    u\bigl(m_u(y)-y/u\bigr).
\]
Thus learning a score is the same problem as learning the
localization posterior mean.

\begin{proof}[Specializing Tweedie to the localization channel]
The rescaled observation $\bar Y_u=X_\star+u^{-1/2}Z$ is the forward channel
$X_t=X_\star+\sqrt{t}Z$ at time $t=1/u$, in the variance-exploding convention
$a_t=1$, $\sigma_t^2=t=1/u$.  Thus the continuous-time Tweedie identity
\eqref{eq:continuous-tweedie} reads, in the current notation,
\[
    \nabla\log p_{1/u}(y/u)
    =
    u\,\E[X_\star-y/u\mid Y_u=y]
    =
    u\bigl(m_u(y)-y/u\bigr),
\]
where we used $\E[X_\star\mid Y_u=y]=m_u(y)$.  Solving for $m_u(y)$ then yields
\eqref{eq:sl-tweedie}.
\end{proof}

The covariance, in turn, is the linear response of this denoiser:
differentiating the posterior tilt \eqref{eq:sl-posterior} with respect to the
observation gives
\begin{equation}\label{eq:posterior-mean-jacobian}
    \nabla_y m_u(y)=\Sigma_u(y).
\end{equation}
Combining this with Tweedie's identity \eqref{eq:sl-tweedie}, and writing
$t=1/u$, gives the score-Jacobian identity
\begin{equation}\label{eq:score-hessian-covariance}
    \nabla \sstar_t(x)
    =
    t^{-2}\Sigma_{1/t}(x/t)
    -
    t^{-1}\Id.
\end{equation}
Up to this time change, then, the Hessian of the noised log-density is precisely
the posterior covariance of the localization process.

Now return to the random localization path. We write
$M_u$ and $\Sigma_u$ below for the random evaluations
$m_u(Y_u)$ and $\Sigma_u(Y_u)$.  The measure-valued equation
\eqref{eq:sl-measure-sde} then turns the fixed-observation posterior moments
into stochastic dynamics.

Start with the mean.  Applying the weak form of
\eqref{eq:sl-measure-sde} componentwise to the test function $f(x)=x$ gives,
under standard integrability assumptions,
\begin{equation}\label{eq:sl-mean-sde}
    \dd M_u
    =
    \left(\int x(x-M_u)^\top \mu_u(\dd x)\right)\dd I_u
    =
    \Sigma_u\,\dd I_u.
\end{equation}
Thus the mean moves only in response to the innovation Brownian motion, and
the size and direction of that motion are governed by the current posterior
covariance.  In particular, the posterior mean is a martingale: before seeing
the next small piece of data, the current posterior mean is the best prediction
of the next posterior mean.

\begin{proof}[Why the posterior mean is a martingale]
First note that $M_u$, although defined by conditioning on the single
observation $Y_u$, is unchanged if we condition on the entire history
$\cF_u^Y=\sigma(Y_r:0\leq r\leq u)$.  The reason is that $Y_u$ is a sufficient
statistic for $X_\star$: by Girsanov's theorem
(Appendix~\ref{app:ito-girsanov}), the likelihood of the path
$(Y_r)_{0\leq r\leq u}$ under the signal value $X_\star=x$, relative to the Brownian motion, is
\[
    \exp\!\left(\int_0^u\ip{x}{\dd Y_r}-\frac{u}{2}\norm{x}^2\right)
    =
    \exp\!\left(\ip{x}{Y_u}-\frac{u}{2}\norm{x}^2\right),
\]
which depends on the path only through the endpoint $Y_u$.  This is exactly the
tilt appearing in \eqref{eq:sl-posterior}, so the posterior given the whole
history equals the posterior given $Y_u$, and in particular
\[
    \E[X_\star\mid\cF_u^Y]=\E[X_\star\mid Y_u]=M_u.
\]
The martingale property now follows from the tower property.  For $v\geq u$,
since $\cF_u^Y\subseteq\cF_v^Y$,
\[
    \E[M_v\mid \cF_u^Y]
    =
    \E[\E[X_\star\mid \cF_v^Y]\mid \cF_u^Y]
    =
    \E[X_\star\mid \cF_u^Y]
    =
    M_u. \qedhere
\]
\end{proof}

The covariance, which set the size of these mean fluctuations, has an evolution
of its own.  Unlike the mean, it carries a drift, and that drift is strictly
dissipative: in one dimension,
\[
    \frac{\dd}{\dd u}\E[\Sigma_u]=-\E[\Sigma_u^2]\leq0,
\]
and in multiple dimensions the trace satisfies
\begin{equation}\label{eq:localization-covariance-dissipation}
    \frac{\dd}{\dd u}\E[\Tr\Sigma_u]
    =
    -\E[\Tr(\Sigma_u^2)]\leq0.
\end{equation}
Integrate \eqref{eq:localization-covariance-dissipation} over
$u\in[0,\infty)$: the fundamental theorem of calculus gives
\[
    \int_0^\infty\E\Tr(\Sigma_u^2)\,\dd u
    =
    \E\Tr\Sigma_0-\lim_{u\to\infty}\E\Tr\Sigma_u
    \leq
    \E\Tr\Sigma_0,
\]
the inequality because $\Tr\Sigma_u\geq0$.  At $u=0$ no observation has yet been
made, so $\Sigma_0=\Cov(X_\star)$; hence if $\Cov(X_\star)\preceq\Id$,
\begin{equation}\label{eq:localization-hessian-budget}
    \int_0^\infty
    \E\Tr(\Sigma_u^2)\,\dd u
    \leq
    \E\Tr\Sigma_0
    \leq d,
\end{equation}
so the total squared covariance accumulated along the entire path is bounded by
the dimension.  This is the geometric content of localization: the random
posterior measure steadily concentrates, while its mean drifts by martingale
increments whose size is set by the uncertainty that remains.

\begin{proof}[One-dimensional covariance calculation]
In one dimension the posterior variance splits into a second moment and the
square of the mean,
\[
    \Sigma_u=\E[X_\star^2\mid Y_u]-M_u^2,
\]
and we track the two pieces separately.  For the second moment, the weak form
of \eqref{eq:sl-measure-sde} with test function $f(x)=x^2$ gives
\[
    \dd\,\E[X_\star^2\mid Y_u]
    =
    \Bigl(\int x^2(x-M_u)\,\mu_u(\dd x)\Bigr)\dd I_u
    =
    \Bigl(\E[X_\star^3\mid Y_u]-M_u\E[X_\star^2\mid Y_u]\Bigr)\dd I_u,
\]
where we used $\E[X_\star\mid Y_u]=M_u$.  For the squared mean,
$\dd M_u=\Sigma_u\dd I_u$ and It\^o's formula give
\[
    \dd(M_u^2)=2M_u\Sigma_u\,\dd I_u+\Sigma_u^2\,\dd u.
\]
Subtracting the two identities,
\[
    \dd\Sigma_u
    =
    \Bigl(\E[X_\star^3\mid Y_u]-M_u\E[X_\star^2\mid Y_u]-2M_u\Sigma_u\Bigr)\dd I_u
    -
    \Sigma_u^2\,\dd u.
\]
The $\dd I_u$ term is a martingale increment and has mean zero, so taking
expectations leaves only the drift,
\[
    \frac{\dd}{\dd u}\E[\Sigma_u]=-\E[\Sigma_u^2].
\]
In $d$ dimensions the same computation, now with the matrix-valued test
function $f(x)=xx^\top$, gives the covariance SDE, in components,
\begin{equation}\label{eq:localization-covariance-sde}
    \dd(\Sigma_u)_{ij}
    =
    \sum_k
    \E\bigl[(X_\star-M_u)_i(X_\star-M_u)_j(X_\star-M_u)_k\mid Y_u\bigr]\,(\dd I_u)_k
    -
    (\Sigma_u^2)_{ij}\,\dd u.
\end{equation}
The first term is a mean-zero martingale increment, so taking traces and
expectations recovers \eqref{eq:localization-covariance-dissipation},
$\frac{\dd}{\dd u}\E[\Tr\Sigma_u]=-\E[\Tr(\Sigma_u^2)]$.
\end{proof}

This martingale-and-covariance structure is what makes stochastic localization
useful as an analytical tool.  Rather than bounding the original
high-dimensional measure directly, one embeds it in a random path of
progressively tilted posteriors and argues along the path, where the mean is a
martingale and the covariance dissipates.  Many static questions thereby reduce
to estimating quantities averaged over the localization path, with the
covariance identities
\eqref{eq:localization-covariance-dissipation}--\eqref{eq:localization-hessian-budget}
supplying the bookkeeping.

The main early applications of stochastic localization~\cite{eldan2013thinshell}
were in high-dimensional geometry.  For an isotropic log-concave measure, the
Kannan--Lov\'asz--Simonovits (KLS) conjecture asks, in one formulation, for a
dimension-free lower bound on the Cheeger isoperimetric constant.  This problem
is closely tied to spectral gaps, Poincar\'e inequalities, thin-shell estimates,
and concentration of measure.
Eldan introduced stochastic localization to relate thin-shell and
spectral-gap/KLS bounds up to logarithmic factors~\cite{eldan2013thinshell}.
Lee and Vempala developed the method further for isoperimetry, concentration,
and mixing, improving the known KLS-type bounds for isotropic log-concave
measures~\cite{lee2017eldans}.  Later work of Chen obtained an almost constant
lower bound for the KLS isoperimetric coefficient using related localization
techniques~\cite{chen2021kls}.  Chen and Eldan also made the sampling connection
explicit: their localization-schemes framework associates Markov chains to
localized martingales of measures and proves mixing bounds by analyzing the
localization process~\cite{chen2022localizationschemes}.  For our purposes, the
lesson is that stochastic localization converts global geometric or sampling
questions into estimates on the random posterior covariance process.  The same
covariance-budget viewpoint is what later reappears in analyses of diffusion
models~\cite{montanari2023sampling,benton2024nearlydlinear}.

\subsection{Polchinski flow for the effective potential}

The localization picture describes the Gaussian channel through posterior
quantities.  Polchinski flow describes the same channel at the level of the
noised marginals.  In diffusion-model language, this is the variance-exploding
forward process from the previous section:
\[
    X_t=X_0+\sqrt{t}Z,
\]
whose marginal density is the Gaussian-smoothed density
\[
    p_t=p_0*\normal{0}{t\Id}.
\]
It solves the heat equation
\begin{equation}\label{eq:heat-flow-density}
    \partial_t p_t=\frac12\Delta p_t.
\end{equation}
Define the effective potential
\[
    U_t(x)=-\log p_t(x).
\]
The diffusion-model score at noise level $t$ is therefore
$\sstar_t=\nabla\log p_t=-\nabla U_t$.  Thus Polchinski flow is not a
new sampling process; it is a way to track how the same score field evolves as
the forward diffusion smooths the data distribution.
The heat equation becomes the nonlinear PDE
\begin{equation}\label{eq:polchinski-finite}
    \partial_t U_t
    =
    \frac12\Delta U_t
    -
    \frac12\norm{\nabla U_t}^2.
\end{equation}
This is the \emph{finite-dimensional
Polchinski equation}.  It is the isotropic Gaussian-convolution version of the
Polchinski renormalization
flow~\cite{polchinski1984renormalization,bauerschmidt2024polchinski}.
The density formulation is linear, but the effective potential formulation is
nonlinear because logarithms turn smoothing into a viscous Hamilton--Jacobi
equation.

\begin{proof}
Since $p_t=e^{-U_t}$,
\[
    \Delta p_t
    =
    \Delta(e^{-U_t})
    =
    e^{-U_t}\bigl(\norm{\nabla U_t}^2-\Delta U_t\bigr).
\]
Using \eqref{eq:heat-flow-density},
\begin{equation*}
    \partial_t U_t
    =
    -\frac{\partial_t p_t}{p_t}
    =
    -\frac12\frac{\Delta p_t}{p_t}
    =
    \frac12\Delta U_t-\frac12\norm{\nabla U_t}^2. \qedhere
\end{equation*}
\end{proof}

Because the score is the negative gradient of the effective potential,
\[
    \sstar_t(x)=\nabla\log p_t(x)=-\nabla U_t(x),
\]
the score satisfies the viscous Burgers equation
\begin{equation}\label{eq:burgers-score}
    \partial_t \sstar_t
    =
    \frac12\Delta \sstar_t
    +
    \nabla\left(\frac12\norm{\sstar_t}^2\right).
\end{equation}
Diffusion models are usually trained by score matching, rather than solving the Burgers equation.  The equation
nevertheless explains why regularity of the score improves at positive noise
levels and why small-noise estimates are delicate: as $t\downarrow0$, the
smoothing that controls derivatives disappears.

The Polchinski equation \eqref{eq:polchinski-finite} is useful because it shows what forward noising does to the energy landscape.  Sharp
wells, ridges, and nonsmooth features of $U_0=-\log p_0$ are averaged out by
convolution.  At positive noise, $U_t$ is smoother and its gradient is better
behaved.  The reverse model is trying to follow this family of smoothed
landscapes backward, where the geometry becomes progressively sharper.

This is the renormalization interpretation of diffusion models.  Historically,
renormalization entered statistical physics through the idea that one should
describe a system at a chosen resolution and then track how the effective
description changes when microscopic degrees of freedom are averaged out.
Kadanoff's block-spin
scaling picture~\cite{kadanoff1966scaling} gave the physical image of
coarse-graining near criticality.  Wilson's renormalization group made this
image into a systematic flow of effective theories and explained universality
at critical points~\cite{wilson1971rg1,wilson1971rg2}; the review of
Wilson and Kogut~\cite{wilson1974renormalization} is the classical reference.
Polchinski's exact renormalization group equation~\cite{polchinski1984renormalization}
then expressed this flow directly at the level of effective Lagrangians.  The
finite-dimensional equation \eqref{eq:polchinski-finite} is the diffusion-model
shadow of the same idea.

Renormalization group language says that forward noising integrates out
microscopic information.  In image models this statement is metaphorical but
useful: the data distribution is gradually blurred, high-frequency details
become harder to distinguish, and the effective potential $U_t$ describes
the log-density of the coarse-grained law.  In field-theoretic applications the
same idea can be literal: one chooses a covariance schedule that removes
degrees of freedom by scale, and the corresponding effective action follows a
Polchinski-type equation.
In the work of Bauerschmidt and collaborators, this flow is used precisely in
this rigorous sense: it is an evolution equation for scale-dependent effective
potentials, so estimates along the flow turn renormalization into analytic
control of stochastic dynamics and Gibbs measures~\cite{bauerschmidt2024polchinski}.

\subsection{A dictionary}

The same Gaussian smoothing operation can now be summarized in three languages:
diffusion dynamics, posterior localization, and renormalization by Gaussian
coarse-graining.
\begin{center}
\small
\begin{tabular}{p{0.30\textwidth}|p{0.30\textwidth}|p{0.30\textwidth}}
Diffusion-model view & Stochastic-localization view & Polchinski/RG view\\
\hline
Forward noising $X_t=X_0+\sqrt{t}Z$ &
Observation $\dd Y_u=X_\star\dd u+\dd W_u$ &
Coarse-graining by Gaussian convolution\\
Noised law $p_t$ &
Posterior law $\mu_u(\cdot\mid Y_u)$ with $t=1/u$ &
Effective potential $U_t=-\log p_t$\\
Score $\sstar_t=\nabla\log p_t$ &
Posterior mean $M_u=\E[X_\star\mid Y_u]$ &
Action gradient $-\nabla U_t$\\
Tweedie denoiser $x+t\sstar_t(x)$ &
Localization mean $m_u(y)$ &
Inverse RG drift for sampling\\
Reverse SDE or probability flow ODE &
Martingale posterior-mean process &
Hamilton--Jacobi/Polchinski PDE\\
\end{tabular}
\end{center}

\section{Discretizing Continuous Diffusion Models}\label{sec:discretizing}

We now pass from ideal continuous dynamics to algorithms.  A sampler must
choose a finite time grid and replace each exact reverse transition by a
computable transition kernel.  This section sets up the formulas needed
for the numerical analysis.

A useful way to keep the presentation organized is to separate the three
implementation questions.  First, the reverse chain must be initialized from a
simple high-noise law rather than the exact terminal law.  Second, the reverse
dynamics use learned scores or denoisers rather than the true ones.  Third, the
continuous reverse dynamics, or equivalently the exact Bayes reverse kernels on
a grid, must be discretized.  The analysis is built around exactly this
trichotomy: initialization error, statistical score error, and discretization
error.

\subsection{The DDPM grid, scores, and exact reverse kernels}
\label{subsec:ddpm-grid}

Let $X_0\sim\pdata$ on $\R^d$.  Choose a time grid
\[
    0=t_0<t_1<\cdots<t_K=T.
\]
The goal is to replace the continuous noising process by a finite Markov chain with the same prescribed one-time
marginals.  Recall from \eqref{eq:continuous-marginal} that these marginals are
\[
    X_t\mid X_0\sim\normal{a_tX_0}{\sigma_t^2\Id}.
\]
On the grid, we write
\[
   X_k = X_{t_k}, \qquad  a_k=a_{t_k}, \qquad \sigma_k=\sigma_{t_k}.
\]
The grid schedule determines the one-step noising parameters.  We choose a
Gaussian forward transition
\begin{equation}\label{eq:one-step-noising}
    X_{k+1}\mid X_k
    \sim
    \normal{\alpha_k X_k}{\alpha_k^2\eta_k\Id}.
\end{equation}
If
\[
    X_k=a_kX_0+\sigma_k Z_k,\qquad Z_k\sim\normal{0}{\Id},
\]
then \eqref{eq:one-step-noising} is equivalently
\[
    X_{k+1}=\alpha_kX_k+\alpha_k\sqrt{\eta_k}\xi_k,
    \qquad \xi_k\sim\normal{0}{\Id}
\]
and hence
\[
    X_{k+1}\mid X_0
    \sim
    \normal{
    \alpha_k a_kX_0
    }{
    \alpha_k^2(\sigma_k^2+\eta_k)\Id
    }.
\]
Matching this with the target marginal
$X_{k+1}\mid X_0\sim\normal{a_{k+1}X_0}{\sigma_{k+1}^2\Id}$ gives
\[
    \alpha_k=\frac{a_{k+1}}{a_k},
    \qquad
    \eta_k=\frac{\sigma_{k+1}^2}{\alpha_k^2}-\sigma_k^2.
\]
The variance-preserving case has $a_k^2+\sigma_k^2=1$.  The
variance-exploding case has $a_k\equiv1$ and an increasing noise scale $\sigma_k^2=t_k$.

The reverse sampler has to invert one grid step: given a noisy point at level $k+1$, it needs the
conditional law of the previous point at level $k$.  This is where denoising and
score information enter.  Let $p_k$ denote the density of $X_k$.  The true score at time $k$ is
\[
    \sstar_k(x)=\nabla\log p_k(x).
\]

The score determines the posterior mean of the clean sample.  This is the
grid-time specialization of the continuous-time Tweedie identity
\eqref{eq:continuous-tweedie}, with $a_t,\sigma_t$ read off on the grid:
\begin{equation}\label{eq:tweedie}
    \sstar_k(x)
    =
    \frac{1}{\sigma_k^2}
    \E[a_kX_0-X_k\mid X_k=x].
\end{equation}
Equivalently, the grid-time version of the optimal denoiser from
Subsection~\ref{subsec:continuous-tweedie} is
\[
    \dstar_k(x)
    :=
    \E[X_0\mid X_k=x]
    =
    a_k^{-1}\bigl(x+\sigma_k^2\sstar_k(x)\bigr).
\]
This is the grid-time notation corresponding to the posterior mean $m_u(y)$
from stochastic localization: in the variance-exploding normalization
$\sigma_t^2=t$, one has $\dstar_t(x)=x+t\sstar_t(x)=m_{1/t}(x/t)$.
A score model $\score_k$ is equivalently a denoiser model
\[
    \denoiser_k(x)
    :=
    a_k^{-1}\bigl(x+\sigma_k^2\score_k(x)\bigr),
    \qquad
    \denoiser_k(x)-\dstar_k(x)
    =
    a_k^{-1}\sigma_k^2\bigl(\score_k(x)-\sstar_k(x)\bigr).
\]
In the variance-exploding normalization $a_t\equiv1$ and $\sigma_t^2=t$, the
reverse drift can be written as
\[
    \sstar_t(x)
    =
    \frac{\dstar_t(x)-x}{t}.
\]
Thus one may either think of the sampler as using a score field, or as using an
optimal denoiser whose value is frozen over each discrete reverse step.

Before approximating the reverse step with a score model, let us write the
exact Bayes kernel in terms of the marginal density $p_k$.  Let
$P_k^{\leftarrow}(x' \to \cdot)$ be the density of the backward transition
$\Law(X_k\mid X_{k+1}=x')$, and let $P_k^{\to}(x \to x')$ denote the forward
transition density in \eqref{eq:one-step-noising}.  For fixed $x$,
\[
    P_k^{\to}(x \to x')
    =
    (2\pi\alpha_k^2\eta_k)^{-d/2}
    \exp\left(
    -\frac{\norm{x'-\alpha_kx}^2}{2\alpha_k^2\eta_k}
    \right).
\]
Bayes' rule gives
\[
    P_k^{\leftarrow}(x' \to x)
    =
    \frac{p_k(x)P_k^{\to}(x \to x')}{p_{k+1}(x')},
    \qquad
    p_{k+1}(x')=\int p_k(y)P_k^{\to}(y \to x')\,\dd y.
\]
So the exact
reverse kernel is
\begin{equation}\label{eq:backward-kernel}
    P_k^{\leftarrow}(x' \to x)
    \propto
    p_k(x)
    \exp\!\left(
    -\frac{\norm{x-\alpha_k^{-1}x'}^2}{2\eta_k}
    \right).
\end{equation}
Thus each reverse step is a Bayes update: $p_k$ is the prior at the previous
noise level, and the Gaussian factor is the likelihood of the rescaled
observation $\alpha_k^{-1}x'$ under the channel
$\alpha_k^{-1}X_{k+1}=X_k+\sqrt{\eta_k}\xi_k$.  If $p_k$ were known as a
density, \eqref{eq:backward-kernel} could be used directly.  In diffusion
models, the learned object is only the score $\nabla\log p_k$, so the next step
is to approximate this Bayes update locally.

\subsection{The Gaussian reverse update}

A common sampler replaces \eqref{eq:backward-kernel} by a Gaussian transition.
With the notation of this section,
\begin{equation}\label{eq:ddpm-approx}
    \wh P_k^{\leftarrow}(x' \to \cdot)
    =
    \normal{\alpha_k^{-1}x'+\eta_k\alpha_k\score_{k+1}(x')}
    {\eta_k\Id},
\end{equation}
where we use the same step-size parameter $\eta_k$ for the Gaussian variance.
This should be contrasted with the exact reverse kernel
\eqref{eq:backward-kernel}, which need not be Gaussian because of the prior
factor $p_k(x)$.

To see why the mean in \eqref{eq:ddpm-approx} has this form, apply Tweedie's
identity to the single forward step itself.  After rescaling the next
observation,
\[
    \alpha_k^{-1}X_{k+1}
    =
    X_k+\sqrt{\eta_k}\xi_k.
\]
Tweedie's identity for this channel gives
\[
\begin{aligned}
    \E[X_k\mid X_{k+1}=x']
    & =
    \alpha_k^{-1}x'
    +
    \alpha_k \eta_k\nabla\log\bigl( p_{k+1}(x')\bigr) \\
    & =
    \alpha_k^{-1}x'
    +
    \alpha_k \eta_k \sstar_{k+1}(x').
\end{aligned}
\]
Thus the center of \eqref{eq:ddpm-approx} is the exact posterior mean with
$\sstar_{k+1}$ replaced by the learned score $\score_{k+1}$.  The remaining
approximation is to replace the exact posterior covariance by the local Gaussian
scale $\eta_k\Id$: when the prior density $p_k$ is smooth, the likelihood in
\eqref{eq:backward-kernel} confines $X_k$ to a ball of radius
$\sqrt{\eta_k}$ around the observation, up to leading order.

Let us also present a complementary derivation from the numerical SDE viewpoint. Choose a
forward noising SDE on $[0,T]$ whose marginals agree with the channel
$X_t\mid X_0\sim\normal{a_tX_0}{\sigma_t^2\Id}$.  Assume the schedule is smooth,
$a_t>0$, with the usual normalization $a_0=1$ and $\sigma_0=0$, and define
\[
    \lambda_t=\frac{\dot a_t}{a_t},
    \qquad
    f_t(x)=\lambda_t x,
    \qquad
    g_t^2=\frac{\dd}{\dd t}\sigma_t^2-2\lambda_t\sigma_t^2.
\]
Assume the last quantity is non-negative, so that $g_t$ is well defined, then the linear SDE
\[
    \dd X_t=\lambda_tX_t\dd t+g_t\dd B_t
\]
has conditional mean $a_tX_0$ and conditional variance $\sigma_t^2\Id$:
the variance solves $\dot v_t=2\lambda_tv_t+g_t^2$ with $v_0=0$, hence
$v_t=\sigma_t^2$.  Applying \eqref{eq:reverse-sde} to this particular choice of
$f_t$ and $g_t$ gives the reverse SDE
\[
    \dd Y^{\leftarrow}_s
    =
    \left\{
    -\lambda_{T-s}Y^{\leftarrow}_s
    +
    g_{T-s}^2\nabla\log p_{T-s}(Y^{\leftarrow}_s)
    \right\}\dd s
    +
    g_{T-s}\dd B^{\leftarrow}_s.
\]
Now apply Euler--Maruyama to this reverse SDE on the grid.  For the
reverse step from time $t_{k+1}$ to time $t_k$, write
$h_k=t_{k+1}-t_k$ and start from $x'$ at noise level $t_{k+1}$.  Freezing the
coefficients and the score at the beginning of this reverse step, namely at
$t_{k+1}$ in forward time, and using the grid shorthand
$\lambda_{k+1}:=\lambda_{t_{k+1}}$ and $g_{k+1}:=g_{t_{k+1}}$, gives
\[
    x'
    -h_k\lambda_{k+1}x'
    +h_kg_{k+1}^2\score_{k+1}(x')
    +g_{k+1}\sqrt{h_k}\,\xi.
\]
The discrete parameters are exactly the grid versions of the same schedule:
\[
    \alpha_k=\frac{a_{k+1}}{a_k},
    \qquad
    \eta_k=\frac{\sigma_{k+1}^2}{\alpha_k^2}-\sigma_k^2.
\]
A Taylor expansion at $t_{k+1}$ gives
\[
    \alpha_k^{-1}
    =
    1-h_k\lambda_{k+1}+O(h_k^2),
    \qquad
    \alpha_k\eta_k
    =
    h_kg_{k+1}^2+O(h_k^2),
\]
which agrees with the Gaussian reverse update discussed above.
It is worth being explicit about the two step parameters here, since both recur
in the error analysis: $h_k=t_{k+1}-t_k$ is the time increment of the reverse
step, while $\eta_k$ is the variance of the one-step Gaussian kernel
\eqref{eq:one-step-noising}.  In a general schedule they are different
quantities, related by $\alpha_k\eta_k=h_kg_{k+1}^2+O(h_k^2)$, but they coincide,
$\eta_k=h_k$, in the variance-exploding normalization ($\alpha_k=1$,
$g_t\equiv1$) used in Section~\ref{sec:error-analysis}.
The update \eqref{eq:ddpm-approx} can therefore be viewed either as
a local approximation to the exact Bayes kernel or as an Euler--Maruyama
discretization of the reverse SDE associated with the same Gaussian noising
channel.  The discretization analysis asks how much error is introduced by
making this local replacement at every reverse step.

\section{Error Analysis for Diffusion Models}
\label{sec:error-analysis}

The previous section isolated the local numerical question: each step has an
exact Bayes kernel \eqref{eq:backward-kernel}, while a practical sampler uses an
implementable approximation such as \eqref{eq:ddpm-approx}.  We now turn this
local comparison into a global guarantee.  Beyond the early-stopping gap between
the smoothed target and the raw data law, the reverse-chain error splits into
the same three sources isolated in the previous section: how the chain is
initialized, how accurately the score has been learned, and how accurately each
reverse kernel is discretized.

Before getting into the details, it is worth laying out the roadmap of the whole analysis; each piece is made precise in the subsections that
follow.  Throughout, $P_k^{\leftarrow}(x_{k+1} \to \cdot)$ denotes the exact
reverse kernels \eqref{eq:backward-kernel}, and
$\wh P_k^{\leftarrow}(x_{k+1} \to \cdot)$ the implemented reverse kernels of a
sampler.  For each time $k$, the $L^2(p_k)$ score error is
\begin{equation}\label{eq:score-error}
    \epsscorek^2
    =
    \E_{X_k\sim p_k}
    \norm{\score_k(X_k)-\sstar_k(X_k)}^2.
\end{equation}
The implemented reverse chain is initialized at the high-noise endpoint with
error
\[
    \KL(p_K\|\wh p_K)\leq \epsinit^2,
\]
and the accumulated one-step kernel error splits into a pure discretization part
and a score-error part,
\[
    \sum_{k=1}^{K-1}
    \E_{X_{k+1}\sim p_{k+1}}
    \KL\!\left(
        P_k^{\leftarrow}(X_{k+1} \to \cdot)
        \middle\|
        \wh P_k^{\leftarrow}(X_{k+1} \to \cdot)
    \right)
    \leq
    \epsdisc^2
    +
    c\sum_{k=1}^{K-1}\eta_k\eps_{k+1,\mathsf{score}}^2.
\]
Putting these together, the error decomposition we will establish reads
\begin{equation}\label{eq:ddpm-error-decomposition}
    \KL(p_1\|\wh p_1)
    \leq
    \epsinit^2
    +
    \epsdisc^2
    +
    c\sum_{k=1}^{K-1}\eta_k\eps_{k+1,\mathsf{score}}^2,
\end{equation}
with the early-stopping comparison between $p_1$ and $\pdata$ handled separately
as a geometric smoothing estimate, naturally controlled in $W_2$.

This decomposition is mainly a bookkeeping statement.  Instead of comparing only the
final samples, we compare the exact and implemented reverse chains as full
paths.  Data processing then says that the KL between the final marginals is no
larger than the path-space KL\@.  The path-space chain rule breaks this latter
quantity into two pieces: the mismatch between the two initial laws at the
high-noise endpoint, and the sum of the one-step KL errors accumulated along the
reverse chain.

The factor multiplying the score error has a simple origin.  In Gaussian DDPM
or Euler--Maruyama updates, the learned score changes the mean of the one-step
Gaussian kernel by about $\eta_k(\score_{k+1}-\sstar_{k+1})$.  KL between
Gaussians with the same covariance $\eta_k\Id$ divides the squared mean error by
the variance, leaving a contribution of order
$\eta_k\eps_{k+1,\mathsf{score}}^2$.  Thus score errors at times with
larger reverse steps matter more, while errors on very small steps are naturally
downweighted.  Here $\eta_k$ is the one-step variance parameter of
Section~\ref{subsec:ddpm-grid}; in the variance-exploding chain analyzed in the
rest of this section it is exactly the reverse-time step $h_k=t_{k+1}-t_k$, which
is why these weights act as step sizes.

The term $\epsdisc^2$ collects the error that would remain even
with the exact score.  For Euler--Maruyama this is the cost of freezing the score
or denoiser during a step.  The stochastic-localization argument controls that
cost through posterior covariance budgets, and the high-accuracy correction later
reduces it by sampling the local Gaussian tilt more faithfully.  The subsections
below unpack the pieces in this order: early stopping, KL telescoping,
score-error contribution, Euler--Maruyama discretization and its localization
refinement, and finally the high-accuracy correction.

\subsection{Early stopping}

The first issue is the endpoint of the reverse process.  We stop at a small
positive noise level and target the smoothed law $p_1$.  This
early-stopped law is the right object for the KL analysis: path-space chain
rules and Girsanov-type estimates give direct control of $\KL(p_1\|\wh p_1)$
for the sampler output $\wh p_1$.  The raw data distribution may be singular,
supported on a lower-dimensional set, or otherwise mutually singular with any
smooth sampler output, so it cannot generally be compared to $\wh p_1$ in KL.

Thus we keep two error measurements separate.  The early-stopping error is a
geometric smoothing error and is controlled in $W_2$.  The algorithmic error
between $p_1$ and $\wh p_1$ is controlled later in KL.

The $W_2$ estimate is immediate from the forward noising coupling.  In the
grid notation of the previous section, $X_1=a_1X_0+\sigma_1Z$, so coupling
$X_1$ with the same clean sample $X_0$ gives
\begin{equation}\label{eq:early-stop-w2}
    W_2^2(\pdata,p_1)
    \leq
    (1-a_1)^2\E\norm{X_0}^2+\sigma_1^2d.
\end{equation}
Thus early stopping is harmless when $\sigma_1$ and $1-a_1$ are small relative
to the scale of the data.

\begin{proof}[Proof of the early-stopping estimate]
Couple $p_1$ and $\pdata$ by using the same clean data point $X_0$ and the
same Gaussian noise used to form
\[
    X_1=a_1X_0+\sigma_1Z,\qquad Z\sim\normal{0}{\Id}.
\]
Then
\[
    X_1-X_0=(a_1-1)X_0+\sigma_1Z.
\]
Using $\E Z=0$ and independence of $Z$ and $X_0$,
\[
    \E\norm{X_1-X_0}^2
    =
    (1-a_1)^2\E\norm{X_0}^2+\sigma_1^2\E\norm{Z}^2
    =
    (1-a_1)^2\E\norm{X_0}^2+\sigma_1^2d.
\]
Since $W_2^2$ is the infimum over all couplings, it is bounded by the cost
of this particular coupling.
\end{proof}

\subsection{KL telescoping transition error}

The algorithmic comparison in this section is
\[
    \KL(p_1\|\wh p_1),
\]
where $p_1$ is the law produced by the exact reverse chain and $\wh p_1$ is the
law produced by the implemented chain, both stopped at the first positive noise
level.  We control this final KL by first comparing the two reverse chains as
full paths and then projecting those paths to their endpoint.

The reason this helps is simple: a Markov chain path density is a product of an
initial density and transition densities.  Taking a logarithm turns that product
into a sum, and taking expectation gives an additive KL identity.  This is the
basic telescoping mechanism behind the error decomposition: a long reverse
sampling procedure is reduced to many one-step reverse-kernel comparisons.

\begin{lemma}[KL chain rule for reverse Markov kernels]\label{lem:kl-chain}
Let $\mu_KP_{K-1}\cdots P_1$ and $\nu_KQ_{K-1}\cdots Q_1$ be two path laws on
$(X_K,\ldots,X_1)$, where $P_k(x_{k+1} \to \cdot)$ and
$Q_k(x_{k+1} \to \cdot)$ transition from level $k+1$ to level $k$.  Let
$\mu_{k+1}$ be the marginal law of $X_{k+1}$ under the first path law.  Then
\[
    \KL(\mu_KP_{K-1}\cdots P_1\|\nu_KQ_{K-1}\cdots Q_1)
    =
    \KL(\mu_K\|\nu_K)
    +
    \sum_{k=1}^{K-1}
    \E_{X_{k+1}\sim\mu_{k+1}}
    \KL(P_k(X_{k+1} \to \cdot)\|Q_k(X_{k+1} \to \cdot)).
\]
\end{lemma}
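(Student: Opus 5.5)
The plan is to write both path laws as products of densities, take logarithms so that the log-likelihood ratio splits into a sum of one-step terms, and integrate each term against the first path law. Concretely, assume first that $\mu_K\ll\nu_K$ and that $P_k(x_{k+1}\to\cdot)\ll Q_k(x_{k+1}\to\cdot)$ for $\mu_{k+1}$-almost every $x_{k+1}$. Then the Radon--Nikodym derivative of $\mu_KP_{K-1}\cdots P_1$ with respect to $\nu_KQ_{K-1}\cdots Q_1$ at the path $(x_K,\ldots,x_1)$ is the product
\[
    \frac{\dd\mu_K}{\dd\nu_K}(x_K)\prod_{k=1}^{K-1}\frac{\dd P_k(x_{k+1}\to\cdot)}{\dd Q_k(x_{k+1}\to\cdot)}(x_k).
\]
I would verify this by checking it on product (cylinder) sets $A_K\times\cdots\times A_1$, integrating iteratively from $x_K$ down to $x_1$, and extending by the $\pi$--$\lambda$ theorem. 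For this step I would fix measurable versions of the kernel densities, jointly measurable in $(x_{k+1},x_k)$, which is standard for kernels on $\R^d$.

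Next I take the logarithm, which gives $\log\frac{\dd\mu_K}{\dd\nu_K}(X_K)+\sum_k \log\frac{\dd P_k}{\dd Q_k}(X_{k+1};X_k)$, and I take the expectation under the first path law. By the tower property, conditioning the $k$-th term on $X_{k+1}$ leaves the conditional law $P_k(X_{k+1}\to\cdot)$ for $X_k$, so its conditional expectation is exactly $\KL(P_k(X_{k+1}\to\cdot)\|Q_k(X_{k+1}\to\cdot))$. The remaining outer expectation is then over $X_{k+1}\sim\mu_{k+1}$. The first term gives $\KL(\mu_K\|\nu_K)$. Summing these pieces yields the identity.

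The main obstacle is making the linearity of expectation legitimate, since the individual log-ratio terms can have negative parts and some terms can be infinite. I would handle this with the standard fact that each conditional KL is nonnegative, and that the negative part of $\log(\dd\mu/\dd\nu)$ is $\mu$-integrable because $\E_\mu[(\log\frac{\dd\mu}{\dd\nu})^-]\le\E_\nu[\frac{\dd\mu}{\dd\nu}\cdot\frac{1}{e}\cdots]\le 1/e$, using $x(\log x)^-\le 1/e$. So every summand has an integrable negative part, and the sum of expectations is well-defined in $[0,\infty]$ and equals the expectation of the sum.

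Finally, I treat the degenerate cases. If $\mu_K\not\ll\nu_K$, or if the kernels fail absolute continuity on a set of $x_{k+1}$ of positive $\mu_{k+1}$-measure, then the path laws are not absolutely continuous. One sees this by exhibiting a path event of positive $\mu$-mass and zero $\nu$-mass, built from the bad set. In that case both sides equal $+\infty$, with the right side infinite because the corresponding summand is infinite on a set of positive measure. This completes the identity in all cases.
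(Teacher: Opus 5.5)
Your proposal is correct and follows the same route as the paper: factor the path Radon--Nikodym derivative into the initial ratio and the one-step kernel ratios, take logarithms, and condition each transition term on $X_{k+1}$ to obtain the expected one-step KL. The additional care you supply (checking the product density on cylinder sets, using the $1/e$ bound on negative parts to justify linearity in $[0,\infty]$, and handling the non-absolutely-continuous cases) fills in exactly the measure-theoretic details that the paper compresses into its closing remark about replacing densities with Radon--Nikodym derivatives.
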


\begin{proof}
Write the two path densities as
\[
    \mu_K(x_K)\prod_{k=1}^{K-1}P_k(x_{k+1} \to x_k),
    \qquad
    \nu_K(x_K)\prod_{k=1}^{K-1}Q_k(x_{k+1} \to x_k).
\]
The log-ratio is therefore
\[
    \log\frac{\mu_K(x_K)}{\nu_K(x_K)}
    +
    \sum_{k=1}^{K-1}
    \log\frac{P_k(x_{k+1} \to x_k)}{Q_k(x_{k+1} \to x_k)}.
\]
Taking expectation under the first path law gives the initial term
$\KL(\mu_K\|\nu_K)$.  For the $k$th transition term, condition on
$X_{k+1}$.  Under the first path law, the conditional law of $X_k$ given
$X_{k+1}$ is $P_k(X_{k+1} \to \cdot)$, while the law of $X_{k+1}$ itself is
$\mu_{k+1}$.  Thus
\[
    \E
    \log\frac{P_k(X_{k+1} \to X_k)}{Q_k(X_{k+1} \to X_k)}
    =
    \E_{X_{k+1}\sim\mu_{k+1}}
    \KL(P_k(X_{k+1} \to \cdot)\|Q_k(X_{k+1} \to \cdot)).
\]
Summing over $k$ gives the identity.  The general measure-theoretic statement
follows by replacing densities with Radon--Nikodym derivatives.
\end{proof}

\begin{remark}
The marginal KL at the final time is at most the path-space KL by data
processing; see Lemma~\ref{lem:data-processing} in Appendix~\ref{app:ito-girsanov}.
Therefore a path-space calculation gives a valid, often convenient upper bound
for the final sampling error.  This is the path-space substitute for the triangle
inequality that KL lacks: the additivity comes from comparing two full path laws
with factored initial distributions and transition kernels.
\end{remark}

\begin{prop}[From local reverse-kernel error to final error]
\label{prop:local-to-global}
Let the exact reverse chain start from $p_K$ and use the exact reverse kernels
$P_k^{\leftarrow}(x_{k+1} \to \cdot)$ from \eqref{eq:backward-kernel},
$k=1,\ldots,K-1$, producing final law $p_1$.
Let an approximate reverse chain start from $\wh p_K$ and use kernels
$\wh P_k^{\leftarrow}(x_{k+1} \to \cdot)$, producing final law $\wh p_1$.  Then
\[
    \KL(p_1\|\wh p_1)
    \leq
    \KL(p_K\|\wh p_K)
    +
    \sum_{k=1}^{K-1}
    \E_{X_{k+1}\sim p_{k+1}}
    \KL\!\left(
        P_k^{\leftarrow}(X_{k+1} \to \cdot)
        \middle\|
        \wh P_k^{\leftarrow}(X_{k+1} \to \cdot)
    \right).
\]
\end{prop}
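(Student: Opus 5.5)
The plan is to combine Lemma~\ref{lem:kl-chain} with the data-processing inequality, Lemma~\ref{lem:data-processing}. First I will form the two path laws on $(X_K,\ldots,X_1)$. The exact one is $\mu:=p_KP_{K-1}^{\leftarrow}\cdots P_1^{\leftarrow}$, and the implemented one is $\nu:=\wh p_KP_{K-1}^{\leftarrow}\cdots$ with hats, that is, $\nu:=\wh p_K\wh P_{K-1}^{\leftarrow}\cdots\wh P_1^{\leftarrow}$. The projection $T(x_K,\ldots,x_1)=x_1$ pushes $\mu$ to $p_1$ and $\nu$ to $\wh p_1$, by the definitions of the final laws. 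Data processing then gives
\[
    \KL(p_1\|\wh p_1)=\KL(T_\#\mu\|T_\#\nu)\leq\KL(\mu\|\nu).
\]

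Second, I will apply Lemma~\ref{lem:kl-chain} with $\mu_K=p_K$, $\nu_K=\wh p_K$, $P_k=P_k^{\leftarrow}$ and $Q_k=\wh P_k^{\leftarrow}$. This writes $\KL(\mu\|\nu)$ as $\KL(p_K\|\wh p_K)$ plus the sum over $k$ of the expected one-step KLs, each taken under the marginal $\mu_{k+1}$ of $X_{k+1}$ under the exact path law.

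The only point needing care is to identify $\mu_{k+1}=p_{k+1}$. The exact reverse kernels are defined by Bayes' rule \eqref{eq:backward-kernel}, so $p_{k+1}(x')P_k^{\leftarrow}(x'\to x)=p_k(x)P_k^{\to}(x\to x')$. Integrating over $x'$ shows $p_{k+1}P_k^{\leftarrow}=p_k$. By downward induction from $\mu_K=p_K$, every marginal of the exact reverse chain is $\mu_k=p_k$; in particular the final law is indeed $p_1$. Substituting $\mu_{k+1}=p_{k+1}$ into the chain-rule identity and combining it with the data-processing inequality gives the stated bound.

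I do not expect a real obstacle. The only subtlety is measure-theoretic: if some one-step KL is infinite, the bound holds trivially. Otherwise the densities exist and the chain rule applies as stated, since the implemented kernels are Gaussian with full support.
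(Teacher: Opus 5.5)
Your proposal is correct and is essentially the paper's proof: apply the chain rule of Lemma~\ref{lem:kl-chain} to the exact and implemented path laws, then project to the final coordinate using data processing (Lemma~\ref{lem:data-processing}). Your explicit Bayes-rule check that the exact reverse chain has marginals $p_{k+1}$ fills in a step the paper leaves implicit. Your closing appeal to Gaussian implemented kernels is unnecessary and is not assumed in the statement: the chain rule holds for general kernels with Radon--Nikodym derivatives in place of densities.
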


\begin{proof}
Consider the exact path law of $(X_K,\ldots,X_1)$ and the approximate path law
of $(\wh X_K,\ldots,\wh X_1)$.  Apply Lemma~\ref{lem:kl-chain} to these path
laws.
The initial contribution is $\KL(p_K\|\wh p_K)$, and the transition
contribution at reverse step $k$ is
\[
    \E_{X_{k+1}\sim p_{k+1}}
    \KL\!\left(
        P_k^{\leftarrow}(X_{k+1} \to \cdot)
        \middle\|
        \wh P_k^{\leftarrow}(X_{k+1} \to \cdot)
    \right).
\]
Thus the KL divergence between the two full path laws is at most the right-hand
side above.  Projecting a path to its final coordinate is a measurable map, so
data processing for KL gives the claimed bound on $\KL(p_1\|\wh p_1)$.
\end{proof}

This proposition is the path-space part of the error decomposition
\eqref{eq:ddpm-error-decomposition}.  The remaining work is to estimate
the one-step terms in the sum.

\begin{remark}[Putting the metrics together]\label{rem:combine-metrics}
The main estimates above live in their natural metrics: $W_2$ for early
stopping and KL for the reverse-chain error.  If one wants a single weak
observable guarantee, one can introduce the bounded-Lipschitz distance
\[
    \BL(\mu,\nu)
    =
    \sup_{\norm f_\infty\leq1,\;\norm f_{\mathrm{Lip}}\leq1}
    \abs{\E_\mu f-\E_\nu f}.
\]
Then $\BL(\pdata,p_1)\leq W_2(\pdata,p_1)$ by the coupling argument in the
definition of $W_2$: for any $1$-Lipschitz $f$ and any coupling $(X,Y)$ of
$\mu$ and $\nu$,
\[
    \abs{\E f(X)-\E f(Y)}\leq \E\norm{X-Y}
    \leq \sqrt{\E\norm{X-Y}^2},
\]
and then one takes the infimum over couplings.  On the algorithmic side,
\[
    \BL(p_1,\wh p_1)\leq 2\TV(p_1,\wh p_1)
    \leq 2\sqrt{\KL(p_1\|\wh p_1)/2}.
\]
Thus a final $\BL$ statement can be obtained after the native $W_2$ and KL
estimates are proved, but it is not the metric driving either part of the
analysis.
\end{remark}

\subsection{How score error contributes to KL}

Proposition~\ref{prop:local-to-global} reduces the global KL error to
local KL comparisons between the exact reverse kernel $P_k^{\leftarrow}$ and the
implemented kernel $\wh P_k^{\leftarrow}$.  Each local comparison can contain several
sources of error.  One source is numerical: even with the exact score, the
implemented transition may only approximate the Bayes reverse kernel.  Another
source is statistical: the sampler uses a learned score $\score$ in place of
the true score $\sstar$.

In the rest of this section we specialize the DDPM grid of
Section~\ref{subsec:ddpm-grid} to the variance-exploding case
($\alpha_k=1$, $\sigma_k^2=t_k$), now run backward from a large noise level $T$
down to an early-stopping level $\delta>0$:
\[
    \delta=t_1<t_2<\cdots<t_K=T,\qquad h_k=t_{k+1}-t_k.
\]
The reverse step indexed by $k$ moves from noise level $t_{k+1}$ to $t_k$, for
$k=1,\ldots,K-1$, so the chain ends at $t_1=\delta$; the final law written $p_1$
in the earlier subsections is thus the early-stopped law $p_\delta$.  The one-step variance
parameter then reduces to the time increment, $\eta_k=h_k$, and we write $h_k$
from here on to emphasize that the reverse step is a genuine time step of an SDE
discretization.

This subsection isolates the statistical part.  The implemented reverse step is
the Gaussian update \eqref{eq:ddpm-approx} -- equivalently, an Euler--Maruyama
discretization of the reverse SDE -- in which the score enters only through the
mean of the kernel.  A score error therefore becomes a mean error, which we now
quantify.

To keep the notation separate, first fix the Gaussian discretization.  Let
\begin{equation}\label{eq:em-exact-score}
    P_k^{\mathsf{EM}}(x' \to \cdot)
    =
    \normal{x'+h_k\sstar_{k+1}(x')}{h_k\Id}
\end{equation}
be the exact-score Euler kernel, and let
\begin{equation}\label{eq:em-learned-score}
    \wh P_k^{\mathsf{EM}}(x' \to \cdot)
    =
    \normal{x'+h_k\score_{k+1}(x')}{h_k\Id}
\end{equation}
be the learned-score Euler kernel.  These are not the exact Bayes reverse
kernel $P_k^{\leftarrow}$; they are the true-score and learned-score versions of
the same Gaussian approximation \eqref{eq:ddpm-approx}.  The following KL
therefore isolates the statistical score error after the discretization has
been fixed.  For fixed $x'$, applying Lemma~\ref{lem:gaussian-kl} gives
\[
    \KL\!\left(
    P_k^{\mathsf{EM}}(x' \to \cdot)
    \middle\|
    \wh P_k^{\mathsf{EM}}(x' \to \cdot)
    \right)
    =
    \frac{1}{2h_k}
    \norm{h_k
    \bigl(\score_{k+1}(x')-\sstar_{k+1}(x')\bigr)}^2.
\]
Averaging over $X_{k+1}\sim p_{k+1}$ and using the definition of
$\eps_{k+1,\mathsf{score}}^2$ gives
\begin{equation}\label{eq:gaussian-score-step}
    \E_{X_{k+1}\sim p_{k+1}}
    \KL\!\left(
    P_k^{\mathsf{EM}}(X_{k+1} \to \cdot)
    \middle\|
    \wh P_k^{\mathsf{EM}}(X_{k+1} \to \cdot)
    \right)
    =
    \frac{h_k}{2}
    \eps_{k+1,\mathsf{score}}^2,
\end{equation}
which is the score-estimation contribution in the error decomposition
\eqref{eq:ddpm-error-decomposition}, after the discretization choice has been
separated, with $\eta_k=h_k$ and $c=\tfrac12$.

Summing these one-step terms over the reverse chain, the total score
contribution has the form
\begin{equation}\label{eq:score-sum}
    \sum_{k=1}^{K-1}h_k
    \eps_{k+1,\mathsf{score}}^2,
\end{equation}
the discrete analogue of the path-space estimate
\[
    \int_\delta^T
    \E_{X_t\sim p_t}
    \norm{\score_t(X_t)-\sstar_t(X_t)}^2
    \,\dd t.
\]
The weight $h_k$ is what makes this estimate informative: a score model need not
be equally accurate at all times, since an error matters less when the sampler
takes only a small step, while an error during a large reverse step has a larger
effect.

\subsection{Euler--Maruyama}

We now turn to the numerical error of the simplest reverse-time integrator, the
Euler--Maruyama scheme, in the variance-exploding setting fixed at the start of
this section.  Here the initialization term in the error decomposition
\eqref{eq:ddpm-error-decomposition} is explicit.  If the implemented
reverse chain is initialized from $\normal{0}{T\Id}$, then
\begin{equation}\label{eq:ve-init-kl}
    \epsinit^2
    :=
    \KL(p_T\|\normal{0}{T\Id})
    \leq
    \int
    \KL\!\left(\normal{x}{T\Id}\middle\|\normal{0}{T\Id}\right)
    \pdata(\dd x)
    =
    \frac{\E\norm{X_0}^2}{2T}.
\end{equation}
Here we used the representation
$p_T=\int\normal{x}{T\Id}\,\pdata(\dd x)$ and convexity of KL in its first
argument.

The other, and more substantial, contribution is the discretization error
accumulated over the reverse steps, whose size is governed by the step lengths
$h_k$.  Choosing the grid well is therefore the central design question, and the
relevant local scale against which to measure $h_k$ is the current noise level.
Indeed, the reverse drift contains
$\sstar_t=(\dstar_t-x)/t$, so the same absolute step size is much more
aggressive near the early-stopping level than it is at large noise.  A uniform
grid would therefore force all steps to be as small as the lowest noise scale
$\delta$, leading to a poor dependence on $T/\delta$.  The standard
non-uniform-grid remedy is to use a relative control, as in Chen, Lee, and Lu~\cite{chen2023improvedsgm}:
\begin{equation}\label{eq:relative-mesh}
    h_k\leq \bar h\,t_{k+1},
    \qquad
    \bar h\leq \frac12.
\end{equation}
The canonical example is the geometric grid
\begin{equation}\label{eq:geometric-ve-grid}
    t_k=(1-\bar h)^{K-k}T,
    \qquad
    \delta=t_1=(1-\bar h)^{K-1}T.
\end{equation}
For this grid,
\[
    K\asymp \bar h^{-1}\log\frac{T}{\delta},
    \qquad
    \bar h\asymp \frac{\log(T/\delta)}{K}.
\]

With this convention, the exact Bayes reverse kernel is
\begin{equation}\label{eq:ve-backward-kernel}
    P_k^{\leftarrow}(x' \to \dd x)
    \propto_x
    p_k(x)
    \exp\!\left(-\frac{\norm{x-x'}^2}{2h_k}\right)\dd x,
\end{equation}
where $x'$ is the state at noise level $t_{k+1}$.  A first-order Gaussian
approximation freezes the score at the beginning of the reverse step.  This is
the exact-score Euler kernel $P_k^{\mathsf{EM}}$ from
\eqref{eq:em-exact-score}; the learned sampler uses
$\wh P_k^{\mathsf{EM}}$ from \eqref{eq:em-learned-score}, replacing
$\sstar_{k+1}$ by $\score_{k+1}$.
The superscript $\mathsf{EM}$ is just a reminder that the score has been
frozen during the step.  In the general DDPM notation of
\eqref{eq:ddpm-approx}, these formulas are obtained by setting
$\alpha_k=1$ and $\eta_k=h_k$.

There are two local error sources.  The score-estimation part has already been
isolated in the previous subsection: in the present variance-exploding
normalization, \eqref{eq:gaussian-score-step} contributes
$h_k\eps_{k+1,\mathsf{score}}^2$ up to constants.  The new issue in this
subsection is the numerical error caused by Euler--Maruyama itself.  Thus we
first compare the exact reverse kernel $P_k^{\leftarrow}$ with the exact-score
Euler kernel $P_k^{\mathsf{EM}}$.  This is the error made by freezing the true
reverse drift during one time step.

\begin{prop}[Euler--Maruyama discretization error]
\label{prop:em-local-error}
Suppose that, over reverse step $k$, the exact score field is Lipschitz at
scale $L_k$ on the region where the process typically lies.  Then the
exact-score Euler kernel satisfies a one-step estimate of the form
\[
    \E_{X_{k+1}\sim p_{k+1}}
    \KL\!\left(
        P_k^{\leftarrow}(X_{k+1} \to \cdot)
        \middle\|
        P_k^{\mathsf{EM}}(X_{k+1} \to \cdot)
    \right)
    \lesssim
    d\,L_k^2\,h_k^2,
\]
up to lower-order schedule factors.
\end{prop}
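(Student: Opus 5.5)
We compare the exact reverse kernel with the exact-score Euler kernel by lifting both to path space on the reverse time interval and applying Girsanov, exactly as in the one-step ULA estimate \eqref{eq:ula-local-kl}.

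Fix the reverse step from noise level $t_{k+1}$ to $t_k$. In the variance-exploding setting ($f_t=0$, $g_t=1$), the exact reverse SDE \eqref{eq:reverse-sde} over reverse time $s\in[0,h_k]$ is
\[
\dd Y_s=\sstar_{t_{k+1}-s}(Y_s)\dd s+\dd B_s,\qquad Y_0=x'.
\]
Its time-$h_k$ law, averaged over $x'\sim p_{k+1}$, reproduces the joint law of $(X_{k+1},X_k)$. Hence the conditional law of $Y_{h_k}$ given $Y_0=x'$ is $P_k^{\leftarrow}(x'\to\cdot)$, for $p_{k+1}$-a.e.\ $x'$, because the reverse process is a Markov process with the correct two-time marginals. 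The Euler kernel $P_k^{\mathsf{EM}}(x'\to\cdot)$ is the time-$h_k$ law of the frozen-drift process
\[
\dd\bar Y_s=\sstar_{t_{k+1}}(x')\dd s+\dd B_s,\qquad \bar Y_0=x'.
\]

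The argument then has three steps.

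\textbf{Step 1: path-space Girsanov bound.} With the first argument being the exact path (KL taken in the direction $P^{\leftarrow}\|P^{\mathsf{EM}}$), the Girsanov formula of Appendix~\ref{app:ito-girsanov} and data processing (Lemma~\ref{lem:data-processing}) give
\[
\E_{p_{k+1}}\KL\bigl(P_k^{\leftarrow}\|P_k^{\mathsf{EM}}\bigr)\le\frac12\int_0^{h_k}\E\norm{\sstar_{t_{k+1}-s}(Y_s)-\sstar_{t_{k+1}}(Y_0)}^2\dd s,
\]
where $Y$ is the exact reverse process started from $p_{k+1}$. A convenient feature of this direction is that $Y_s\sim p_{t_{k+1}-s}$ exactly, so all expectations are taken under the true marginals.

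\textbf{Step 2: split the drift mismatch.} Write
\[
\sstar_{t_{k+1}-s}(Y_s)-\sstar_{t_{k+1}}(Y_0)=\bigl[\sstar_{t_{k+1}-s}(Y_s)-\sstar_{t_{k+1}-s}(Y_0)\bigr]+\bigl[\sstar_{t_{k+1}-s}(Y_0)-\sstar_{t_{k+1}}(Y_0)\bigr].
\]
The first bracket is bounded by the spatial Lipschitz hypothesis, $L_k\norm{Y_s-Y_0}$. For the displacement,
\[
\E\norm{Y_s-Y_0}^2\lesssim ds+s^2\sup_r\E\norm{\sstar_r(Y_r)}^2,
\]
and the stationary moment satisfies
\[
\E_{p_t}\norm{\sstar_t}^2=-\E_{p_t}\Delta\log p_t\le dL_k
\]
by integration by parts, as in the ULA proof. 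Since $h_kL_k\lesssim1$ under the relative mesh \eqref{eq:relative-mesh} (note $L_k\lesssim 1/t$ from \eqref{eq:score-hessian-covariance} with $\Sigma\succeq0$), the displacement term is $O(ds)$. Integrating $L_k^2\,ds$ over $[0,h_k]$ gives $dL_k^2h_k^2$.

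\textbf{Step 3: time variation of the score (main obstacle).} The second bracket does not follow from a spatial Lipschitz bound alone. My first attempt is the Burgers equation \eqref{eq:burgers-score}:
\[
\partial_t\sstar=\tfrac12\Delta\sstar+\nabla\sstar\,\sstar.
\]
The second term is controlled by $L_k\norm{\sstar}$, giving mean-square size $L_k^2\cdot dL_k$. The Laplacian term needs a third-derivative bound, which is exactly the "lower-order schedule factor" the statement allows.

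A cleaner alternative is to avoid time derivatives altogether. Apply It\^o's formula to $\sstar_{t_{k+1}-s}(Y_s)$ along the exact reverse flow. Because the score along the reverse path is a martingale-type quantity (the Doob $h$-transform / stochastic-localization structure of \eqref{eq:sl-mean-sde}), the drift terms cancel, and one gets
\[
\dd\,\sstar_{t_{k+1}-s}(Y_s)=\nabla\sstar_{t_{k+1}-s}(Y_s)\,\dd B_s.
\]
Then
\[
\E\norm{\sstar_{t_{k+1}-s}(Y_s)-\sstar_{t_{k+1}}(Y_0)}^2=\int_0^s\E\norm{\nabla\sstar}_F^2\le dL_k^2s.
\]
This handles both brackets simultaneously. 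Integrating over $s\in[0,h_k]$ yields $\tfrac12 dL_k^2h_k^2$, which is the claim. I expect verifying this martingale cancellation, and the integrability needed for It\^o, to be the main technical work.
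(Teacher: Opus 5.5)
Your proposal is correct and shares the paper's skeleton: Girsanov in the direction exact$\,\|\,$Euler along the exact reverse path, data processing to the endpoint, and then a mean-square bound on the drift mismatch integrated over the step. The key estimate, however, is obtained differently. The paper bounds $\E\norm{\sstar_{t_{k+1}-r}(Y_r)-\sstar_{k+1}(x')}^2$ heuristically by ``spatial Lipschitz constant times Brownian displacement.'' It simply declares the change of noise level a lower-order schedule term.

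Your Step~3 instead uses the fact that, in the variance-exploding normalization, the score evaluated along the exact reverse path is a local martingale. In It\^o's formula the drift $-\partial_t\sstar+(\nabla\sstar)\sstar+\tfrac12\Delta\sstar$ vanishes identically by the Burgers equation \eqref{eq:burgers-score}. Equivalently, \eqref{eq:sl-mean-sde} gives $\dd(uM_u-Y_u)=(u\Sigma_u-\Id)\,\dd I_u$, and $uM_u-Y_u=\sstar_{1/u}(Y_u/u)$ by \eqref{eq:sl-tweedie}. The resulting exact identity
\[
\E\norm{\sstar_{t_{k+1}-r}(Y_r)-\sstar_{t_{k+1}}(Y_0)}^2=\int_0^r\E\norm{\nabla\sstar_{t_{k+1}-u}(Y_u)}_{\mathrm F}^2\,\dd u
\]
controls the spatial and temporal variation at once. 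Nothing is suppressed, and the bound is $\tfrac14 dL_k^2h_k^2$; your $\tfrac12$ is a harmless slip. The identity also shows that the third-derivative term in your first attempt is an artifact of splitting the increment: the $\tfrac12\Delta\sstar$ in $\partial_t\sstar$ is cancelled by the It\^o correction of the spatial increment.

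What each route buys: the paper's argument is generic. It is verbatim the ULA local-error argument and applies to any frozen-drift scheme, but it never actually controls the time variation. Yours does prove the time-variation part, and it is the score-level form of the denoiser martingale identity that the paper only brings in for Hessian control. Keeping $\E\norm{\nabla\sstar}_{\mathrm F}^2$ instead of bounding it by $dL_k^2$ is precisely the step that leads to $\mathfrak D_{\delta,T}$. Working with the score directly also avoids the omitted $x/t$ error terms in that later subsection.

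Two minor points. First, Step~2 becomes unnecessary, which is fortunate: your side claim $L_k\lesssim 1/t$ does not follow from \eqref{eq:score-hessian-covariance}. There, $\Sigma\succeq0$ only gives $\nabla\sstar_t\succeq -t^{-1}\Id$, while the upper side is of order $t^{-2}\norm{\Sigma_t}$, so $h_kL_k\lesssim1$ would have to be assumed. Second, the integrability you flag as the main technical work is light. You only need an upper bound, so you may stop the local martingale, use the exact identity up to the stopping time, and pass to the limit with Fatou. The smoothness needed for It\^o holds because the noise level stays at least $t_k>0$.
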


\begin{proof}
We write the single reverse step in reverse time.  Condition on the starting
point $X_{k+1}=x'$ at noise level $t_{k+1}$, and let
$0\leq r\leq h_k$.  The exact reverse diffusion on this interval has drift
$\sstar_{t_{k+1}-r}$:
\[
    \dd Y^{\leftarrow}_r=\sstar_{t_{k+1}-r}(Y^{\leftarrow}_r)\dd r+\dd B^{\leftarrow}_r,
    \qquad
    Y^{\leftarrow}_0=x'.
\]
Its endpoint law is $P_k^{\leftarrow}(x' \to \cdot)$.  The exact-score
Euler--Maruyama interpolation freezes the drift at the beginning of the step:
\[
    \dd \bar Y^{\leftarrow}_r=\sstar_{k+1}(x')\dd r+\dd B^{\leftarrow}_r,
    \qquad
    \bar Y^{\leftarrow}_0=x',
\]
whose endpoint law is precisely
$P_k^{\mathsf{EM}}(x' \to \cdot)$.  By the Girsanov KL formula in
Appendix~\ref{app:ito-girsanov}, followed by data processing
(Lemma~\ref{lem:data-processing}), the endpoint comparison is bounded by the
path comparison
\[
    \KL\!\left(
        P_k^{\leftarrow}(x' \to \cdot)
        \middle\|
        P_k^{\mathsf{EM}}(x' \to \cdot)
    \right)
    \leq
    \frac12
    \E\int_0^{h_k}
    \norm{
        \sstar_{t_{k+1}-r}(Y^{\leftarrow}_r)-\sstar_{k+1}(x')
    }^2\dd r.
\]
This display is the basic Euler discretization estimate: the whole cost is the
cost of replacing the moving score along the ideal reverse path by the frozen
score at the left endpoint.

Under the local Lipschitz bound, the score difference is controlled by the
motion of the reverse path during the step, up to lower-order terms coming from
the deterministic change of the noise level.  Since the Brownian displacement
over time $r$ has squared size of order $dr$, the local estimate is
\[
    \E\norm{
        \sstar_{t_{k+1}-r}(Y^{\leftarrow}_r)-\sstar_{k+1}(x')
    }^2
    \lesssim
    L_k^2\,d\,r,
\]
again suppressing schedule-dependent lower-order terms.  Integrating in
$r\in[0,h_k]$ gives
\[
    \frac12L_k^2d\int_0^{h_k} r\,\dd r
    \lesssim
    d\,L_k^2h_k^2.
\]
Finally average this conditional bound over $X_{k+1}\sim p_{k+1}$.
\end{proof}

Proposition~\ref{prop:em-local-error} is only the exact-score discretization
estimate.  When the sampler uses $\score_{k+1}$ instead of $\sstar_{k+1}$, the
extra contribution is the statistical Gaussian mean-error term computed in
\eqref{eq:gaussian-score-step}, which becomes
$h_k\eps_{k+1,\mathsf{score}}^2$ in the variance-exploding
normalization.

Combining Proposition~\ref{prop:em-local-error}, the score-error contribution
\eqref{eq:gaussian-score-step}, and Proposition~\ref{prop:local-to-global}
gives the standard Euler--Maruyama diffusion-sampling bound:
\begin{equation}\label{eq:em-global-bound}
    \KL(p_\delta\|\wh p_\delta)
    \lesssim
    \epsinit^2
    +
    \sum_{k=1}^{K-1} d\,L_k^2h_k^2
    +
    \sum_{k=1}^{K-1}
    h_k\eps_{k+1,\mathsf{score}}^2.
\end{equation}
The three terms are initialization at the high-noise endpoint, Euler
discretization, and score estimation.  This is the KL telescoping estimate of
Proposition~\ref{prop:local-to-global} with the local Gaussian calculation
inserted, so it is the Euler--Maruyama specialization of the error decomposition
\eqref{eq:ddpm-error-decomposition}.

\begin{remark}[Comparison with ULA]
The discretization term in \eqref{eq:em-global-bound} has the same origin as
the ULA error in \eqref{eq:ula-vw-kl}.  In both cases, the numerical scheme
freezes a drift over a short time interval.  The Brownian displacement during
that interval produces a local KL cost of order $dL^2h^2$, or $dL_k^2h_k^2$ on
a nonuniform diffusion grid, and summing over the grid gives an accumulated
discretization bias proportional to the mesh size: $dL^2h$ for ULA and
$\sum_k dL_k^2h_k^2$ here.  The difference is in the surrounding argument.  For
ULA, \eqref{eq:ula-vw-kl} is a mixing theorem for a Markov chain targeting a
fixed density, so it needs a global contraction mechanism such as a log-Sobolev
inequality, together with smoothness to control the Euler error.  For diffusion
models, the reference path $p_t$ is already supplied by the forward noising
process, and the reverse sampler is compared to this path by KL telescoping and
Girsanov.  Thus global structural assumptions on the data distribution, such as
convexity, log-Sobolev or isoperimetric inequalities, or warm starts, are not
needed; they are replaced by initialization and score-estimation terms.
\end{remark}

In fact, the discretization error in estimate \eqref{eq:em-global-bound} can be sharpened, as the next subsection shows.  Once
initialization and score estimation have been separated, the exact-score Euler
sampler pays only for freezing the true score
during each reverse step.  Let $t^+$ denote the next grid point at or above the
current noise level, so $t^+=t_{k+1}$ for $t\in[t_k,t_{k+1}]$.  The
discretization cost is measured by the pathwise quantity
\begin{equation}\label{eq:cll-bottleneck}
    \int_\delta^T
    \E\norm{
        \sstar_t(X_t)-\sstar_{t^+}(X_{t^+})
    }^2\dd t,
\end{equation}
up to deterministic schedule factors in more general schedules.  This is the
pathwise version of the finite-grid $\sum_k dL_k^2h_k^2$ term:
Euler--Maruyama freezes the score at the high-noise endpoint $t^+$ of each
reverse step, and the discretization error is governed by the change in the
true score along the ideal reverse path as the noise time decreases from
$t^+$ to $t$.

In the error analysis above, the passage from this score difference to the bound
$dL_k^2h_k^2$ used a worst-case control: the score was bounded by a Lipschitz
constant $L_k$, or equivalently by a pointwise Hessian bound for $\log p_t$.
The factor $d$ comes from the Brownian displacement during one step; the
potential loss comes from multiplying it by a uniform operator-norm bound on
the Hessian.  Under weak assumptions, this worst-case curvature can be much
larger than the average curvature seen by the reverse process, especially near
the early-stopping time.  The next subsection replaces this pointwise control
by an averaged posterior-covariance quantity, leading to the
nearly linear-in-$d$ discretization bound.

\subsection{Hessian control}
\label{subsec:denoiser-D-delta-T}

The difficulty left by the previous subsection is the pathwise score difference
in \eqref{eq:cll-bottleneck}.  The aim of this subsection is to replace the
worst-case Hessian estimate by the more averaged bound
\[
    \text{score-freezing cost}
    \;\lesssim\;
    \bar h
    \times
    \text{posterior-covariance budget}.
\]
We first derive the local score-freezing estimate where the factor $\bar h$
appears, and then rewrite the remaining weighted quadratic variation in terms
of posterior covariance.

For $t\in[t_k,t_{k+1}]$, write $t^+=t_{k+1}$.  The Girsanov
score-freezing cost in \eqref{eq:cll-bottleneck} is, up to the constant
$\frac12$ and schedule factors, the sum over grid intervals of
\[
    \int_{t_k}^{t_{k+1}}
    \E\norm{
        \sstar_t(X_t)-\sstar_{t^+}(X_{t^+})
    }^2\,\dd t.
\]

The useful variable is not the score itself but the optimal denoiser.  In the
variance-exploding normalization,
\[
    \dstar_t(x):=\E[X_0\mid X_t=x]=x+t\sstar_t(x),
    \qquad
    \sstar_t(x)=\frac{\dstar_t(x)-x}{t}.
\]
The stochastic-localization martingale identity
\eqref{eq:sl-mean-sde}, rewritten in the noise-time coordinate, says that for
$t\leq t^+$,
\[
    \E\norm{
    \dstar_t(X_t)-\dstar_{t^+}(X_{t^+})
    }^2
    =
    \int_t^{t^+}
    \E\norm{\nabla \dstar_r(X_r)}_{\mathrm F}^2\,\dd r.
\]
This is the replacement for a pointwise Hessian bound: it measures the actual
quadratic variation of the denoiser along the noising path.

Now estimate one grid interval.  Using
$\sstar_t=(\dstar_t-x)/t$ and keeping only the curvature term explicitly, the
score increment is bounded by the denoiser increment with the natural $t^{-2}$
weight:
\[
    \E\norm{
        \sstar_t(X_t)-\sstar_{t^+}(X_{t^+})
    }^2
    \lesssim
    \frac{1}{t^2}
    \E\norm{
        \dstar_t(X_t)-\dstar_{t^+}(X_{t^+})
    }^2.
\]
Here we have omitted some error terms coming from the explicit factor $x/t$, which can be handled easily.  Substituting the martingale identity and applying Fubini gives
\[
\begin{aligned}
    &\int_{t_k}^{t_{k+1}}
    \E\norm{
        \sstar_t(X_t)-\sstar_{t^+}(X_{t^+})
    }^2\,\dd t  \\
    &\qquad\lesssim
    \int_{t_k}^{t_{k+1}}
    \frac{1}{t^2}
    \int_t^{t_{k+1}}
    \E\norm{\nabla \dstar_r(X_r)}_{\mathrm F}^2\,\dd r\,\dd t  \\
    &\qquad=
    \int_{t_k}^{t_{k+1}}
    \E\norm{\nabla \dstar_r(X_r)}_{\mathrm F}^2
    \left(\int_{t_k}^r\frac{\dd t}{t^2}\right)\dd r  \\
    &\qquad=
    \int_{t_k}^{t_{k+1}}
    \E\norm{\nabla \dstar_r(X_r)}_{\mathrm F}^2
    \frac{r-t_k}{r\,t_k}\,\dd r.
\end{aligned}
\]
Since $r-t_k\leq h_k$ and the mesh condition
\eqref{eq:relative-mesh} with $\bar h\leq1/2$ gives
$t_k\geq t_{k+1}/2$, we get
\[
    \int_{t_k}^{t_{k+1}}
    \E\norm{
        \sstar_t(X_t)-\sstar_{t^+}(X_{t^+})
    }^2\,\dd t
    \lesssim
    \frac{h_k}{t_{k+1}}
    \int_{t_k}^{t_{k+1}}
    \frac{\E\norm{\nabla \dstar_r(X_r)}_{\mathrm F}^2}{r}\,\dd r.
\]
This is the point at which $\bar h$ enters: the ratio $h_k/t_{k+1}$ is the
relative step size on the interval, and \eqref{eq:relative-mesh} says it is at
most $\bar h$.  Summing over $k$ therefore gives
\[
    \sum_k
    \int_{t_k}^{t_{k+1}}
    \E\norm{
        \sstar_t(X_t)-\sstar_{t^+}(X_{t^+})
    }^2\,\dd t
    \lesssim
    \bar h
    \int_\delta^T
    \frac{\E\norm{\nabla \dstar_t(X_t)}_{\mathrm F}^2}{t}\,\dd t.
\]

It remains to express the weighted quadratic variation in a more intrinsic
form.  Let
\[
    \Sigma_t(x)=\Cov(X_0\mid X_t=x)
\]
be the posterior covariance in the variance-exploding noising channel.  The
linear-response identity \eqref{eq:posterior-mean-jacobian} and the
score-Jacobian identity \eqref{eq:score-hessian-covariance}, translated to this
notation, give
\[
    \nabla \dstar_t(x)
    =
    \Id+t\nabla\sstar_t(x)
    =
    t^{-1}\Sigma_t(x).
\]
The covariance-dissipation identity
\eqref{eq:localization-covariance-dissipation},
$\frac{\dd}{\dd u}\E[\Tr\Sigma_u] = -\E[\Tr(\Sigma_u^2)]$, after the change of
variables $t=1/u$, gives
\[
    \E\norm{\nabla \dstar_t(X_t)}_{\mathrm F}^2
    =
    \partial_t\,\E\Tr\Sigma_t(X_t).
\]
Thus
\[
    \int_\delta^T
    \frac{\E\norm{\nabla \dstar_t(X_t)}_{\mathrm F}^2}{t}
    \,\dd t
    =
    \int_\delta^T
    \frac{\partial_t\,\E\Tr\Sigma_t(X_t)}{t}\,\dd t.
\]
Integrating by parts,
\[
    \int_\delta^T
    \frac{\partial_t\,\E\Tr\Sigma_t(X_t)}{t}\,\dd t
    =
    \frac{\E\Tr\Sigma_T(X_T)}{T}
    -
    \frac{\E\Tr\Sigma_\delta(X_\delta)}{\delta}
    +
    \int_\delta^T
    \frac{\E\Tr\Sigma_t(X_t)}{t^2}\,\dd t.
\]
The middle term is nonpositive, so the first and third terms give an upper
bound.  Following the notation of
Chewi~\cite[Chapter~12]{chewi2024logconcave}, define the covariance budget
\begin{equation}\label{eq:D-delta-T}
    \mathfrak D_{\delta,T}(\pdata)
    :=
    \frac{\E_{X_T\sim p_T}\Tr\Sigma_T(X_T)}{T}
    +
    \int_\delta^T
    \frac{\E_{X_t\sim p_t}\Tr\Sigma_t(X_t)}{t^2}\,\dd t.
\end{equation}
Combining the previous displays gives the clean summary
\[
    \sum_k
    \int_{t_k}^{t_{k+1}}
    \E\norm{
        \sstar_t(X_t)-\sstar_{t^+}(X_{t^+})
    }^2\,\dd t
    \lesssim
    \bar h\,\mathfrak D_{\delta,T}(\pdata).
\]
This is the key identification: the discretization error is governed by the
weighted quadratic variation of the optimal denoiser, and
$\mathfrak D_{\delta,T}$ packages that variation after the DDPM time weights
have been accounted for.  A careful version of this estimate, together with the
score-error term, gives
\begin{equation}\label{eq:chewi-ddpm-D}
    \KL(p_\delta\|\wh p_\delta)
    \leq
    \epsinit^2
    +
    4\sum_{k=1}^{K-1} h_k\,
    \E_{p_{k+1}}
    \norm{\score_{k+1}-\sstar_{k+1}}^2
    +
    2\bar h\,\mathfrak D_{\delta,T}(\pdata),
\end{equation}
where the middle term is the time-weighted score error.
The important lesson is that $\mathfrak D_{\delta,T}(\pdata)$ replaces a global
Lipschitz constant for the score as the data-dependent factor in the
discretization complexity.

The remaining question is how large this covariance budget can be.  A useful
ambient-dimensional estimate comes directly from its posterior-variance form:
for any distribution with finite second moment,
\[
    \E\Tr\Sigma_t(X_t)
    =
    \E\norm{X_0-\E[X_0\mid X_t]}^2
    \leq
    \E\norm{X_0-X_t}^2
    =
    dt,
\]
because $X_t$ itself is an admissible estimator of $X_0$.  Substituting this
into \eqref{eq:D-delta-T} gives
\[
    \mathfrak D_{\delta,T}(\pdata)
    \leq
    d\left(1+\log\frac{T}{\delta}\right).
\]
Combining this estimate with \eqref{eq:chewi-ddpm-D} and the geometric-grid
relation \eqref{eq:geometric-ve-grid}, the exact-score discretization term is
controlled by
\[
    O\!\left(
        \frac{d\log^2(T/\delta)}{K}
    \right).
\]
Thus, when the initialization and score-estimation errors are also
$O(\eps^2)$, it is enough to take
\[
    K=\tO\!\left(
        \frac{d\log^2(T/\delta)}{\eps^2}
    \right)
\]
reverse steps.  This is the nearly $d$-linear dependence proved by Benton,
De Bortoli, Doucet, and Deligiannidis~\cite{benton2024nearlydlinear} under a
finite second-moment assumption. Related KL guarantees under finite
Fisher-information assumptions were obtained by Conforti, Durmus, and
Gentiloni Silveri~\cite{conforti2024klscore}.  The covariance budget can be bounded
more sharply when the data are intrinsically low-dimensional: entropy or
covering-number estimates for the smoothed law $p_\delta$ replace the ambient
dimension by an intrinsic dimension at scale $\sqrt\delta$.

\subsection{First-order rejection sampling and high accuracy}

The Hessian-control bound explains why Euler--Maruyama can be analyzed under
weak assumptions, but it also exposes a limitation of the sampler itself.
Euler--Maruyama does not try to sample the exact one-step Bayes kernel; it
linearizes the log density and relies on the step size being small enough that
the nonlinear remainder is negligible.  Thus, as for ULA before the
Metropolis correction, high accuracy is bought by a fine grid and hence by a
polynomial dependence on the target accuracy.  The natural MALA-like question
is whether this dependence can be reduced to logarithmic, or at least
polylogarithmic, dependence on $1/\eps$.

Here the analogy with MALA also reveals the main obstruction.  MALA obtains its
correction from a log-density ratio.  In a diffusion model, the one-step target
involves the noised density $p_k$, but $p_k(x)$ and $\log p_k(x)$ are not
available as numerical quantities; the learned object is only the score
$\sstar_k=\nabla\log p_k$.  Thus the question is sharper than simply applying a
Metropolis step: can a score-only correction emulate the missing
log-density-ratio test and correct the local Gaussian proposal toward the exact
Bayes kernel?

For the variance-exploding chain, this question has a useful local structure.
The exact backward kernel \eqref{eq:ve-backward-kernel} is not merely a
Gaussian with a shifted mean; it is a Gaussian tilt of the form
\begin{equation}\label{eq:gaussian-tilt}
    p_k(x)\exp\!\left(-\frac{\norm{x-x'}^2}{2h_k}\right),
\end{equation}
where $x'$ is the state at noise level $t_{k+1}$.  Euler--Maruyama corresponds
to replacing this tilt by the Gaussian obtained from a first-order expansion of
$\log p_k$.  The high-accuracy idea is to sample the tilted law
\eqref{eq:gaussian-tilt} more faithfully while still using only first-order
information.

This is where the algorithmic innovation enters.  First-order rejection
sampling, denoted $\AlgFORS$~\cite{chen2026highaccuracy}, replaces direct
log-density-ratio evaluation by randomized first-order estimates.  Although
$\log p_k(x)$ and $\log p_k(y)$ are unavailable separately, their difference
can be written as a path integral involving only the score: for any smooth path
$\gamma$ with $\gamma_0=y$ and $\gamma_1=x$,
\[
    \log p_k(x)-\log p_k(y)
    =
    \int_0^1
    \ip{\dot\gamma_r}{\sstar_k(\gamma_r)}\,\dd r.
\]
Sampling a random point along the path turns this integral into an unbiased
estimate of the log-density difference.  Thus, even though the endpoint log
densities themselves are unavailable, score queries can produce a randomized
estimate of exactly the log tilt that a rejection correction needs.

There is still one conversion to make.  Rejection sampling needs an acceptance
weight proportional to the exponential of the log tilt, not just an estimate of
the log tilt itself.  Exponentiating an unbiased estimate would not usually
give the right average, so $\AlgFORS$ uses a Poisson product identity to build
the exponential tilt from first-order random estimates.  In abstract form,
suppose a proposal is drawn from a law $Q$ and the desired law is the
exponential tilt of $Q$ with Radon--Nikodym derivative proportional to
$e^{w(x)}$.  Assume that, given the proposal $x$, one can sample bounded random
variables $W_1,W_2,\ldots$ with $\E[W_1\mid x]=w(x)$.  In the simplest bounded
form,
assume $|W_j|\leq R$ almost surely for a deterministic envelope $R$.  Take
$J\sim\Poi(2R)$ and accept a proposal $x$ with probability
\[
    \prod_{j=1}^{J}\frac{R+W_j}{2R}.
\]
The probability generating function of a Poisson random variable makes the
average acceptance factor proportional to $e^{w(x)}$.  Hence the accepted
proposals follow the exponentially tilted law.  If $R=\Theta(1)$, the number of
first-order queries is constant in expectation and logarithmic with high
probability.

\begin{proof}[Poisson-product calculation]
Condition on the proposed point $x$ and write
\[
    A(x)
    =
    \E\left[
    \prod_{j=1}^{J}\frac{R+W_j}{2R}
    \middle| x
    \right].
\]
Given $J$, the factors are independent and have conditional mean
$(R+w(x))/(2R)$.  Therefore
\[
    A(x)
    =
    \E\left[
    \left(\frac{R+w(x)}{2R}\right)^J
    \middle| x
    \right].
\]
The generating function of $J\sim\Poi(2R)$ gives
\[
    A(x)
    =
    \exp\!\left(2R\left(\frac{R+w(x)}{2R}-1\right)\right)
    =
    e^{w(x)-R}.
\]
The extra factor $e^{-R}$ is independent of $x$.  Hence the joint law of an
accepted proposal is proportional to $e^{w(x)}Q(\dd x)$, which is exactly the
exponential tilt.  The number of score queries is $J$, so its mean is $2R$ and a
standard Poisson tail bound gives logarithmic high-probability control when
$R$ is bounded.
\end{proof}

Thus $\AlgFORS$ is the score-only analogue of a Metropolis correction.  MALA
uses density ratios to correct a Gaussian proposal; $\AlgFORS$ uses randomized
first-order estimates to correct a Gaussian tilt without evaluating the
density.  Instead of accepting the Gaussian linearization as the sampler, the
algorithm corrects that local proposal toward the true Gaussian tilt.

In the notation of the error decomposition \eqref{eq:ddpm-error-decomposition}, $\AlgFORS$
changes the size of the exact-score algorithmic term
$\epsdisc^2$.  By sampling each local Gaussian tilt accurately
enough, the accumulated local algorithmic error can be made $O(\eps^2)$
using only polylogarithmically many reverse steps in the target accuracy
$\eps$.  In a simplified ambient-dimension form, the result of Chen,
Chewi, Daskalakis, and Rakhlin~\cite{chen2026highaccuracy} says that,
suppressing schedule constants and taking the initialization and local
algorithmic budgets of order $\eps^2$, there is a reverse sampler using
score evaluations for which
\begin{equation}\label{eq:high-acc-prototype}
    \KL(p_\delta\|\wh p_\delta)
    \lesssim
    \eps^2
    +
    \sum_{k=1}^{K-1}
    h_k\eps_{k+1,\mathsf{score}}^2
\end{equation}
with
\[
    K
    =
    \tO\!\left(d\,\polylog(1/\eps)\right)
\]
in a basic ambient-dimension reading.  The full theorem contains sharper
dimension measures and refinements under additional structure, but the key
message here is simpler: the expensive dependence on the accuracy parameter is
replaced by a polylogarithmic one.  Together with the variance-exploding
early-stopping estimate $W_2^2(\pdata,p_\delta)\leq \delta d$, this separates
the two error measurements: the initial smoothing is controlled in $W_2$, while
the implemented reverse chain is controlled in KL\@.
Note that the same
time-integrated score-error term appears in \eqref{eq:high-acc-prototype}:
$\AlgFORS$ improves the local sampling step, but it does not remove the need
for an accurate learned score.

\section{Discrete Diffusion Models}
\label{sec:discrete}

The development so far built and analyzed diffusion models on continuous state
spaces.  We now turn to a different modeling question: what if the state space
itself is not continuous?  Reverse-time Bayes rules, denoising objectives, and
path-space error decompositions survive on finite state spaces; Euclidean
gradients and Brownian motion do not.  Discrete diffusion models therefore force
us to state the sampling ideas in a form that does not rely on calculus.

\subsection{Forward and reverse kernels on a finite state space}

Images in pixel space can often be treated as continuous, but language,
symbolic sequences, molecules, graphs, and many scientific configurations are
inherently discrete.  If $x$ is a token sequence, an expression such as
$x+\sqrt t Z$ is meaningless.  There is no small Gaussian perturbation of the
word ``cat'' inside the vocabulary.  Instead of adding Brownian noise, a
discrete diffusion model applies a sequence of Markov kernels on a finite state
space.

The
probabilistic skeleton is exactly the one we have used throughout: a forward
chain that degrades the data toward a reference law, and a reverse chain
recovered by Bayes' rule.  We therefore keep the same symbols as in the
continuous case.  The forward transition $P_k^{\to}$ and the backward
transition $P_k^{\leftarrow}$ of Subsection~\ref{subsec:ddpm-grid} are now read
as \emph{stochastic matrices} on the finite set, rather than as transition
densities on $\R^d$.  The only genuine loss is the calculus: there
is no gradient $\nabla\log p_k$, so the score must be replaced by a
finite-difference object.

Let $\cX$ be a finite state space.  The data law $\pdata$ may be
supported on a subset $\cX_0\subseteq\cX$; we view it as a
law on $\cX$ by assigning zero mass outside $\cX_0$.  For ordinary
categorical sequences one can take $\cX_0=\cX=\mc{V}^L$, while
for the masked diffusions treated below the noisy state space $\cX$ also
contains mask symbols.  A
discrete forward diffusion is a Markov chain
\begin{equation}\label{eq:discrete-forward}
    X_0\sim \pdata,\qquad
    X_{k+1}\sim P_k^{\to}(X_k \to \cdot),
    \qquad k=0,\ldots,K-1,
\end{equation}
where each $P_k^{\to}$ is a stochastic matrix on $\cX$:
\[
    P_k^{\to}(x \to y)\geq0,\qquad
    \sum_{y\in\cX}P_k^{\to}(x \to y)=1.
\]
Let $p_k$ be the law of $X_k$.  In row-vector notation,
\[
    p_{k+1}=p_kP_k^{\to},\qquad
    p_k=\pdata P_0^{\to}P_1^{\to}\cdots P_{k-1}^{\to}.
\]
The forward kernels are chosen so that $p_K$ is close to a simple reference
distribution $\pi_{\mathsf{ref}}$, such as all-mask tokens, independent uniform
tokens, or a product distribution.

The design principle is the same as in continuous diffusion.  The
forward process should be simple enough that we can sample from it and know its
transition probabilities, and destructive enough that after many steps the
distribution is close to a reference law.  The reverse process is then the
learned part.  What changes is the nature of corruption: tokens are replaced,
masked, or resampled rather than moved by small Gaussian increments.

\subsection{Exact reverse kernels and ratio scores}

The exact reverse kernel is again obtained from Bayes' rule.  For fixed
$x'\in\cX$,
\begin{equation}\label{eq:discrete-reverse}
    P_k^{\leftarrow}(x' \to x)
    =
    \Pp(X_k=x\mid X_{k+1}=x')
    =
    \frac{p_k(x)P_k^{\to}(x \to x')}{p_{k+1}(x')}.
\end{equation}
This is the discrete analogue of \eqref{eq:backward-kernel}, in the same
$P_k^{\leftarrow}$ notation: there the continuous backward kernel equalled $p_k$
times a Gaussian likelihood, renormalized; here it equals $p_k$ times a
transition likelihood $P_k^{\to}(x \to x')$, renormalized.

If we knew $p_k$ exactly, \eqref{eq:discrete-reverse} would give an exact
reverse sampler.  In learning, we approximate either the reverse kernel
$P_k^{\leftarrow}(x' \to \cdot)$ directly or an object from which it can be computed,
such as the posterior distribution of $X_0$ given $X_k$.

The reverse-kernel identity \eqref{eq:discrete-reverse} shows exactly what must
be learned.  The forward transition $P_k^{\to}$ is known by design, but the
intermediate law $p_k$ contains information about the data distribution and is
unknown.  Learning a discrete diffusion model means learning enough about these
intermediate laws to simulate the reverse Bayes kernels.  One way to represent
this missing information is through ratios of the unknown law.  Indeed, for two
candidate predecessors $x$ and $y$ of the same current state $x'$, the
normalizing factor $p_{k+1}(x')$ in \eqref{eq:discrete-reverse} cancels:
\[
    \frac{P_k^{\leftarrow}(x' \to y)}{P_k^{\leftarrow}(x' \to x)}
    =
    \frac{p_k(y)P_k^{\to}(y \to x')}{p_k(x)P_k^{\to}(x \to x')}.
\]
Since $P_k^{\to}$ is known, the unknown part is the same-time density ratio
\begin{equation}\label{eq:discrete-ratio-score}
    \sstar_k(x,y)=\frac{p_k(y)}{p_k(x)},\qquad x,y\in\cX.
\end{equation}
In continuous space, the score tells us the infinitesimal log-density change
when we move from $x$ to $x+\dd x$.  On a graph or finite set, the ratio
$p_k(y)/p_k(x)$ tells us the finite log-density change when we move from
$x$ to a neighboring state $y$.  If the forward kernel permits only local
changes, these ratios are enough to specify local reverse transition
probabilities.

For many modern discrete diffusion models, however, the denoising posterior is
the more convenient target:
\[
    \dstar_k(x_0\mid x)=\Pp(X_0=x_0\mid X_k=x),
    \qquad x_0\in\cX_0,\ x\in\cX.
\]
This is the discrete counterpart of the continuous posterior mean
$\E[X_0\mid X_k=x]$ in Tweedie's identity.

The ratio-score and denoiser views are closely related by Bayes' rule:
\[
    \dstar_k(x_0\mid x)
    =
    \frac{\pdata(x_0)\Pp(X_k=x\mid X_0=x_0)}{p_k(x)}.
\]
Thus comparing the posterior probabilities at two noisy states recovers the
density ratio $p_k(y)/p_k(x)$ up to the known forward-likelihood ratio.  The
denoiser therefore contains the ratio-score information, but in a form that is
often easier to learn by supervised clean-token prediction.

\subsection{Masked diffusion}

The cleanest example is masked diffusion for token sequences.  Let the
vocabulary be $\mc{V}$, and add a special mask symbol $[\mathsf M]$.
Here $\cX_0=\mc{V}^L$, while the noisy state space is the augmented
sequence space
\[
    \cX_{\mathsf M}:=(\mc{V}\cup\{[\mathsf M]\})^L.
\]
For $x,y\in\cX_{\mathsf M}$, write $x_i$ and $y_i$ for their $i$th
coordinates.  Define the forward step by
\begin{equation}\label{eq:mask-forward}
    P_k^{\to}(x\to y)
    =
    \prod_{i=1}^L
    \begin{cases}
    1-\beta_k, & y_i=x_i,\ x_i\in\mc{V},\\
    \beta_k, & y_i=[\mathsf M],\ x_i\in\mc{V},\\
    1, & x_i=y_i=[\mathsf M],\\
    0, & \text{otherwise}.
    \end{cases}
\end{equation}
Thus an unmasked token either stays unchanged or becomes masked, and once a
token is masked it remains masked.  Variants may use coordinate-dependent
masking rates, but the same sequence-level form applies.

Let
\[
    \bar\alpha_k=\prod_{s=0}^{k-1}(1-\beta_s).
\]
For each coordinate $i$, conditional on $X_{0,i}=a\in\mc{V}$,
\[
    \Pp(X_{k,i}=a\mid X_{0,i}=a)=\bar\alpha_k,\qquad
    \Pp(X_{k,i}=[\mathsf M]\mid X_{0,i}=a)=1-\bar\alpha_k.
\]
This is analogous to the Gaussian formula
$X_k\mid X_0\sim\normal{a_kX_0}{\sigma_k^2\Id}$: the scalar attenuation
measures how much information about $X_0$ remains.

Masked diffusion is especially transparent because corruption is irreversible
in the forward direction.  Once a token is masked, the forward process has
forgotten its identity.  The reverse model must therefore use the surrounding
context and the learned data distribution to infer plausible clean tokens.
This makes the Bayesian nature of denoising visible without any calculus.  At
the coordinate level, the reverse step has only two cases.
\begin{itemize}
    \item If $X_{k+1,i}=a\in\mc{V}$ is unmasked, then necessarily
    $X_{k,i}=a$.  There is nothing to sample at that coordinate.
    \item If $X_{k+1,i}=[\mathsf M]$, then $X_{k,i}$ may either already be
    masked or may be the original clean token.  The reverse sampler must decide
    whether to unmask, and if so which token to place.
\end{itemize}
Specializing the reverse kernel \eqref{eq:discrete-reverse} to
\eqref{eq:mask-forward} gives the sequence-level formula
\begin{equation}\label{eq:mask-reverse-sequence}
    P_k^{\leftarrow}(x'\to x)
    =
    \frac{p_k(x)}{p_{k+1}(x')}
    P_k^{\to}(x\to x').
\end{equation}
This is still a coordinatewise masking rule through the known factor
$P_k^{\to}(x\to x')$, but the unknown data-dependent weight is the full
sequence law $p_k(x)$.  Thus the distribution of an unmasked token generally
depends on the entire partially masked sequence.

\subsection{Training objective for masked diffusion}

The model is usually trained to predict masked clean tokens from a corrupted
sequence.  Let $M_k\subseteq\{1,\ldots,L\}$ be the set of masked coordinates at
time $k$, and let $x_k$ be the masked sequence.  A denoising model
$\denoiser_k(i,\cdot\mid x_k)$ outputs a distribution over vocabulary
tokens for coordinate $i$.  The population loss is
\begin{equation}\label{eq:masked-loss}
    \mathsf L_{\mathsf{mask}}(\denoiser)
    =
    \E\left[
    \sum_{i\in M_k}
    -\log \denoiser_k(i,X_{0,i}\mid X_k)
    \right].
\end{equation}
This is cross-entropy with the clean token as the label.  The population
minimizer is the coordinate marginal of the posterior,
\[
    \dstar_k(i,a\mid x_k)
    =
    \Pp(X_{0,i}=a\mid X_k=x_k).
\]

Thus the cross-entropy objective is not an arbitrary language-modeling loss.
It is the discrete analogue of denoising score matching.  In the Gaussian case,
the optimal prediction is a posterior mean and can be converted into a score by
Tweedie's identity.  In the masked-token case, the optimal prediction is the
full posterior distribution of the hidden clean token.

\begin{proof}
For fixed $(i,x_k,k)$, the conditional contribution to the loss is
\[
    \sum_{a\in\mc{V}}
    \Pp(X_{0,i}=a\mid X_k=x_k)\,[-\log \denoiser_k(i,a\mid x_k)].
\]
This is the cross-entropy between the true posterior distribution and the
model distribution $\denoiser_k(i,\cdot\mid x_k)$.  It is minimized uniquely by
setting the model distribution equal to the true posterior.
\end{proof}

\subsection{Sampling from a masked model}
\label{subsec:masked-sampling}

A trained denoiser turns the posterior predictions above into a reverse-time
update rule.  In the absorbing case the terminal state is the all-mask
sequence.  Starting from that state, a sampler repeatedly chooses a set
$A\subseteq M_k$ of currently masked coordinates and fills those coordinates by
drawing
\[
    \wh x_i\sim \denoiser_k(i,\cdot\mid x_k),
    \qquad i\in A.
\]
Coordinates outside $A$ are left unchanged, unless the chosen algorithm
deliberately remasks or resamples them.  Thus the schedule is the rule that
chooses $A$ and the time label $k$ at each stage of the reverse process.

This viewpoint makes the any-order character of masked diffusion explicit.  The
model is not tied to a left-to-right factorization: it is trained to denoise
from partially masked contexts, so at inference time the sampler may reveal
coordinates in any order.  This order-agnostic view goes back to the
order-agnostic training of NADE~\cite{uria2014nade}, and was tied directly to
absorbing/masked diffusion by the autoregressive diffusion models of Hoogeboom
et al.~\cite{hoogeboom2022ardm}.  For an ordering or sequence of blocks
$A_1,A_2,\ldots,A_m\subseteq\{1,\ldots,L\}$, the sampler updates the
still-masked coordinates in $A_j$ using posterior predictions conditioned on
the current partial sequence.  Singleton blocks give an autoregressive-like
sampler with a chosen order; larger blocks give parallel decoding; adaptive
blocks chosen from model confidence give a coarse-to-fine or easy-to-hard
reveal schedule.

The reason this freedom is legitimate is that training never fixes an order in
the first place.  The objective \eqref{eq:masked-loss} masks a \emph{random}
subset of coordinates and asks the model to predict the clean tokens at the
masked positions from the unmasked ones.  Across training examples, noise
levels, and random masks, the masked set $M_k$ ranges over essentially all
subsets of $\{1,\ldots,L\}$, so the model is implicitly trained to approximate
the entire family of conditionals
\[
    \Pp\!\left(X_{0,i}=a \mid X_{k,M_k^c}=x_{M_k^c}\right),
    \qquad M_k\subseteq\{1,\ldots,L\},\ i\in M_k,
\]
of a clean coordinate $i$ given the visible entries of the partially masked
sequence.  Two features of masked diffusion make this exact.  First, the forward
process only masks tokens, never alters them, so
$X_{k,M_k^c}=X_{0,M_k^c}$: conditioning on the visible part of $X_k$ is the same
as conditioning on the corresponding clean values.  Second, whether a coordinate
is masked is decided independently of the token values, so the masking pattern
carries no information beyond which coordinates are observed.  Hence the
population optimum $\dstar_k(i,\cdot\mid x_k)$ of \eqref{eq:masked-loss} is
exactly the above conditional.

An inference order is therefore just a rule for choosing which conditional to
query next.  Revealing coordinate $i$ from the currently observed set $M_k^c$
uses
$\Pp(X_{0,i}\mid X_{k,M_k^c}=x_{M_k^c})$, a member of the family the model
already learned.  Every order factorizes the joint into
conditionals drawn from this one family, which is why any order is supported by
the same trained model.  The choice of block size, however, is still a sampling
choice.  If a block contains several coordinates and the sampler draws them
independently from the same context, it is using a factorized approximation to
the joint conditional for that block.  The next two subsections make this
distinction precise: first by writing the general KL accounting for approximate
reverse kernels, and then by isolating the extra error created by parallel block
updates.

\subsection{Error analysis parallel to the continuous case}

A schedule together with a denoiser defines approximate reverse kernels
$\wh P_k^{\leftarrow}(x' \to \cdot)$, while the exact kernels are
$P_k^{\leftarrow}(x' \to \cdot)$ from \eqref{eq:discrete-reverse}.  Let
$\wh p_0$ be the final distribution generated by the approximate reverse chain,
and let $\wh p_K$ be the law used to initialize that chain at time $K$.  A KL
path-space argument gives
\begin{equation}\label{eq:discrete-error}
    \KL(p_0\|\wh p_0)
    \leq
    \KL(p_K\|\wh p_K)
    +
    \sum_{k=0}^{K-1}
    \E_{X_{k+1}\sim p_{k+1}}
    \KL\!\left(
    P_k^{\leftarrow}(X_{k+1} \to \cdot)
    \middle\|
    \wh P_k^{\leftarrow}(X_{k+1} \to \cdot)
    \right).
\end{equation}
This is the exact analogue of the continuous diffusion decomposition:
\[
\begin{array}{c|c|c}
\text{continuous diffusion} & \text{discrete diffusion} & \text{meaning}\\
\hline
p_K\approx\normal{0}{\Id} & p_K\approx\pi_{\mathsf{ref}} &
\text{initialization at noise}\\
P_k^{\leftarrow}\propto p_k\times\text{Gaussian likelihood} &
P_k^{\leftarrow}\propto p_k\times P_k^{\to} &
\text{exact reverse kernel}\\
\score_k\approx\sstar_k & \wh P_k^{\leftarrow}\approx P_k^{\leftarrow}
\text{ or }\denoiser_k\approx\dstar_k &
\text{learned denoising information}\\
\sum_k\eta_k\epsscorek^2 &
\sum_k\E\,\KL(P_k^{\leftarrow}\|\wh P_k^{\leftarrow}) &
\text{statistical/model error}
\end{array}
\]

The justification is verbatim the KL telescoping argument of
Section~\ref{sec:error-analysis}: apply Lemma~\ref{lem:kl-chain} to the exact
and approximate reverse path laws, then project the paths to their final state
using data processing.

The decomposition \eqref{eq:discrete-error} isolates the same two error sources
as in the continuous case: the initialization gap $\KL(p_K\|\wh p_K)$ and the
per-step reverse-kernel KL summed over the reverse steps.  In absorbing masked
diffusion, finiteness of these KL terms is already informative.  If the sampler
starts from the deterministic all-mask state, then
$\wh p_K=\delta_{[\mathsf M]^L}$, so the initialization term is finite only when
the forward terminal law $p_K$ is also supported on the all-mask state.  For the
independent masking chain \eqref{eq:mask-forward}, this requires
$\bar\alpha_K=0$, for instance through a terminal step with $\beta_k=1$.

Once the supports are compatible so that the KL terms are finite, the second term in
\eqref{eq:discrete-error} is tied to the quantity the model actually trains on.
For masked diffusion the model does not learn the reverse kernel directly; it
learns posterior clean-token distributions through the cross-entropy objective
\eqref{eq:masked-loss}.  Schematically, we have the bound
\[
    \E\,\KL(P_k^{\leftarrow}(X_{k+1} \to \cdot)\|\wh P_k^{\leftarrow}(X_{k+1} \to \cdot))
    \lesssim
    \E\sum_{i\in M_{k+1}}
    \KL\!\left(
    \dstar_{k+1}(i,\cdot\mid X_{k+1})
    \middle\|
    \denoiser_{k+1}(i,\cdot\mid X_{k+1})
    \right).
\]
This bound is the cross-entropy regret of the posterior predictions, the part
that training controls: it vanishes as the learned posterior
$\denoiser_{k+1}$ approaches the true posterior $\dstar_{k+1}$.  It is the
\emph{entire} per-step error when the sampler unmasks one coordinate at a time,
because each single-coordinate reverse step queries a true conditional (as
explained in Subsection~\ref{subsec:masked-sampling}), so the approximate reverse
chain reproduces the exact one.  With a perfect denoiser, single-coordinate
decoding is therefore exact in any order.  Its drawback is cost---revealing a
single coordinate per step takes as many network evaluations as there are
coordinates---which raises the question of the next subsection: can we instead
reveal several coordinates at once, and what accuracy does that sacrifice?

\subsection{Parallel masked inference}
\label{subsec:parallel-inference}

To avoid spending one network evaluation per coordinate, practical masked
samplers often reveal a whole block of coordinates in a single reverse step.
This can mean two different things.  The simplest implementation is
\emph{factorized block decoding}: if $A$ is the set of coordinates to reveal from
the current partially observed sequence $x_k$, it samples
\[
    a_A\sim \prod_{i\in A}\denoiser_k(i,a_i\mid x_k).
\]
The exact object, however, is the joint conditional law
\[
    \Pp(X_{0,A}=a_A\mid X_k=x_k).
\]
Equivalently, for any ordering $A=\{i_1,\ldots,i_m\}$, the chain rule writes this
joint conditional as
\[
    \prod_{\ell=1}^{m}
    \Pp\!\left(
    X_{0,i_\ell}=a_{i_\ell}
    \mid
    X_k=x_k,\,
    X_{0,i_1}=a_{i_1},\ldots,X_{0,i_{\ell-1}}=a_{i_{\ell-1}}
    \right).
\]
Thus a block can be sampled exactly by sequentially querying updated
conditionals, but the product of one-coordinate posteriors from the same context
is exact only under conditional independence.  The gap between these two laws is
the factorization error.  The inference schedule therefore sets a
speed--accuracy tradeoff: larger blocks shorten the reverse chain but introduce
conditional-independence bias, while singleton blocks remove the bias at the cost
of as many reverse evaluations as there are coordinates.  Lavenant and
Zanella~\cite{lavenant2025masked} analyze this error directly for masked
diffusions with factorized approximations, decomposing the relative-entropy error
into learning and factorization terms and relating the optimal block-size
schedule to an information profile of the data distribution.

This is not the only possible approach of parallel inference.  The block
sampler above is a factorized approximation to the joint conditional; a
different question is whether one can parallelize the exact sequential
conditional-sampling procedure itself.  In an oracle model with access to exact
conditional marginals of a target law on $\mc{V}^L$, Anari, Gao, and
Rubinstein~\cite{anari2024parallelcounting} show how to organize the queries to
sample arbitrary product-space laws in $\tO(L^{2/3})$ parallel time.
Anari et al.~\cite{anari2025autospeculation} improve the parallel time to
$\tO(L^{1/2})$ using autospeculative rejection sampling.  These results are best
read as a corrected-parallel counterpart to the simple block update: they do not
replace a joint conditional by independent marginals, but instead use parallel
queries to simulate the sequential conditionals with a correction.

\subsection{CTMC setup and learning}
\label{subsec:ctmc-setup-learning}

The discussion so far used a discrete-time masked chain because it makes the
Bayes rule and the denoising posterior easy to see.  For analyzing practical
samplers, however, the sharper language is continuous time: it lets us describe
the exact reverse dynamics as one coordinate jump at a time, and then view
parallel updates as a numerical approximation to those dynamics.

This is one reason that most modern discrete diffusion models are phrased not with
the stochastic matrices $P_k^{\to}$ but as a \emph{continuous-time Markov chain}
(CTMC), the finite-state analogue of the forward SDE\@.  The forward corruption is
run as a CTMC whose rate matrix factorizes over coordinates, so each coordinate
is noised independently.  This has a decisive consequence for the reverse
process: at any instant only one coordinate changes, because the probability
that two independent coordinates jump in the same infinitesimal interval is
$o(\dd t)$.  The exact reverse dynamics are therefore unambiguous and update a single
coordinate at a time, and the parallel-update bias described above
reappears in a controlled form, as the error of approximating these dynamics on
a finite time grid (\emph{$\tau$-leaping}).  Continuous time thus separates the
posterior/score error from the discretization error cleanly, produces the
cleanest analogue of the score, and admits a flexible family of reverse
samplers~\cite{campbell2022continuous,lou2024sedd}.

A CTMC on a finite set $\cX$ is generated by a time-dependent forward
\emph{rate matrix} $R_t^{\to}$ with
\[
    R_t^{\to}(x,y)\geq0\ (y\neq x),
    \qquad
    R_t^{\to}(x,x)=-\sum_{y\neq x}R_t^{\to}(x,y).
\]
Thus the off-diagonal entries are jump rates, and the diagonal entry makes each
row sum to zero.  Over an infinitesimal step,
\[
    \Pp(X_{t+\dd t}=y\mid X_t=x)
    =
    \begin{cases}
    R_t^{\to}(x,y)\dd t+o(\dd t), & y\neq x,\\
    1+R_t^{\to}(x,x)\dd t+o(\dd t), & y=x.
    \end{cases}
\]
Equivalently,
\[
    \Pp(X_{t+\dd t}=y\mid X_t=x)
    =
    \delta_{xy}+R_t^{\to}(x,y)\dd t+o(\dd t).
\]
If $p_t(x)=\Pp(X_t=x)$ is written as a row vector, then its evolution is
\begin{equation}\label{eq:ctmc-forward-equation}
    \partial_t p_t=p_tR_t^{\to},
    \qquad
    \partial_t p_t(y)=\sum_{x\in\cX}p_t(x)R_t^{\to}(x,y).
\end{equation}
This is the finite-state counterpart of the Fokker--Planck equation
\eqref{eq:fokker-planck}: a rate matrix replaces the drift-and-diffusion
operator acting on continuous densities.

The rate matrix is also where the corruption family is specified.  An
\emph{absorbing} (masked) generator sends each token to $[\mathsf M]$ at a
schedule-dependent rate and is the continuous-time version of the masked
diffusion of the previous subsections; a \emph{uniform} generator pushes each
token toward the uniform distribution over $\mc{V}$; other choices encode
structured token graphs.  Masked diffusion is therefore a special case of this CTMC picture.

The reverse process is again a CTMC, and its rates are fixed by a Bayes-ratio
formula that is the time-continuous version of the reverse
kernel~\eqref{eq:discrete-reverse}.  For $y\neq x$,
\begin{equation}\label{eq:ctmc-reverse-rate}
    R_t^{\leftarrow}(y,x)
    =
    R_t^{\to}(x,y)\,\frac{p_t(x)}{p_t(y)}.
\end{equation}
With the diagonal chosen so that rows sum to zero, these rates are exactly the
ones that track the same marginal in reverse time.  Indeed, for each
$y$ with $p_t(y)>0$,
\[
\begin{aligned}
    (p_tR_t^{\leftarrow})(y)
    &=
    \sum_{x\neq y}p_t(x)R_t^{\leftarrow}(x,y)
    -
    p_t(y)\sum_{x\neq y}R_t^{\leftarrow}(y,x)  \\
    &=
    \sum_{x\neq y}p_t(x)R_t^{\to}(y,x)\frac{p_t(y)}{p_t(x)}
    -
    p_t(y)\sum_{x\neq y}R_t^{\to}(x,y)\frac{p_t(x)}{p_t(y)}  \\
    &=
    p_t(y)\sum_{x\neq y}R_t^{\to}(y,x)
    -
    \sum_{x\neq y}p_t(x)R_t^{\to}(x,y)  \\
    &=
    -\partial_t p_t(y),
\end{aligned}
\]
where the last equality is the forward equation
\eqref{eq:ctmc-forward-equation}.  Thus
$-\partial_t p_t=p_tR_t^{\leftarrow}$, which is the forward equation for the
chain when it is simulated from time $T$ down to time $0$.  Starting the reverse
CTMC from $p_T$ therefore gives marginal $p_t$ at every intermediate time.
As in the continuous case, the forward rates $R_t^{\to}$ are known by design and
the only unknown ingredient is the collection of same-time density ratios over
neighboring states.  We write the exact score as
\[
    \sstar_t(x,y):=\frac{p_t(y)}{p_t(x)},
    \qquad y\neq x,\ R_t^{\to}(y,x)>0.
\]
This is the continuous-time form of the ratio
score~\eqref{eq:discrete-ratio-score}, the discrete analogue of
$\nabla\log p_t$ that records the finite relative change of the noised law along
each admissible move.  Let $\score_t(x,y)>0$ be the learned prediction,
$\score_t(x,y)\approx\sstar_t(x,y)$.
The orientation is chosen for reverse sampling: when the reverse chain is at
$x$, a candidate predecessor $y$ is admissible exactly when the forward CTMC can
jump from $y$ to $x$.  With this convention the learned reverse rate is
\[
    \wh R_t^{\leftarrow}(x,y)
    =
    R_t^{\to}(y,x)\,\score_t(x,y),
    \qquad y\neq x.
\]
At first the training target looks inaccessible, since the ratio contains the
unknown marginal law $p_t$.  The key denoising identity is that this unknown
ratio can be written as a posterior average of known forward likelihood ratios.
\begin{equation}\label{eq:discrete-ratio-denoising-identity}
    \sstar_t(x,y)
    =
    \E\!\left[
    \frac{\Pp(X_t=y\mid X_0)}{\Pp(X_t=x\mid X_0)}
    \,\middle|\, X_t=x
    \right].
\end{equation}
Thus training can sample clean data $X_0$, corrupt it to $X_t=x$, and use the
known forward likelihood ratio inside the loss.

Score-entropy training~\cite{lou2024sedd} turns this identity into a supervised
loss for positive ratios by first choosing a scalar discrepancy for one reverse
edge.  Let $\sstar>0$ denote the exact ratio, let $\score>0$ be the learned
prediction, and set $F(u)=-\log u$ on $u>0$.  Score entropy uses the scaled
Bregman divergence
\[
    \sstar D_F(\score,\sstar)
    =
    \sstar\bigl(F(\score)-F(\sstar)-F'(\sstar)(\score-\sstar)\bigr)
    =
    \score-\sstar\log\score+\sstar\log\sstar-\sstar.
\]
This quantity is nonnegative, has derivative $1-\sstar/\score$, and is minimized
at $\score=\sstar$; note that it is defined only for positive ratios, so the
learned $\score_t(x,y)$ must be constrained to remain positive.  In the learning objective the terms
$\sstar\log\sstar-\sstar$ are independent of the learned score.  Thus the
model-dependent part of one edge is
$\score_t(x,y)-\sstar_t(x,y)\log \score_t(x,y)$.  In training, the unknown
ratio $\sstar_t(x,y)$ is replaced by the computable forward likelihood ratio
from the noising process.
A schematic denoising score-entropy loss is
\begin{equation}\label{eq:score-entropy-loss}
    \E_{t,X_0}
    \sum_{x\in\cX}\Pp(X_t=x\mid X_0)
    \sum_{y:\,R_t^{\to}(y,x)>0}
    w_t(x,y)
    \left[
    \score_t(x,y)
    -
    \frac{\Pp(X_t=y\mid X_0)}{\Pp(X_t=x\mid X_0)}
    \log \score_t(x,y)
    \right],
\end{equation}
up to terms independent of the learned score.  Here $w_t(x,y)\geq0$ is a chosen
weight, often the incoming forward rate $R_t^{\to}(y,x)$ or a multiple of it.

The reason this objective has the correct population target is exactly the
denoising identity \eqref{eq:discrete-ratio-denoising-identity} above.
Since the displayed loss is affine in the likelihood ratio appearing in \eqref{eq:discrete-ratio-denoising-identity}, averaging
over the unknown clean data simply replaces it by its conditional mean.  The
resulting edgewise risk is, up to constants,
$
    \score_t(x,y)
    -
    \sstar_t(x,y)\log \score_t(x,y),
$
and is minimized at the exact score
$\score_t(x,y)=\sstar_t(x,y)$ on each reverse-admissible edge.

The special choice $w_t(x,y)=R_t^{\to}(y,x)$ gives this denoising loss its
sampling interpretation.  On the edge from the current state $x$ back to a possible
predecessor $y$, the true reverse rate is
$R_t^{\to}(y,x)\sstar_t(x,y)$, while the learned reverse rate is
$R_t^{\to}(y,x)\score_t(x,y)$.  The corresponding per-edge contribution to the
CTMC path-space relative entropy is (as will be shown in \eqref{eq:ctmc-score-entropy-error})
\[
    R_t^{\to}(y,x)\left[
    \score_t(x,y)
    -
    \sstar_t(x,y)
    +
    \sstar_t(x,y)
    \log
    \frac{\sstar_t(x,y)}{\score_t(x,y)}
    \right].
\]
After dropping terms independent of the learned score, this is exactly the
objective in \eqref{eq:score-entropy-loss} with $w_t(x,y)=R_t^{\to}(y,x)$.  Thus
the rate-weighted choice matches the path-space error quantity used in the CTMC
sampling analysis below.

\subsection{CTMC sampling}
\label{subsec:ctmc-sampling}

Once the learned ratios have specified the reverse rates, the remaining problem
is numerical simulation of the resulting jump process.
$\tau$-leaping is a standard acceleration of the stochastic simulation
algorithm for chemical reaction networks, introduced by
Gillespie~\cite{gillespie2001tau}.  The idea is to group many small jumps over a
short interval, while pretending that the jump rates are essentially constant
during that interval.

For the learned reverse CTMC, keep the same forward-time grid convention as
before,
\[
    0=t_0<t_1<\cdots<t_K=T,
    \qquad h_k=t_{k+1}-t_k.
\]
The reverse sampler moves from $t_{k+1}$ down to $t_k$.  If the current state is
$x$ at time $t_{k+1}$, the simplest tau-leap freezes the learned reverse rates
at $(t_{k+1},x)$ and uses the first-order kernel
\[
    \Pp(\wh X_{t_k}=z\mid \wh X_{t_{k+1}}=x)
    =
    h_k\,\wh R_{t_{k+1}}^{\leftarrow}(x,z)
    +O(h_k^2),
    \qquad z\neq x,
\]
with the remaining probability assigned to staying at $x$.  Equivalently, for
product spaces one draws independent Poisson clocks
$N_z\sim\Poi(h_k\wh R_{t_{k+1}}^{\leftarrow}(x,z))$ for the admissible local
moves and applies the proposed coordinate changes in parallel, with a fixed
tie-breaking convention if several incompatible clocks ring.  This is the
discrete analogue of an Euler--Maruyama step: it replaces the time-varying
reverse generator by a frozen one and, in parallel implementations, introduces
a local independence error by allowing several coordinates to update during one
step.  We refer to \cite[Chapter~13]{e2019appliedstochastic} for a more detailed
discussion of $\tau$-leaping and related algorithms.

Tau-leaping is convenient, but it is still an approximation because the rates
are frozen over a whole interval.  A closely related exact construction is
\emph{uniformization}, originally introduced by
Grassmann~\cite{grassmann1977transient}, applied to chemical reaction
simulation by Beentjes and Baker~\cite{beentjes2019uniformisation}, and applied
to discrete diffusion models by Chen and Ying~\cite{chenying2024discrete}.

Let us describe how uniformization works.  On the interval
$[t_k,t_{k+1}]$, choose a clock rate
\[
    \Lambda_k
    \geq
    \sup_{\substack{s\in[t_k,t_{k+1}]\\ x\in\cX}}
    \sum_{z\neq x}\wh R_s^{\leftarrow}(x,z).
\]
Starting from the state at time $t_{k+1}$, draw Poisson event times in
$[t_k,t_{k+1}]$ with rate $\Lambda_k$ and process them in decreasing time.  At
an event time $s$, if the current state is $x$, jump to $z\neq x$ with
probability $\wh R_s^{\leftarrow}(x,z)/\Lambda_k$ and otherwise stay at $x$.
The stay-put events are virtual jumps.  Since the clock dominates all outgoing
rates, this thinning construction has exactly the jump law of the learned
reverse CTMC on the interval, without a frozen-rate approximation.

\subsection{Error analysis for CTMC samplers}
\label{subsec:ctmc-error-analysis}

Having described the two CTMC simulation schemes, we now ask how their output
laws differ from the desired data law.
The numerical analysis can be organized in the same way as in
Section~\ref{sec:error-analysis}.  There are three contributions.  First, the
exact reverse process should start from $p_T$, whereas the sampler is initialized
from some implementable law $\wh p_T$.  This gives the common initialization
gap.  Define
\[
    \epsinit^2:=\KL(p_T\|\wh p_T).
\]
Two common choices of $\wh p_T$ should be interpreted differently.  If
$\wh p_T$ is a point mass, then $\epsinit^2$ is mainly a support condition.  In
absorbing masked diffusion, for instance, one often takes
$\wh p_T=\delta_{[\mathsf M]^L}$; the KL is finite only if the forward terminal
law $p_T$ is also supported on the all-mask state.  With a finite integrated
masking rate this may fail, so one should either force exact terminal
absorption or measure the endpoint error in a weaker metric.

If $\wh p_T$ is full-support, the same term is an ordinary initialization
error.  For example, if $\wh p_T$ is uniform and the forward CTMC contracts KL
to $\wh p_T$ at rate $\rho$, then
\[
    \epsinit^2
    \leq e^{-\rho T}\KL(p_0\|\wh p_T)
    \leq e^{-\rho T}\log|\cX|.
\]
On product token spaces $\cX=\mc{V}^L$, this term is of order
$L(\log|\mc{V}|)e^{-T}$ for the usual independent-coordinate corruption.

After initialization, fix a simulator and first imagine running that simulator
with the true reverse rates, and only afterwards replace the true ratio score by
the learned ratio score:
\[
    p_0
    \quad\longrightarrow\quad
    p_0^{\mathsf{num},*}
    \quad\longrightarrow\quad
    \wh p_0^{\mathsf{num}}.
\]
Here $p_0^{\mathsf{num},*}$ denotes the output law of the chosen numerical
simulator when it is run with the true reverse rates.  The first comparison
isolates numerical error, while the second comparison isolates score-estimation
error.  Strictly speaking, KL has no triangle inequality, so the rigorous proof
usually performs this split on path space, or directly in the log-intensity
integrand, rather than by adding two marginal KL divergences.  Nevertheless,
this bookkeeping is useful for organizing the error analysis.

The score-estimation term has a path-space form analogous to Girsanov's theorem
for diffusions.  We write it in the same epsilon-squared convention as the
continuous score error:
\begin{align}\label{eq:ctmc-score-entropy-error}
    \epsSE^2
    :=
    \int_0^T
    \E_{X_t\sim p_t}
    \sum_{y:\,R_t^{\to}(y,X_t)>0}
    R_t^{\to}(y,X_t)
    \left[
    \score_t(X_t,y)
    -\sstar_t(X_t,y)
    +\sstar_t(X_t,y)
    \log
    \frac{\sstar_t(X_t,y)}{\score_t(X_t,y)}
    \right]\dd t.
\end{align}
This is the same score-entropy Bregman loss that appears in the learning
objective \eqref{eq:score-entropy-loss}, which explains why it is a natural
training loss.

\begin{proof}[Proof of \eqref{eq:ctmc-score-entropy-error}]
Compare the exact reverse CTMC with rates
$R_t^{\leftarrow}(x,y)=R_t^{\to}(y,x)\sstar_t(x,y)$ to the learned reverse CTMC
with rates
$\wh R_t^{\leftarrow}(x,y)=R_t^{\to}(y,x)\score_t(x,y)$, and assume the learned
rates are positive wherever the exact rates are positive.  Over a small interval
of length $\dd t$, condition on the current state $x$.  Using the diagonal for
the stay-put probability, the one-step KL is
\[
\begin{aligned}
    &\sum_{y\neq x}R_t^{\leftarrow}(x,y)\dd t
    \log
    \frac{R_t^{\leftarrow}(x,y)\dd t}
         {\wh R_t^{\leftarrow}(x,y)\dd t}
    +
    \bigl(1+R_t^{\leftarrow}(x,x)\dd t\bigr)
    \log
    \frac{1+R_t^{\leftarrow}(x,x)\dd t}
         {1+\wh R_t^{\leftarrow}(x,x)\dd t}
    +o(\dd t)  \\
    &\qquad=
    \dd t
    \sum_{y\neq x}
    \left[
    \wh R_t^{\leftarrow}(x,y)-R_t^{\leftarrow}(x,y)
    +R_t^{\leftarrow}(x,y)
    \log
    \frac{R_t^{\leftarrow}(x,y)}
         {\wh R_t^{\leftarrow}(x,y)}
    \right]
    +o(\dd t).
\end{aligned}
\]
Substituting the two reverse rates cancels the common factor
$R_t^{\to}(y,x)$ inside the logarithm and gives
\[
    \sum_{y:\,R_t^{\to}(y,x)>0}
    R_t^{\to}(y,x)
    \left[
    \score_t(x,y)-\sstar_t(x,y)
    +\sstar_t(x,y)\log\frac{\sstar_t(x,y)}{\score_t(x,y)}
    \right]\dd t
    +o(\dd t).
\]
The exact reverse process has marginal $p_t$ at noise time $t$.  Averaging over
$X_t\sim p_t$ and integrating over $t\in[0,T]$ gives
\eqref{eq:ctmc-score-entropy-error}; if the two reverse chains start from
different laws, the additional contribution is the initialization KL error.
\end{proof}

For $\tau$-leaping, the exact-score numerical law is not the true reverse CTMC
law.  On each interval $[t_k,t_{k+1}]$ it replaces the time-dependent true
reverse rates by the frozen rates $R_{t_{k+1}}^{\leftarrow}$.  Replacing these
frozen true rates by $\wh R_{t_{k+1}}^{\leftarrow}$ then adds the
score-estimation part.  Let $\bar h$ denote the mesh parameter controlling the
reverse step sizes, and let $\bar R$ be an upper bound on the total outgoing
reverse rate,
$\sum_{z\neq x}R_s^{\leftarrow}(x,z)\leq\bar R$ on $[0,T]$.  A
stochastic-integral analysis of the combined comparison gives, under
bounded-rate and regularity assumptions, a bound of the schematic form
\begin{equation}\label{eq:tau-leaping-kl-general}
    \KL\!\left(p_0\middle\|\wh p_0^\tau\right)
    \lesssim
    \epsinit^2
    +\epsSE^2
    +\bar R^{\,2}\bar h T.
\end{equation}
The three terms are the initialization error, the score-estimation error, and
the exact-score tau-leaping discretization error.

For product token spaces $\cX=\mc{V}^L$, more recent work gives a sharper
version of the same message with explicit dependence on the vocabulary size.
If the learned ratios are clipped to $[B^{-1},B]$ and the reverse grid is
controlled by mesh parameter $\bar h$, then the standard tau-leaping sampler
obeys, suppressing logarithmic factors and endpoint regularity constants,
\begin{equation}\label{eq:tau-leaping-kl-product}
    \KL\!\left(p_0\middle\|\wh p_0^\tau\right)
    \lesssim
    \epsinit^2
    +\epsSE^2
    +\bar h\, L^2|\mc{V}|\,T,
\end{equation}
as in the analysis of Liang et al.~\cite{liang2025sampler}.  For the usual
independent-coordinate corruption one may substitute
$\epsinit^2\lesssim L(\log|\mc{V}|)e^{-T}$.  Thus making the tau-leaping
discretization error of order $\eps$ requires
$\bar h$ of order $\tO(\eps/(L^2|\mc{V}|T))$, and hence a deterministic grid
with roughly $\tO(L^2|\mc{V}|/\eps)$ reverse steps in this analysis.

We can achieve high-accuracy sampling using uniformization.  With the true
ratio score, it simulates the true reverse CTMC exactly, so
$p_0^{\mathsf{uni},*}=p_0$ if the reverse process is initialized from
$p_T$.  Thus, for the learned score, the error estimate of uniformization reduces to
\begin{equation}\label{eq:ctmc-exact-kl}
    \KL\!\left(p_0\middle\|\wh p_0^{\rm uni}\right)
    \leq
    \epsinit^2
    +\epsSE^2.
\end{equation}
On product token spaces $\cX=\mc{V}^L$ with the usual independent-coordinate
corruption, one may substitute
$\epsinit^2\lesssim L(\log|\mc{V}|)e^{-T}$ in \eqref{eq:ctmc-exact-kl}, giving
$\KL(p_0\|\wh p_0^{\rm uni})\lesssim
L(\log|\mc{V}|)e^{-T}+\epsSE^2$.
Choosing $T\asymp\log(L\log|\mc{V}|/\eps^2)$ and learning the ratio score so
that $\epsSE^2\lesssim\eps^2$ gives
$\KL(p_0\|\wh p_0^{\rm uni})\lesssim\eps^2$.  Chen and
Ying~\cite{chenying2024discrete} further show that, with adaptive dominating
rates, the expected number of Poisson events is nearly linear in the hypercube
dimension.  Their theorem is stated for $\{0,1\}^d$; for an alphabet of size
$|\mc{V}|$, a fixed binary encoding uses
$q=\lceil\log_2|\mc{V}|\rceil$ bits per token, so $d=Lq$.  In token notation
this gives an expected event count
$\tO\!\left(L\log|\mc{V}|\right)$
for $\eps^2$ KL error.

\section{Guidance, Reward Tilting, and Inference-Time RL}
\label{sec:guidance}

So far we have built diffusion samplers, continuous and discrete, that aim to
reproduce the data distribution, or a slightly smoothed version of it.  In many
applications one instead wants a controlled modification of that base sampler:
generate samples satisfying a condition, prefer high-reward outputs, or adapt a
model at inference time.  Guidance, reward tilting, and inference-time RL are all
mechanisms for biasing the base model toward preferred outputs while staying
close to the pretrained distribution, and their common mathematical language is
the KL-regularized change of measure developed in this section.

\subsection{Reward-tilted targets and KL-regularized optimization}

Let $p_0$ denote the clean-output distribution produced by the pretrained
sampler; we keep the diffusion convention that clean samples are written as
$x_0$.  At this point we specify only the desired law on clean outputs; the
corresponding reverse-time dynamics will be derived below.  Given a reward
$r:\R^d\to\R$ and inverse temperature $\beta\geq0$, define the exponentially
tilted target
\begin{equation}\label{eq:reward-tilt}
    p_0^\beta(x_0)
    =
    \frac{1}{Z_\beta}p_0(x_0)e^{\beta r(x_0)}.
\end{equation}
Here
\[
    Z_\beta=\int p_0(x_0)e^{\beta r(x_0)}\dd x_0
    =
    \E_{X_0\sim p_0}e^{\beta r(X_0)}
\]
is the normalizing constant, assumed finite.  Equivalently, for any test
function $f$,
\[
    \E_{p_0^\beta}f(X_0)
    =
    \frac{\E_{p_0}\!\left[f(X_0)e^{\beta r(X_0)}\right]}
    {\E_{p_0} e^{\beta r(X_0)}}.
\]
Thus the target is obtained from the base model by reweighting samples
according to their clean-output reward.  The case $\beta=0$ recovers $p_0$, while
larger $\beta$ places more mass on high-reward regions and therefore trades
diversity under the base model for reward improvement.

Conditional generation fits the same form.  If $y$ is an observation, class
label, or prompt and $p(y\mid x_0)$ is the corresponding likelihood or
compatibility model, then Bayes' rule gives
\[
    p_0(x_0\mid y)\propto p_0(x_0)p(y\mid x_0).
\]
This is an exponential tilt with reward function
$x_0\mapsto\log p(y\mid x_0)$ and $\beta=1$; a classifier score or learned
preference model plays the same role when an explicit likelihood is unavailable.
The inference-time problem is to modify the reverse sampler so that its
clean-output law approximates such a tilted or conditional target while still
reusing the pretrained base dynamics.

The same tilted law has a useful variational meaning.  Among all distributions
$q$ over final samples, it is the optimizer of
\begin{equation}\label{eq:gibbs-variational}
    \sup_q
    \left\{
    \beta\E_q r-\KL(q\|p_0)
    \right\}.
\end{equation}
Equivalently, if $\beta>0$,
\[
    p_0^\beta
    =
    \argmax_q
    \left\{
    \E_q r-\frac1\beta\KL(q\|p_0)
    \right\}.
\]
Thus $\beta$ controls the tradeoff between
reward improvement and staying close to the base model.  Large $\beta$ pushes
hard toward high reward and risks mode collapse or reward hacking; small
$\beta$ preserves the base distribution but gives weaker alignment.

\begin{proof}
Let $p_0^\beta(x)=Z_\beta^{-1}p_0(x)e^{\beta r(x)}$.  For any $q$,
\[
    \KL(q\|p_0^\beta)
    =
    \int q(x)\log\frac{q(x)}{p_0(x)e^{\beta r(x)}/Z_\beta}\dd x
    =
    \KL(q\|p_0)-\beta\E_q r+\log Z_\beta.
\]
Rearranging,
\[
    \beta\E_q r-\KL(q\|p_0)
    =
    \log Z_\beta-\KL(q\|p_0^\beta)
    \leq \log Z_\beta,
\]
with equality iff $q=p_0^\beta$.
\end{proof}

\subsection{Guidance as score tilting}
\label{subsec:guidance-score}

To run a diffusion sampler for the tilted target $p_0^\beta$ we need the score of
its noised marginals, so the question is how that score differs from the base
score we have already learned.  Let $p_t$ and $p_t^\beta$ be the noised marginals
of $p_0$ and $p_0^\beta$.  Write $P_{0,t}^{\to}(x_0\to x)$ for the density of
the clean-to-noisy marginal forward kernel $\Law(X_t\mid X_0=x_0)$.  Because the reward acts only on the clean
sample $x_0$, noising the tilted law gives
\[
    p_t^\beta(x)
    =
    \frac{1}{Z_\beta}\int P_{0,t}^{\to}(x_0\to x)\,
    e^{\beta r(x_0)}p_0(x_0)\dd x_0
    =
    \frac{p_t(x)}{Z_\beta}\,h_t^\beta(x),
    \qquad
    h_t^\beta(x):=\E\!\left[e^{\beta r(X_0)}\mid X_t=x\right].
\]
The noised tilted law is thus the base noised law reweighted by the
\emph{posterior tilt factor} $h_t^\beta$, which reads off, from the current noisy
state, the expected exponential reward of the clean sample it will denoise to.
Because this marginal is a product, its log-gradient splits into the base score
plus a correction,
\[
    \nabla\log p_t^\beta(x)
    =
    \nabla\log p_t(x)+\nabla\log h_t^\beta(x),
\]
so exact guidance merely adds the tilt gradient $\nabla\log h_t^\beta$ to the
base score, and practical methods differ only in how they approximate it.

For conditional generation, the reward function is
$x_0\mapsto\log p(y\mid x_0)$.  The same posterior-tilt notation gives the noisy
likelihood
\[
    h_t(x\mid y)
    :=
    \E[p(y\mid X_0)\mid X_t=x].
\]
This is the quantity a noisy classifier estimates.  It is not the noised
conditional density itself; rather, it tilts the base noisy density.  If
the condition is fixed, the noised conditional density is
\[
    p_t(x\mid y)\propto p_t(x)h_t(x\mid y),
\]
where the missing normalizing constant is independent of $x$.  Taking a
log-gradient at noise level $t$ gives
\begin{equation}\label{eq:classifier-guidance}
    \nabla\log p_t(x\mid y)
    =
    \nabla\log p_t(x)
    +
    \nabla\log h_t(x\mid y).
\end{equation}
This is classifier guidance~\cite{dhariwal2021guidance}: train or use a
classifier on noisy inputs, then add its gradient to the unconditional score.
Classifier-free guidance~\cite{ho2022cfg} estimates the same increment without
a separate classifier.  We use the already established score notation in the
form
\[
    \score_t(x)\approx\nabla\log p_t(x),
    \qquad
    \score_t(x\mid y)\approx\nabla\log p_t(x\mid y).
\]
Thus \eqref{eq:classifier-guidance} suggests
\[
    \score_t(x\mid y)-\score_t(x)
    \approx
    \nabla\log h_t(x\mid y).
\]
With guidance strength $\beta\geq0$, classifier-free guidance replaces the
unconditional score by
\begin{equation}\label{eq:cfg}
    \score_t(x)
    \quad\longmapsto\quad
    \score_t(x)
    +
    \beta\bigl(\score_t(x\mid y)-\score_t(x)\bigr).
\end{equation}
The choice $\beta=1$ gives the conditional score estimate $\score_t(x\mid y)$, while
$\beta>1$ extrapolates the conditional-score increment; empirically this often
improves condition satisfaction, at the cost of moving farther from the base
distribution.
If the two learned scores were exact and $\beta=1$, the approximation \eqref{eq:cfg}
would recover the conditional score in \eqref{eq:classifier-guidance}.  For
$\beta\neq1$, it instead gives the score of the noise-level power tilt
$p_t(x)h_t(x\mid y)^\beta$, whereas the clean reward tilt
\eqref{eq:reward-tilt} would involve the posterior factor
$h_t^\beta(x)=\E[p(y\mid X_0)^\beta\mid X_t=x]$.  Powering the noisy likelihood is
not the same as tilting by the noisy expectation of the powered clean
likelihood, so the two coincide only at $\beta=1$.

For small tilts, the posterior tilt factor has a linear approximation.  Since
$h_t^\beta(x)=1+\beta V_t(x)+O(\beta^2)$ with
$V_t(x)=\E[r(X_0)\mid X_t=x]$, we have
\begin{equation}\label{eq:value-guidance}
    \nabla\log p_t^\beta(x)
    \approx
    \nabla\log p_t(x)
    +
    \beta\nabla V_t(x),
    \qquad
    V_t(x)=\E[r(X_0)\mid X_t=x].
\end{equation}
In the small-tilt regime, then, guidance perturbs the score by the gradient of
the posterior expected reward $V_t$, so a reward model need only capture this
first-order landscape.  The simplification is that $V_t$ averages the reward $r$
itself, whereas the exact factor $h_t^\beta$ averages $e^{\beta r}$; the two
agree only to first order in $\beta$.  Exact guidance uses the full $h_t^\beta$,
and the small-tilt form trades it for a cheaper reward-gradient correction.

\subsection{Reward tilting as a Polchinski flow}
\label{subsec:polchinski-tilt}

Stepping back from the practical methods, the tilt factor $h_t^\beta$ ties
guidance to the renormalization viewpoint of
Section~\ref{sec:stochastic-localization-polchinski}.  Recall the
variance-exploding normalization $X_t=X_0+\sqrt t\,Z$ used there, in which
$p_t=p_0*\normal{0}{t\Id}$ and the effective potential $U_t=-\log p_t$ solves the
finite-dimensional Polchinski equation~\eqref{eq:polchinski-finite}.
Reading the integral that defines $h_t^\beta$ the other way, the
\emph{unnormalized} tilted marginal
$h_t^\beta\,p_t=(e^{\beta r}p_0)*\normal{0}{t\Id}$ is the same forward
channel applied to the reward-reweighted data measure $e^{\beta r}p_0$---noising
the tilted data and tilting the noised data by $h_t^\beta$ coincide---so it is a
heat flow and solves~\eqref{eq:heat-flow-density} just as $p_t$ does.  Dividing
out $p_t$ leaves a forward Kolmogorov equation for the tilt factor,
\begin{equation}\label{eq:tilt-pde}
    \partial_t h_t^\beta
    =
    \tfrac12\Delta h_t^\beta
    +
    \sstar_t\cdot\nabla h_t^\beta,
\end{equation}
so $h_t^\beta$ is transported by the score-driven generator
$\tfrac12\Delta+\sstar_t\cdot\nabla$.  Equivalently, the tilted effective
potential $U_t^\beta:=-\log(h_t^\beta\,p_t)$ obeys the \emph{same} Polchinski equation,
\begin{equation}\label{eq:polchinski-tilt}
    \partial_t U_t^\beta
    =
    \tfrac12\Delta U_t^\beta
    -
    \tfrac12\norm{\nabla U_t^\beta}^2,
\end{equation}
only with the bare potential shifted from $U_0=-\log p_0$ to $U_0^{\beta} = -\log p_0-\beta r$.
Reward tilting therefore runs the renormalization flow from a
reward-shifted initial potential, and the guided score
$-\nabla U_t^\beta=\sstar_t+\nabla\log h_t^\beta$ is exactly the additive
guidance correction of the previous subsection.

It should be emphasized that this is a reformulation, not a recipe.  Solving the
tilt equation~\eqref{eq:tilt-pde} or the Polchinski
equation~\eqref{eq:polchinski-tilt} is no easier than evaluating the posterior
expectation $h_t^\beta(x)=\E[e^{\beta r(X_0)}\mid X_t=x]$ that defines it: the PDE
carries the same computational difficulty as the conditional expectation.

\subsection{Path-space control and the Doob transform}

The score-tilting, classifier-guidance, and Polchinski-flow descriptions of the
previous subsections are all exact, but they all rest on the same intractable
object, the posterior tilt $h_t^\beta$.  To see how it is approximated in
practice it helps to pass from marginals to whole reverse trajectories: on path
space the tilt becomes a stochastic control problem, and its exact
solution---the Doob transform of the base reverse chain---exposes precisely which
conditional expectations must be estimated.

Let us write a continuous reverse sampler in generation time as
\[
    \dd Y^{\leftarrow}_s=b_s(Y^{\leftarrow}_s)\dd s+\sigma_s\dd B^{\leftarrow}_s,
    \qquad 0\leq s\leq T,
\]
and let $\Pp^0$ be its path law on full reverse trajectories
$(Y^{\leftarrow}_s)_{0\leq s\leq T}$, with clean output $Y^{\leftarrow}_T$.

Reward-tilted sampling trades terminal reward against the cost of departing from
this base law.  At the level of path laws it is the KL-regularized optimization
\begin{equation}\label{eq:path-gibbs}
    \sup_{\mathbb Q}
    \left\{
    \beta\E_{\mathbb Q}r(Y^{\leftarrow}_T)-\KL(\mathbb Q\|\Pp^0)
    \right\}
    =
    \log\E_{\Pp^0}e^{\beta r(Y^{\leftarrow}_T)},
\end{equation}
the path-space version of the Gibbs variational principle
\eqref{eq:gibbs-variational}.  Its optimizer is the Gibbs path law
\begin{equation}\label{eq:path-gibbs-optimizer}
    \frac{\dd\mathbb Q^\star}{\dd\Pp^0}
    \!\left((Y^{\leftarrow}_s)_{0\leq s\leq T}\right)
    =
    \frac{e^{\beta r(Y^{\leftarrow}_T)}}
    {\E_{\Pp^0}e^{\beta r(Y^{\leftarrow}_T)}}.
\end{equation}

\begin{proof}
The proof is identical to the finite-dimensional reward-tilting proof.  Define
$\dd\mathbb Q^\star\propto e^{\beta r(Y^{\leftarrow}_T)}\dd\Pp^0$.  Then
\[
    \KL(\mathbb Q\|\mathbb Q^\star)
    =
    \KL(\mathbb Q\|\Pp^0)-\beta \E_{\mathbb Q}r(Y^{\leftarrow}_T)
    +\log\E_{\Pp^0}e^{\beta r(Y^{\leftarrow}_T)}.
\]
Rearranging and using nonnegativity of KL proves \eqref{eq:path-gibbs}, with the
supremum attained at $\mathbb Q^\star$.
\end{proof}

To turn this into a sampler we express it as a control problem on the drift.  A
controlled sampler changes only the drift,
\[
    \dd Y^{\leftarrow}_s=\bigl(b_s(Y^{\leftarrow}_s)+\sigma_s u_s(Y^{\leftarrow}_s)\bigr)\dd s+\sigma_s\dd B^{\leftarrow}_s,
\]
with path law $\Pp^u$; using the same diffusion coefficient means the two path
laws differ only through the drift.  Under the usual absolute-continuity and
integrability assumptions, the Girsanov KL formula in
Appendix~\ref{app:ito-girsanov} gives
\begin{equation}\label{eq:girsanov-control-cost}
    \KL(\Pp^u\|\Pp^0)
    =
    \frac12
    \E_{\Pp^u}\int_0^T\norm{u_s(Y^{\leftarrow}_s)}^2\dd s,
\end{equation}
with $u_s(Y^{\leftarrow}_s)$ a progressively measurable control, so the quadratic
control energy is exactly the KL cost of steering the sampler away from its base
law.  Restricting $\mathbb Q$ to controlled diffusions $\Pp^u$ and substituting
this cost into \eqref{eq:path-gibbs} makes the Gibbs problem the equivalent
stochastic control problem over drifts
\begin{equation}\label{eq:control-objective}
    \sup_u
    \E_{\Pp^u}\left[
    \beta r(Y^{\leftarrow}_T)
    -
    \frac12\int_0^T\norm{u_s(Y^{\leftarrow}_s)}^2\dd s
    \right].
\end{equation}
The two are the same optimization stated at the level of measures and of drifts.

Reading the optimizer \eqref{eq:path-gibbs-optimizer} back at the level of drifts
recovers the slogan ``guidance is control'': the optimal control's drift
increment is exactly the value-gradient term of the ideal guided reverse dynamics
of Subsection~\ref{subsec:guidance-score}, now in this generation-time
parametrization.  The KL cost \eqref{eq:girsanov-control-cost}, which measured
discretization error in Section~\ref{sec:error-analysis}, is here a deliberately
chosen guidance budget, spent where it raises expected terminal reward most
efficiently.

From here to the end of the section we adopt the discrete-time point of view,
working with the time-discretized reverse chain $(X_K,\ldots,X_0)$ of
Section~\ref{sec:discretizing} rather than the continuous SDE\@.  The reason is
that the objects we now build---the Doob transform below, the sequential Monte
Carlo weights of Subsection~\ref{subsec:feynman-kac}, and the policies of
Subsection~\ref{subsec:inference-rl}---all act step by step, and a finite chain
of reverse kernels states them without stochastic-calculus bookkeeping; the
continuous formulas above are recovered in the small-step limit.  In this
notation the clean output is indexed by $0$, so a terminal reward is written as
$r(X_0)$.  The path-space optimizer has an explicit Markov-kernel form: each
base reverse proposal is reweighted by the tilt factor of its continuations and
normalized by the tilt factor at the current state.  This kernel reweighting is
the Doob transform of the base reverse chain.

\begin{theorem}[Optimal path tilt and Doob transform]
\label{thm:path-doob-transform}
Let $\Pp^0$ be a base reverse Markov chain on
$(X_K,X_{K-1},\ldots,X_0)$ with initial law $p_K^0$ and reverse kernels
$P_k^{0,\leftarrow}(x_{k+1}\to \dd x_k)$.  For a terminal reward
$r(X_0)$, define
\[
    h_k(x_k)
    =
    \E_{\Pp^0}\!\left[e^{\beta r(X_0)}\mid X_k=x_k\right].
\]
Then the optimizer of the path-space Gibbs problem
\eqref{eq:path-gibbs} is the tilted path law
\[
    \frac{\dd\mathbb Q^\star}{\dd\Pp^0}
    =
    \frac{e^{\beta r(X_0)}}{\E_{\Pp^0}e^{\beta r(X_0)}}.
\]
Moreover, $\mathbb Q^\star$ is Markov, and its reverse kernels are
\begin{equation}\label{eq:guided-reverse-kernel}
    P_k^{\star,\leftarrow}(x_{k+1}\to \dd x_k)
    =
    P_k^{0,\leftarrow}(x_{k+1}\to \dd x_k)
    \frac{h_k(x_k)}{h_{k+1}(x_{k+1})},
    \qquad
    h_{k+1}(x_{k+1})
    =
    \int h_k(y)P_k^{0,\leftarrow}(x_{k+1}\to \dd y).
\end{equation}
\end{theorem}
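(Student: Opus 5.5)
The plan is to split the statement into two parts: identifying the optimizer, and identifying its Markov kernel structure.

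For the first part, I would repeat the Gibbs variational argument already used for \eqref{eq:path-gibbs}. Nothing in that argument used the continuous-time structure. For any path law $\mathbb Q\ll\Pp^0$, one has
\[
\KL(\mathbb Q\|\mathbb Q^\star)=\KL(\mathbb Q\|\Pp^0)-\beta\E_{\mathbb Q}r(X_0)+\log\E_{\Pp^0}e^{\beta r(X_0)} .
\]
Nonnegativity of KL then gives the supremum, attained exactly at $\mathbb Q^\star$. So the first claim reduces to the earlier proof, provided $Z=\E_{\Pp^0}e^{\beta r(X_0)}<\infty$.

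For the Markov structure, I would first establish the harmonic recursion for $h$. By definition $h_0=e^{\beta r}$. Under $\Pp^0$, the reverse Markov property says that, given $X_{k+1}$, the variable $X_k$ has law $P_k^{0,\leftarrow}(X_{k+1}\to\cdot)$, and the future $X_{k-1},\dots,X_0$ depends on the past only through $X_k$. The tower property then gives
\[
h_{k+1}(x_{k+1})=\E\bigl[\E[e^{\beta r(X_0)}\mid X_k,X_{k+1}]\mid X_{k+1}=x_{k+1}\bigr]=\int h_k(y)P_k^{0,\leftarrow}(x_{k+1}\to\dd y),
\]
which is the second identity in \eqref{eq:guided-reverse-kernel}. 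In particular each kernel $P_k^{\star,\leftarrow}$ integrates to one wherever $h_{k+1}>0$. The set where $h_{k+1}=0$ has $\mathbb Q^\star$-probability zero, so I would define the kernel arbitrarily there.

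Next I would factor the density by telescoping. Since $h_0(x_0)=e^{\beta r(x_0)}$ and $\E_{p_K^0}h_K=Z$,
\[
\frac{e^{\beta r(x_0)}}{Z}=\frac{h_K(x_K)}{Z}\prod_{k=0}^{K-1}\frac{h_k(x_k)}{h_{k+1}(x_{k+1})} .
\]
Multiplying the path law $p_K^0(\dd x_K)\prod_k P_k^{0,\leftarrow}(x_{k+1}\to\dd x_k)$ by this factor writes $\mathbb Q^\star$ as an initial law $p_K^0h_K/Z$ followed by the kernels $P_k^{\star,\leftarrow}$. A path law of the form ``initial law times a product of kernels'' is Markov with those kernels, exactly as in the factorization used in Lemma~\ref{lem:kl-chain}. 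That settles both the Markov claim and the kernel formula.

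The main obstacle is measure-theoretic rather than conceptual. One has to justify the conditional expectations and the tower step for general state spaces, which only needs integrability of $e^{\beta r}$. One also has to handle the null set $\{h_{k+1}=0\}$ and division by $h$. I would address this by working with regular conditional distributions given by the base kernels and verifying that the displayed product equals $\dd\mathbb Q^\star/\dd\Pp^0$ $\Pp^0$-almost surely.
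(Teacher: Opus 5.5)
Your proof is correct. The variational part is the same as the paper's; you differ in how you identify the kernels.

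The paper argues Bayes-style. It conditions on $X_{k+1}=x_{k+1}$ and states that under $\mathbb Q^\star$ the next state has law proportional to $P_k^{0,\leftarrow}(x_{k+1}\to\dd x_k)$ times the expected future weight $\E_{\Pp^0}[e^{\beta r(X_0)}\mid X_k=x_k]=h_k(x_k)$. It then reads off the normalizer $h_{k+1}$ from the base Markov property.

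You instead telescope $e^{\beta r(x_0)}/Z$ into $\frac{h_K(x_K)}{Z}\prod_{k}\frac{h_k(x_k)}{h_{k+1}(x_{k+1})}$ and recognize $\mathbb Q^\star$ as an initial law followed by probability kernels. This is exactly the computation the paper performs later in Proposition~\ref{prop:untwisted-weight}, run in the opposite direction.

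\textbf{What your route buys.} The Markov property becomes automatic, since a path law of the form ``initial law times kernels'' is Markov with those kernels. You also identify the initial law $p_K^0h_K/Z$ of $\mathbb Q^\star$, which the theorem leaves implicit but the paper relies on afterwards. By contrast, the paper conditions only on $X_{k+1}$. Its Markov claim tacitly needs conditioning on the whole past $(X_K,\ldots,X_{k+1})$ and using the base Markov property to reduce the expected future weight to $h_k(X_k)$.

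\textbf{What the paper's route buys.} It is shorter and makes the ``reweight by expected future reward'' intuition transparent. It also never divides by $h$.

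\textbf{A refinement of your null-set discussion.} Define $h_k$ through the kernel recursion starting from $h_0=e^{\beta r}>0$. Then $h_k>0$ everywhere, so $\{h_{k+1}=0\}$ is empty. The set you actually need to discard is $\{h_{k+1}=\infty\}$. It is $\Pp^0$-null, hence $\mathbb Q^\star$-null, because $\E_{\Pp^0}h_{k+1}(X_{k+1})=Z<\infty$. Accordingly, each $P_k^{\star,\leftarrow}(x_{k+1}\to\cdot)$ is a probability kernel wherever $0<h_{k+1}(x_{k+1})<\infty$.
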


\begin{proof}
The variational optimality of the tilted path law follows from nonnegativity of
KL\@.  It remains to compute its kernels.  Condition on the current reverse-time
state $X_{k+1}=x_{k+1}$.  Under the tilted law, the
conditional distribution of the next state $X_k$ is proportional to the base
conditional distribution multiplied by the expected future weight:
\[
    P_k^{\star,\leftarrow}(x_{k+1}\to \dd x_k)
    \propto
    P_k^{0,\leftarrow}(x_{k+1}\to \dd x_k)
    \E_{\Pp^0}\!\left[e^{\beta r(X_0)}\mid X_k=x_k\right].
\]
By definition, the expectation is $h_k(x_k)$.  The normalizing constant is
\[
    \int h_k(y)P_k^{0,\leftarrow}(x_{k+1}\to \dd y)
    =
    \E_{\Pp^0}\!\left[e^{\beta r(X_0)}\mid X_{k+1}=x_{k+1}\right]
    =
    h_{k+1}(x_{k+1}),
\]
where the middle equality uses the Markov property.  This gives the displayed
Doob-transform kernel and shows that the tilted path law is again Markov.
\end{proof}

The value function $h_k$ is intractable, and the two ways of coping with this
organize the rest of the section.  One route uses an approximate value only as a
proposal and removes its bias by reweighting, giving an \emph{exact} sampler in
the limit (Subsection~\ref{subsec:feynman-kac}); the other approximates the value
and accepts the resulting bias, the inference-time-RL view
(Subsection~\ref{subsec:inference-rl}).

\subsection{Feynman--Kac correction and sequential Monte Carlo}
\label{subsec:feynman-kac}

Theorem~\ref{thm:path-doob-transform} gives the exact guided sampler, but its
kernels require the value functions
$h_k(x_k)=\E_{\Pp^0}[e^{\beta r(X_0)}\mid X_k=x_k]$, which are exactly as
unavailable as the posterior reward-to-go in the score picture.  This subsection
takes the first route of the fork above: use an approximate value only as a
\emph{proposal} and remove its bias by reweighting.  That is the
Feynman--Kac / sequential Monte Carlo route, and it is the mechanism behind
inference-time scaling of guided samplers.

The name comes from the value function.  Along the base reverse process, $h$ is
a conditional expectation of a terminal functional, hence a martingale,
\[
    h_{k+1}(x_{k+1})=\int h_k(y)\,
    P_k^{0,\leftarrow}(x_{k+1}\to \dd y),
\]
the same identity that appeared in Theorem~\ref{thm:path-doob-transform}.  In
continuous time it solves the backward equation
$\partial_s h_s+\mathcal L_s h_s=0$ with terminal data $h=e^{\beta r}$, where
$\mathcal L_s$ is the generator of the base reverse diffusion; this is the
reverse-time companion of the noising-time equation~\eqref{eq:tilt-pde} for the
same tilt factor in Subsection~\ref{subsec:polchinski-tilt}.  At the clean end
$k=0$ the state is $X_0$ itself, so the terminal value is exactly
$h_0(x_0)=e^{\beta r(x_0)}$.

The key fact is that the entire telescoping of the Doob transform collapses to a
single terminal weight.

\begin{prop}[Untwisted reweighting]
\label{prop:untwisted-weight}
Let $\Pp^0$ be the base reverse chain of
Theorem~\ref{thm:path-doob-transform} and $\mathbb Q^\star$ the tilted path law
with $\dd\mathbb Q^\star/\dd\Pp^0=e^{\beta r(X_0)}/Z_\beta$.  Then on
trajectories,
\[
    \frac{\dd\mathbb Q^\star}{\dd\Pp^0}(x_K,\ldots,x_0)
    =
    \frac{e^{\beta r(x_0)}}{Z_\beta},
    \qquad
    Z_\beta=\E_{\Pp^0}e^{\beta r(X_0)}.
\]
\end{prop}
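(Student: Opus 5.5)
The plan is to prove the identity by telescoping the Doob-transform kernels of Theorem~\ref{thm:path-doob-transform}. Both path laws are Markov chains on $(X_K,\ldots,X_0)$. By Theorem~\ref{thm:path-doob-transform}, $\mathbb Q^\star$ has kernels \eqref{eq:guided-reverse-kernel}. It remains to identify its initial law. Marginalizing $\dd\mathbb Q^\star/\dd\Pp^0=e^{\beta r(X_0)}/Z_\beta$ onto $X_K$ and conditioning on $X_K$ gives
\[
    q_K^\star(\dd x_K)=p_K^0(\dd x_K)\,\frac{h_K(x_K)}{Z_\beta},
\]
since $\E_{\Pp^0}[e^{\beta r(X_0)}\mid X_K=x_K]=h_K(x_K)$.

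Next I would write the path density of $\mathbb Q^\star$ relative to $\Pp^0$ as the initial ratio times the product of kernel ratios:
\[
    \frac{\dd\mathbb Q^\star}{\dd\Pp^0}(x_K,\ldots,x_0)
    =
    \frac{h_K(x_K)}{Z_\beta}\prod_{k=0}^{K-1}\frac{h_k(x_k)}{h_{k+1}(x_{k+1})}.
\]
The product telescopes to $h_0(x_0)/h_K(x_K)$. The factor $h_K$ cancels, leaving $h_0(x_0)/Z_\beta$. Finally, $h_0(x_0)=\E_{\Pp^0}[e^{\beta r(X_0)}\mid X_0=x_0]=e^{\beta r(x_0)}$, which gives the claim.

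Strictly, the statement is already the defining relation of $\mathbb Q^\star$ from Theorem~\ref{thm:path-doob-transform}. So the content worth recording is this consistency check: the stepwise Doob weights $h_k/h_{k+1}$ multiply back to the single terminal weight. I would state it that way, since it is what justifies sequential Monte Carlo with untwisted incremental weights.

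The only delicate point is measure-theoretic: $h_{k+1}(x_{k+1})$ may vanish or be infinite. I would assume $0<Z_\beta<\infty$, which forces $0<h_k<\infty$ for $\Pp^0$-almost every $x_k$ by the tower property. The ratios are then well defined almost surely, and the chain rule for Radon--Nikodym derivatives of Markov path laws (as in Lemma~\ref{lem:kl-chain}) justifies the product formula. Null sets where $h_{k+1}=0$ carry no $\mathbb Q^\star$ mass and can be ignored.
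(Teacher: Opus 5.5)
Your proposal is correct and follows the paper's proof essentially verbatim. Like the paper, you identify the initial law of $\mathbb Q^\star$ as $p_K^0 h_K/Z_\beta$, multiply by the Doob kernel ratios $h_k(x_k)/h_{k+1}(x_{k+1})$, telescope to $h_0(x_0)/Z_\beta$, and use $h_0=e^{\beta r}$. Two points are small additions that the paper leaves implicit: deriving the initial law by conditioning on $X_K$, and your almost-sure remarks (in fact $h_k>0$ holds automatically because $e^{\beta r}>0$, so only finiteness a.e.\ from $Z_\beta<\infty$ is needed).
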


\begin{proof}
Write the tilted path law from its initial law and Doob kernels.  Its initial
law is $p_K^\star(\dd x_K)=p_K^0(\dd x_K)\,h_K(x_K)/Z_\beta$, and its kernels
are
$P_k^{\star,\leftarrow}
=P_k^{0,\leftarrow}\,h_k(x_k)/h_{k+1}(x_{k+1})$.  Multiplying,
\[
    \frac{\dd\mathbb Q^\star}{\dd\Pp^0}
    =
    \frac{h_K(x_K)}{Z_\beta}
    \prod_{k=0}^{K-1}\frac{h_k(x_k)}{h_{k+1}(x_{k+1})}
    =
    \frac{h_K(x_K)}{Z_\beta}\cdot\frac{h_0(x_0)}{h_K(x_K)}
    =
    \frac{h_0(x_0)}{Z_\beta},
\]
and $h_0(x_0)=e^{\beta r(x_0)}$.
\end{proof}

This suggests the simplest exact algorithm: draw $N$ trajectories from the base
model, weight each by $e^{\beta r(x_0)}$, and resample.  As $N\to\infty$ the
weighted empirical law converges to $\mathbb Q^\star$, so the clean marginal
converges to the tilted target $p_0^\beta$.  The catch is variance: if the base
model rarely produces high-reward outputs, almost all weight lands on a few
trajectories and the effective sample size collapses.

Sequential Monte Carlo fixes this by \emph{twisting} the proposal with the
approximate value and resampling along the way.  Replace the base kernel by a
proposal that follows an approximate Doob transform,
\begin{equation}\label{eq:twisted-proposal}
    \wh P_k^{\leftarrow}(x_{k+1}\to \dd x_k)
    =
    \frac{P_k^{0,\leftarrow}(x_{k+1}\to \dd x_k)\,\wh h_k(x_k)}
    {\wh g_{k+1}(x_{k+1})},
    \qquad
    \wh g_{k+1}(x_{k+1})=\int \wh h_k(y)\,
    P_k^{0,\leftarrow}(x_{k+1}\to \dd y),
\end{equation}
where $\wh h_k\approx h_k$ is any tractable value estimate (a classifier, a
reward model, a learned value, or the linearized small-tilt value
of~\eqref{eq:value-guidance}) with the exact terminal value
$\wh h_0=e^{\beta r}$.  If the particles are initialized from the base noisy
law $p_K^0$, they also carry the initial weight $\wh h_K(X_K)$; if one can
instead sample from the twisted noisy law proportional to $p_K^0\wh h_K$, this
initial weight is constant.  Running $N$ particles through $\wh P_k^{\leftarrow}$ and
carrying an incremental importance weight at each reverse step, then resampling
when the effective sample size drops, gives a consistent estimator of
$\mathbb Q^\star$.

At each reverse step the particle picks up an incremental importance weight: the
ratio of the base transition weighted by the current value $\wh h_k(x_k)$ to the
twisted proposal weighted by the next value $\wh h_{k+1}(x_{k+1})$.  Substituting
the proposal~\eqref{eq:twisted-proposal}, this collapses to
\[
    \frac{
    P_k^{0,\leftarrow}(x_{k+1}\to \dd x_k)\,\wh h_k(x_k)}
    {
    \wh P_k^{\leftarrow}(x_{k+1}\to \dd x_k)\,\wh h_{k+1}(x_{k+1})}
    =
    \frac{\wh g_{k+1}(x_{k+1})}{\wh h_{k+1}(x_{k+1})}.
\]
Thus the incremental weight is
\begin{equation}\label{eq:smc-incremental-weight}
    w_k(x_{k+1})
    =
    \frac{\wh g_{k+1}(x_{k+1})}{\wh h_{k+1}(x_{k+1})}.
\end{equation}
It measures the one-step mismatch between the value $\wh h_{k+1}$ assumed at the
current state and the value $\wh g_{k+1}$ obtained by propagating $\wh h_k$ one
base step.  If the twist is exact, $\wh h=h$, then $\wh g_{k+1}=h_{k+1}$ by the
martingale identity and all incremental weights are one; the only nonconstant
weight left is the initial twist weight if the sampler starts from $p_K^0$.
All of the approximation error is thus pushed into nonuniform weights, which
resampling removes in the large-$N$ limit; a good $\wh h$ keeps the weights near
uniform and the variance low.

For conditional generation, take the terminal reward to be
$\log p(y\mid x_0)$.  Then
$h_k(x_k)=\Pp^0(y\mid X_k=x_k)$, and a noisy classifier or conditional
likelihood model gives a tractable $\wh h_k$.  When this twist is realized as a
guidance gradient $\nabla\log\wh h_k$ that only moves the particles, the proposal
is exactly a classifier-guided sampler, and the weights above are the SMC
correction that restores the conditional target in the large-particle limit.

\begin{remark}
This is the ``Feynman--Kac steering'' or ``twisted diffusion sampler'' family of
methods~\cite{wu2023twisted}.  It separates two roles cleanly: guidance
supplies an approximate value $\wh h$ that biases the proposal toward
high-reward regions, while the weights~\eqref{eq:smc-incremental-weight} and
resampling guarantee that the limiting law is the \emph{exact} tilt rather than
a biased approximation.  Spending more inference-time compute---more particles,
more frequent resampling, or a more accurate twist---reduces the variance of the
correction rather than changing its target.
\end{remark}

\subsection{Inference-time reinforcement learning}
\label{subsec:inference-rl}

The previous subsections describe the ideal sampler for a reward tilt: if the
posterior value functions $h_k$ were known exactly, the reverse kernels would be
the Doob transforms in Theorem~\ref{thm:path-doob-transform}.  The setting here
is genuinely inference-time: the pretrained reverse sampler stays fixed as the
reference law, and a policy changes only how that sampler is run at test time.
The RL problem is therefore over sampler choices during generation, not over
retraining the base score model.

In a finite-horizon formulation, the reverse sampler takes actions during
sampling.  Let $\pi=(\pi_k)$ be a possibly randomized policy: after observing
$X_{k+1}$ at step $k$, it chooses an action $a_k\sim\pi_k(\cdot\mid X_{k+1})$,
and the reverse step uses the corresponding kernel,
\[
    a_k\sim\pi_k(\cdot\mid X_{k+1}),
    \qquad
    X_k\sim P_k^{a_k,\leftarrow}(X_{k+1}\to\cdot),
    \qquad
    a_k\in\cA.
\]
The action may be a guidance scale, a timestep choice, an injected-noise level,
a rejection or resampling decision, or an additive drift correction.  In a
masked language diffusion sampler, it may choose how many tokens to unmask,
which positions to update, or how sharply to sample from a token posterior.
The terminal reward is measured only after the final object $X_0$ is produced.

The objective is the same KL-regularized change of measure, restricted to the
family of sampler modifications available at inference time:
\begin{equation}\label{eq:rl-kl-objective}
    J(\pi)
    =
    \E_{\pi}
    \left[
    r(X_0)
    -
    \frac1\beta\sum_k
    \KL\!\left(
    P_k^{a_k,\leftarrow}(X_{k+1}\to\cdot)
    \middle\|
    P_k^{0,\leftarrow}(X_{k+1}\to\cdot)
    \right)
    \right].
\end{equation}
Here $\beta>0$ is the same inverse temperature as in the variational
principle~\eqref{eq:gibbs-variational}; larger $\beta$ means weaker KL
regularization and stronger reward seeking.  The KL term keeps the
policy-induced sampler close to the pretrained sampler at each reverse step.
This mirrors the variational identity: reward improvement is meaningful only
relative to a reference distribution.  Without this reference cost, policy
optimization can exploit flaws in the reward model or collapse to a narrow set
of high-scoring samples.

If the action class is rich enough to choose the whole next-step kernel, this
optimization does not produce a new sampler: it recovers the Doob-transformed
kernel of Theorem~\ref{thm:path-doob-transform}, now reached by dynamic
programming rather than by tilting the path law.  To see the same object appear,
define the regularized value-to-go from $x_k$ by
\[
    v_k(x_k)
    =
    \sup_{\pi_0,\ldots,\pi_{k-1}}
    \E_{\pi}
    \left[
    r(X_0)
    -
    \frac1\beta\sum_{j=0}^{k-1}
    \KL\!\left(
    P_j^{a_j,\leftarrow}(X_{j+1}\to\cdot)
    \middle\|
    P_j^{0,\leftarrow}(X_{j+1}\to\cdot)
    \right)
    \,\middle|\, X_k=x_k
    \right],
\]
with the empty-sum convention $v_0(x_0)=r(x_0)$.  The local Gibbs variational
principle gives
\[
    P_k^{\star,\leftarrow}(x_{k+1}\to \dd x_k)
    \propto
    P_k^{0,\leftarrow}(x_{k+1}\to \dd x_k)
    \exp\left(\beta v_k(x_k)\right).
\]
Writing $h_k(x_k)=\exp(\beta v_k(x_k))$ makes this exactly the normalization
in \eqref{eq:guided-reverse-kernel}: the same reweighting we already have, only
now the value is presented as a Bellman value-to-go rather than a
posterior tilt factor.  The two letters are the two coordinates of one object:
the multiplicative tilt factor $h_k$ that enters the Doob kernel and the SMC
weights, and its log-domain \emph{soft value} $v_k=\beta^{-1}\log h_k$, so that
$\beta$ times its gradient (in the continuous picture) is added to the score,
matching $\nabla\log h_t^\beta=\beta\nabla v_t$.  The
posterior expected reward $V_t$ of \eqref{eq:value-guidance} is the small-$\beta$
linearization of this soft value.  Soft dynamic programming is therefore the finite-state
computational form of the path-space tilt; what is genuinely new here is not the
optimal kernel but how its value is obtained, since guidance methods approximate
this value information while RL methods estimate it from sampled rewards.

For a parameterized policy $\pi_\theta$ with reverse-path density
$p_\theta(x_K,\ldots,x_0)$, sampled rewards can be turned into updates by the
likelihood-ratio identity
\[
    \nabla_\theta\E_{(X_K,\ldots,X_0)\sim p_\theta}[r(X_0)]
    =
    \E_{(X_K,\ldots,X_0)\sim p_\theta}
    \left[
    r(X_0)\nabla_\theta\log p_\theta(X_K,\ldots,X_0)
    \right],
\]
In implementations one often subtracts a baseline from the reward to reduce
variance, but that is not part of the change-of-measure identity.  Along a
reverse diffusion chain,
\[
    \log p_\theta(x_K,\ldots,x_0)
    =
    \log p_{\theta,K}(x_K)
    +
    \sum_k \log p_\theta(x_k\mid x_{k+1}),
\]
with the initial-law term disappearing from the gradient when the noisy
initialization is fixed.  Thus the update decomposes across the guidance
decisions made along the trajectory.
For the KL-regularized objective \eqref{eq:rl-kl-objective}, the
return is replaced by the regularized return inside the brackets, with any
explicit derivative of the KL penalty added when the transition family is
differentiable.

This perspective is also how to read current inference-time RL methods.
One can combine the base sampler with a reward-aligned proposal, spend particles
on resampling and trajectory correction, or estimate a drift/Doob correction
more directly.  These are different approximations to the same ideal controlled
path law.  The useful message for these notes is not that one particular recent
algorithm has replaced the others; it is that reward tilting, KL-regularized
control, and Doob transforms provide the common language for comparing ways of
changing the sampler at inference time.

\subsection{Guidance and reward tilting in the discrete case}

Everything in this section carries over almost unchanged to the discrete
diffusion models of Section~\ref{sec:discrete}, because the target and objective
never used continuous structure.  The
reward-tilted target \eqref{eq:reward-tilt} and the KL-regularized variational
principle \eqref{eq:gibbs-variational} are defined on any space; the
Feynman--Kac/twisted-SMC correction of Subsection~\ref{subsec:feynman-kac}
reweights particles by the same potentials; and inference-time RL
\eqref{eq:rl-kl-objective} differentiates the trajectory
likelihood---for a discrete chain a sum
$\sum_k\log P_{k,\theta}^{\leftarrow}(x_{k+1}\to x_k)$ of log transition
probabilities---with respect to $\theta$ rather than the state, so it transfers
verbatim.

The one step that changes form is guidance.  The optimal tilt is again the Doob
$h$-transform of Theorem~\ref{thm:path-doob-transform}, now of a jump process:
with the soft value $v_t(x)=\beta^{-1}\log\E[\exp(\beta r(X_0))\mid X_t=x]$, the
guided reverse rates are reweighted by value differences,
\[
    R_t^{\leftarrow,\mathrm{guided}}(y\to x)
    =R_t^{\leftarrow}(y\to x)\,\exp\!\big(\beta(v_t(x)-v_t(y))\big),
\]
the exact analogue of adding $\beta\nabla v_t$ to the reverse drift, with the gradient
replaced by a finite difference of $v_t$ across admissible moves---the same
score-to-ratio substitution as in \eqref{eq:discrete-ratio-score}.  As before
$v_t$ is intractable and is approximated, by a learned predictor of the reward,
a predictor-free interpolation, or a Taylor expansion around the predicted clean
token~\cite{nisonoff2024discreteguidance}.

\appendix

\section{It\^o Calculus and Girsanov Theorem}\label{app:ito-girsanov}

This appendix collects the stochastic-calculus facts used in the notes.  The
statements below are written in the smooth, non-explosive setting in which the
formal calculations in the text are valid.  For rigorous hypotheses and
proofs, see the texts of E, Li and Vanden-Eijnden~\cite{e2019appliedstochastic}, Karatzas and Shreve~\cite{karatzas1991brownian}
or {\O}ksendal~\cite{oksendal2003stochastic}.

\subsection*{It\^o formula, generator, and adjoint}

Let $X_t$ solve
\[
    \dd X_t=b_t(X_t)\dd t+\sigma_t(X_t)\dd B_t,
\]
where $B_t$ is an $m$-dimensional Brownian motion and
$\sigma_t(x)\in\R^{d\times m}$.  Write
$a_t(x)=\sigma_t(x)\sigma_t(x)^\top$.  For a smooth test function $\varphi$,
It\^o's formula gives
\[
    \dd\varphi(X_t)
    =
    \ip{\nabla\varphi(X_t)}{b_t(X_t)}\dd t
    +\frac12\Tr\!\left(a_t(X_t)\nabla^2\varphi(X_t)\right)\dd t
    +\ip{\nabla\varphi(X_t)}{\sigma_t(X_t)\dd B_t}.
\]
The last term is a martingale increment and has mean zero under the usual
integrability assumptions.  Thus
\[
    \frac{\dd}{\dd t}\E[\varphi(X_t)]
    =
    \E[(\mathcal L_t\varphi)(X_t)],
\]
where the infinitesimal generator is
\[
    \mathcal L_t\varphi
    =
    \ip{b_t}{\nabla\varphi}
    +\frac12\Tr(a_t\nabla^2\varphi).
\]
If $X_t$ has density $q_t$, then the Fokker--Planck equation is the adjoint
equation
\[
    \partial_tq_t=\mathcal L_t^\ast q_t,
\]
where, in coordinates,
\[
    \mathcal L_t^\ast q
    =
    -\nabla\cdot(b_tq)
    +\frac12\sum_{i,j=1}^d
    \partial_i\partial_j\!\left((a_t)_{ij}q\right).
\]
For overdamped Langevin, $b=-\nabla U$ and $a=2\Id$, so this reduces to
\[
    \partial_tq_t=\nabla\cdot(q_t\nabla U)+\Delta q_t.
\]

\subsection*{Girsanov's theorem}

Girsanov's theorem compares two diffusions that share the same initial law and
the same diffusion coefficient but may carry different drifts.  We fix the
coefficient $\sqrt2\,\Id$, the Langevin convention used in the notes.  Let $\Pp$
and $\mathbb{Q}$ be the laws on path space $C([0,T];\R^d)$ of
\[
    \dd X_t=b^\Pp_t(X_t)\dd t+\sqrt2\,\dd B_t
    \qquad\text{and}\qquad
    \dd X_t=b^{\mathbb{Q}}_t(X_t)\dd t+\sqrt2\,\dd B_t.
\]
Under the usual absolute-continuity and integrability assumptions, the two laws
are equivalent on $\cF_T$, the terminal $\sigma$-algebra of the Brownian filtration, with Radon--Nikodym density
\begin{equation}\label{eq:girsanov-density}
    \frac{\dd \Pp}{\dd \mathbb{Q}}\bigg|_{\cF_T}
    =
    \exp\!\left(
    \frac12\int_0^T \ip{b^\Pp_t-b^{\mathbb{Q}}_t}{\dd X_t-b^{\mathbb{Q}}_t\,\dd t}
    -
    \frac14\int_0^T \norm{b^\Pp_t-b^{\mathbb{Q}}_t}^2\,\dd t
    \right),
\end{equation}
where all integrands are evaluated at $X_t$.

To obtain the KL divergence, take logarithms in \eqref{eq:girsanov-density} and
average under $\Pp$:
\[
    \KL(\Pp\|\mathbb{Q})
    =
    \E_\Pp\!\left[\log\frac{\dd \Pp}{\dd \mathbb{Q}}\right]
    =
    \frac12\E_\Pp\int_0^T \ip{b^\Pp_t-b^{\mathbb{Q}}_t}{\dd X_t-b^{\mathbb{Q}}_t\,\dd t}
    -
    \frac14\E_\Pp\int_0^T\norm{b^\Pp_t-b^{\mathbb{Q}}_t}^2\dd t.
\]
The second term is already a plain expectation of a path integral, so it is left
as is.  For the first term, recall that under $\Pp$, we have $\dd X_t-b^{\mathbb{Q}}_t\,\dd t=(b^\Pp_t-b^{\mathbb{Q}}_t)\dd t+\sqrt2\,\dd B_t$, and
\[
    \frac12\E_\Pp\int_0^T \ip{b^\Pp_t-b^{\mathbb{Q}}_t}{\dd X_t-b^{\mathbb{Q}}_t\,\dd t}
    =
    \frac12\E_\Pp\int_0^T\norm{b^\Pp_t-b^{\mathbb{Q}}_t}^2\dd t
    +
    \frac{1}{\sqrt2}\,\E_\Pp\int_0^T\ip{b^\Pp_t-b^{\mathbb{Q}}_t}{\dd B_t}.
\]
The expectation of the final integral vanishes, and we arrive at
\begin{equation}\label{eq:girsanov-kl}
    \KL(\Pp\|\mathbb{Q}) = \frac14 \E_{\Pp}\int_0^T \norm{b^\Pp_t(X_t)-b^{\mathbb{Q}}_t(X_t)}^2\dd t.
\end{equation}
So far the two processes share the same initial law.  If instead they start from
different initial laws, the density \eqref{eq:girsanov-density} acquires the
extra factor $\dd\Law_\Pp(X_0)/\dd\Law_{\mathbb{Q}}(X_0)$ at $t=0$, and the KL gains the
corresponding initial term:
\[
    \KL(\Pp\|\mathbb{Q})
    =
    \KL\bigl(\Law_\Pp(X_0)\,\big\|\,\Law_{\mathbb{Q}}(X_0)\bigr)
    +
    \frac14\E_\Pp\int_0^T\norm{b^\Pp_t(X_t)-b^{\mathbb{Q}}_t(X_t)}^2\dd t.
\]

\begin{lemma}[Data processing]\label{lem:data-processing}
Let $\mu$ and $\nu$ be probability laws on a measurable space, and let
$T$ be a measurable map into another measurable space.  Then
\[
    \KL(T_\#\mu\|T_\#\nu)\leq \KL(\mu\|\nu)
    \qquad\text{and}\qquad
    \TV(T_\#\mu,T_\#\nu)\leq \TV(\mu,\nu).
\]
\end{lemma}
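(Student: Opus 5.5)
The plan is to prove the KL statement first, through the Gibbs variational (Donsker--Varadhan) representation of KL, and then obtain the TV statement from the supremum-over-events formula of total variation. Throughout we may assume $\KL(\mu\|\nu)<\infty$, since otherwise there is nothing to prove. In particular $\mu\ll\nu$, and therefore $T_\#\mu\ll T_\#\nu$: if $\nu(T^{-1}B)=0$ then $\mu(T^{-1}B)=0$.

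For KL, I will use the representation
\[
    \KL(\mu\|\nu)=\sup_{g}\Bigl\{\E_\mu g-\log\E_\nu e^{g}\Bigr\},
\]
where the supremum runs over bounded measurable $g$. This representation comes from the same computation as the proof of \eqref{eq:gibbs-variational}: tilt $\nu$ by $e^{g}$, write the tilted law as $\nu^g$, and use
\[
    \KL(\mu\|\nu)-\E_\mu g+\log\E_\nu e^g=\KL(\mu\|\nu^g)\geq0.
\]
This identity gives the inequality ``$\geq$'' directly, and that inequality is the only direction the argument needs on the source side.

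Now apply the representation on the target space. Take any bounded measurable $g$ on the target space. Its pullback $g\circ T$ is bounded and measurable, and $\E_{T_\#\mu}g=\E_\mu(g\circ T)$, and likewise under $\nu$. Hence
\[
    \E_{T_\#\mu}g-\log\E_{T_\#\nu}e^g=\E_\mu(g\circ T)-\log\E_\nu e^{g\circ T}\leq\KL(\mu\|\nu).
\]
Taking the supremum over $g$ gives $\KL(T_\#\mu\|T_\#\nu)\leq\KL(\mu\|\nu)$, provided the representation holds with equality on the target side.

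Proving that equality is the main obstacle. Equality would be attained at $g=\log\frac{\dd T_\#\mu}{\dd T_\#\nu}$, but this function can be unbounded or equal to $-\infty$. I would truncate it: set $g_n=\max(\min(g,n),-n)$. On the set where $g\geq0$ I would use monotone convergence for $\E_{T_\#\mu}g_n$. On the set where $g<0$ the contribution is bounded below, because $\int\rho\log\rho\,\dd\nu\geq-1/e$ controls the negative part of the integrand. Finally, $\E_{T_\#\nu}e^{g_n}\to1$ by dominated convergence. Together these give equality in the limit.

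As an alternative that avoids truncation, I would use the conditional-expectation identity
\[
    \frac{\dd T_\#\mu}{\dd T_\#\nu}\circ T=\E_\nu\!\left[\frac{\dd\mu}{\dd\nu}\,\middle|\,\sigma(T)\right].
\]
Jensen's inequality for the convex function $\phi(u)=u\log u$ then gives
\[
    \KL(T_\#\mu\|T_\#\nu)=\E_\nu\phi\bigl(\E_\nu[\rho\mid\sigma(T)]\bigr)\leq\E_\nu\phi(\rho)=\KL(\mu\|\nu),
\]
where $\rho=\frac{\dd\mu}{\dd\nu}$. This route needs only the conditional Jensen inequality, and it is the one I would write out. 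The identity itself I would check by testing both sides against indicators $1_{T^{-1}B}$.

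TV is immediate from the definition:
\[
    \TV(T_\#\mu,T_\#\nu)=\sup_B\abs{\mu(T^{-1}B)-\nu(T^{-1}B)}\leq\sup_A\abs{\mu(A)-\nu(A)},
\]
since each preimage $T^{-1}B$ is one of the measurable sets $A$.
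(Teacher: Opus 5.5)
Your proof is correct, and the route you say you would actually write out is exactly the paper's argument: identify $\frac{\dd T_\#\mu}{\dd T_\#\nu}\circ T$ with $\E_\nu[\rho\mid\sigma(T)]$, apply conditional Jensen to $u\log u$, and for TV note that each $T^{-1}B$ is an admissible event. The Donsker--Varadhan route with truncation that you sketch first is also sound, but it is unnecessary here.
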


\begin{proof}
If $\mu$ is not absolutely continuous with respect to $\nu$, the KL bound is
trivial.  Otherwise let $Z=\dd\mu/\dd\nu$.  Under $T_\#\nu$, the
Radon--Nikodym derivative of $T_\#\mu$ with respect to $T_\#\nu$ is
\[
    \E_\nu[Z\mid T].
\]
Therefore Jensen's inequality gives
\[
    \KL(T_\#\mu\|T_\#\nu)
    =
    \E_\nu\!\left[\E_\nu[Z\mid T]\log \E_\nu[Z\mid T]\right]
    \leq
    \E_\nu[Z\log Z]
    =
    \KL(\mu\|\nu).
\]
For total variation, take the supremum over events $B$ in the target space:
\[
    \abs{T_\#\mu(B)-T_\#\nu(B)}
    =
    \abs{\mu(T^{-1}B)-\nu(T^{-1}B)}
    \leq
    \TV(\mu,\nu). \qedhere
\]
\end{proof}

Thus, for any measurable map $F$ on path space,
\[
    \KL(F_\#\Pp^u\|F_\#\Pp^0)\leq \KL(\Pp^u\|\Pp^0).
\]
Taking $F(\omega)=\omega_T$ gives the endpoint KL bound used in the
one-step ULA calculation.

\section{Gaussian Toolbox}\label{app:gaussian-toolbox}

This short toolbox collects facts used throughout the notes.  These are good
warm-up exercises for students who have not recently worked with Gaussian
densities.

\begin{lemma}[Affine Gaussian maps and sums]\label{lem:gaussian-affine-sum}
Let $X\sim\normal{m}{\Sigma}$ in $\R^d$.  For a matrix $A$ and vector $b$,
\[
    AX+b\sim\normal{Am+b}{A\Sigma A^\top}.
\]
If $X$ and $Y$ are independent Gaussians, then $X+Y$ is Gaussian and
\[
    \Cov(X+Y)=\Cov(X)+\Cov(Y).
\]
In particular, in one dimension, if $X$ and $Y$ are independent and centered,
then
\[
    \operatorname{Var}(aX+bY)
    =
    a^2\operatorname{Var}(X)+b^2\operatorname{Var}(Y).
\]
\end{lemma}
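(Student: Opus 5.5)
The natural route is through characteristic functions, since they turn both affine maps and independent sums into algebraic manipulations of a single explicit formula. First we use the fact that $X\sim\normal{m}{\Sigma}$ is characterized by
\[
    \E e^{i\ip{\theta}{X}}=\exp\!\left(i\ip{\theta}{m}-\tfrac12\theta^\top\Sigma\theta\right),
    \qquad \theta\in\R^d.
\]
This holds even when $\Sigma$ is singular, which avoids any density bookkeeping. For the affine statement, take $\theta$ in the target space and note $\ip{\theta}{AX+b}=\ip{A^\top\theta}{X}+\ip{\theta}{b}$. Evaluating the Gaussian characteristic function at $A^\top\theta$ gives
\[
    \exp\!\left(i\ip{\theta}{Am+b}-\tfrac12\theta^\top A\Sigma A^\top\theta\right),
\]
which is the characteristic function of $\normal{Am+b}{A\Sigma A^\top}$. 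Uniqueness of characteristic functions then concludes.

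For the sum, independence factorizes $\E e^{i\ip{\theta}{X+Y}}$ into the product of the two characteristic functions. Exponents add, so the means add and the quadratic forms add. This shows that $X+Y$ is Gaussian with covariance $\Cov(X)+\Cov(Y)$.

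The covariance identity can also be checked directly without Gaussianity:
\[
    \Cov(X+Y)=\Cov(X)+\Cov(Y)+\Cov(X,Y)+\Cov(Y,X),
\]
and the cross terms vanish by independence. The one-dimensional variance formula follows by applying the affine statement, namely $\operatorname{Var}(aX)=a^2\operatorname{Var}(X)$, to $aX$ and $bY$, and then the sum rule, since $aX$ and $bY$ remain independent.

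There is no real obstacle here. The only point needing care is interpretive: one should adopt the characteristic-function definition of the Gaussian so that degenerate $A\Sigma A^\top$, for example from a non-injective $A$, is covered. A density-based proof would fail in exactly that case.
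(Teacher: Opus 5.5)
Your proof is correct and follows the same route as the paper. For the affine map you evaluate the Gaussian characteristic function at $A^\top\theta$, and for independent sums you multiply characteristic functions, with the one-dimensional variance formula as a special case; your remarks on degenerate covariances and the direct moment check of $\Cov(X+Y)$ are valid additions.
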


\begin{proof}
The affine statement follows from the Gaussian characteristic function:
\[
    \E e^{i\ip{t}{AX+b}}
    =
    \exp\!\left(i\ip{t}{Am+b}-\frac12 t^\top A\Sigma A^\top t\right).
\]
For independent Gaussians, characteristic functions multiply, so the means and
covariances add.  The one-dimensional variance formula is the corresponding
special case.
\end{proof}

\begin{lemma}[KL between Gaussians with equal covariance]\label{lem:gaussian-kl}
For $m,\wh m\in\R^d$ and positive definite $\Sigma$,
\[
    \KL\!\left(\normal{m}{\Sigma}\middle\|\normal{\wh m}{\Sigma}\right)
    =
    \frac12\norm{m-\wh m}_{\Sigma^{-1}}^2,
    \qquad
    \norm v_{\Sigma^{-1}}^2=v^\top\Sigma^{-1}v.
\]
In particular, if $\Sigma=\eta\Id$, the KL is
$\norm{m-\wh m}^2/(2\eta)$.
\end{lemma}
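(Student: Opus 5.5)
The plan is a direct computation from the two explicit Gaussian densities. Write the densities of $\normal{m}{\Sigma}$ and $\normal{\wh m}{\Sigma}$. Because the covariances coincide, the normalizing constants $(2\pi)^{-d/2}\det(\Sigma)^{-1/2}$ cancel in the ratio. The log-likelihood ratio is then
\[
    \log\frac{\dd\normal{m}{\Sigma}}{\dd\normal{\wh m}{\Sigma}}(x)
    =
    -\frac12(x-m)^\top\Sigma^{-1}(x-m)
    +\frac12(x-\wh m)^\top\Sigma^{-1}(x-\wh m).
\]
I will expand the second quadratic form by writing $x-\wh m=(x-m)+(m-\wh m)$. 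The two quadratic terms in $x-m$ cancel, which leaves
\[
    (x-m)^\top\Sigma^{-1}(m-\wh m)+\frac12\norm{m-\wh m}_{\Sigma^{-1}}^2,
\]
using the symmetry of $\Sigma^{-1}$ to merge the two cross terms.

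Next I take the expectation under $X\sim\normal{m}{\Sigma}$, as the definition of $\KL$ requires. Since $\E[X-m]=0$, the linear term vanishes. The remaining constant is $\frac12\norm{m-\wh m}_{\Sigma^{-1}}^2$, which is the claim.

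The special case $\Sigma=\eta\Id$ follows by substituting $\Sigma^{-1}=\eta^{-1}\Id$. Absolute continuity is automatic, because both laws have strictly positive densities, so $\KL$ is given by the integral formula.

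I do not expect a real obstacle here. The only point needing care is the bookkeeping of the cross term when expanding the quadratic form, and in particular the factor $2$ from symmetry of $\Sigma^{-1}$. That factor is what makes the final coefficient $\frac12$ rather than $1$.
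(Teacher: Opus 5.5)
Your proposal is correct and follows the paper's proof: write the log density ratio (the normalizing constants cancel), take the expectation under $\normal{m}{\Sigma}$, and the only surviving term is $\frac12\norm{m-\wh m}_{\Sigma^{-1}}^2$; your pointwise expansion just spells out the paper's one-line ``cancels the covariance terms.'' One small correction to your closing remark: the symmetry factor $2$ only affects the cross term, which vanishes in expectation, so the final coefficient $\frac12$ comes directly from the $\frac12$ prefactor of the quadratic forms rather than from that factor.
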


\begin{proof}
The log density ratio is
\[
    -\frac12\norm{x-m}_{\Sigma^{-1}}^2
    +\frac12\norm{x-\wh m}_{\Sigma^{-1}}^2.
\]
Taking expectation under $x\sim\normal{m}{\Sigma}$ cancels the covariance
terms and leaves $\frac12\norm{m-\wh m}_{\Sigma^{-1}}^2$.
\end{proof}

\begin{lemma}[Gaussian convolution score]
Let $X_t=X_0+\sqrt t Z$, where $Z\sim\normal{0}{\Id}$ is independent of $X_0$.
If $p_t$ is the density of $X_t$, then
\[
    \nabla\log p_t(x)
    =
    \frac{1}{t}\left(\E[X_0\mid X_t=x]-x\right).
\]
\end{lemma}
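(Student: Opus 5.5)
The plan is to prove this Gaussian convolution score formula by differentiating the mixture representation of $p_t$ under the integral sign and then reading off the result through Bayes' rule. It is the variance-exploding special case $a_t=1$, $\sigma_t^2=t$ of the continuous-time Tweedie identity \eqref{eq:continuous-tweedie}, so one option is simply to quote that identity. Since the appendix is meant to be self-contained, I would instead give the short direct argument.

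First I write the density of $X_t$ as a Gaussian mixture over the clean law:
\[
    p_t(x)=\int \varphi_t(x-x_0)\,\pdata(\dd x_0),
    \qquad
    \varphi_t(z)=(2\pi t)^{-d/2}\exp\!\left(-\frac{\norm{z}^2}{2t}\right).
\]
For fixed $t>0$ this is strictly positive and smooth in $x$. The kernel satisfies $\nabla_x\varphi_t(x-x_0)=t^{-1}(x_0-x)\varphi_t(x-x_0)$.

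Next I differentiate under the integral to get
\[
    \nabla p_t(x)=t^{-1}\int (x_0-x)\varphi_t(x-x_0)\,\pdata(\dd x_0).
\]
This step is the only real technical point. I would justify it by dominated convergence: on a neighborhood of $x$, the quantity $\norm{x_0-x}\varphi_t(x-x_0)$ is uniformly bounded in $x_0$, because $\norm{z}e^{-\norm z^2/2t}$ is bounded. No moment assumption on $\pdata$ is therefore needed.

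Finally, Bayes' rule identifies $\varphi_t(x-x_0)\pdata(\dd x_0)/p_t(x)$ as the conditional law of $X_0$ given $X_t=x$. Dividing the previous display by $p_t(x)>0$ gives
\[
    \nabla\log p_t(x)=t^{-1}\bigl(\E[X_0\mid X_t=x]-x\bigr).
\]
The conditional expectation is finite for the same boundedness reason. I expect no genuine obstacle beyond this interchange of differentiation and integration.
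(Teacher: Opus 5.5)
Your proposal is correct and follows the paper's route: the paper proves this lemma by invoking the continuous-time Tweedie identity with $a_t=1$, $\sigma_t^2=t$, and that identity is itself proved by differentiating the Gaussian mixture for $p_t$ under the integral and dividing by $p_t(x)$ via Bayes' rule, exactly as you do. Your dominated-convergence justification of the interchange is correct and is a detail the paper leaves implicit: $\norm{z}e^{-\norm{z}^2/(2t)}$ is bounded, so the differentiated kernel is uniformly bounded in $x_0$ near $x$ and no moment assumption on the data law is needed.
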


\begin{proof}
This is the special case $a_t=1$, $\sigma_t^2=t$ of the continuous-time Tweedie
identity \eqref{eq:continuous-tweedie}, obtained there by differentiating the
Gaussian mixture for $p_t$ and dividing by $p_t(x)$.
\end{proof}

\bibliographystyle{plain}
\bibliography{references}

\end{document}